\DeclareMathOperator{\sign}{sign}
\newtheorem{thm}{Theorem}
\newtheorem{ass}{Assumption}
\newtheorem{defi}{Definition}
\newtheorem{lem}{Lemma}
\newtheorem{rmk}{Remark}
\newtheorem{exm}{Example}
\newcommand{\norm}[1]{\left\lVert#1\right\rVert}
\theoremstyle{plain}
\theoremstyle{definition}
\theoremstyle{remark}
\icmltitlerunning{Contextual Bandits for Unbounded Context Distributions}
\begin{document}

\twocolumn[
\icmltitle{Contextual Bandits for Unbounded Context Distributions}



\icmlsetsymbol{equal}{*}

\begin{icmlauthorlist}
\icmlauthor{Puning Zhao}{sysu}
\icmlauthor{Rongfei Fan}{bit}
\icmlauthor{Shaowei Wang}{gzhu}
\icmlauthor{Li Shen}{sysu}
\icmlauthor{Qixin Zhang}{sysu}
\icmlauthor{Zong Ke}{nus}
\icmlauthor{Tianhang Zheng}{zju}
\end{icmlauthorlist}

\icmlaffiliation{sysu}{Shenzhen Campus of Sun Yat-sen University, Shenzhen, China}
\icmlaffiliation{bit}{Beijing Institute of Technology, Beijing, China}
\icmlaffiliation{gzhu}{Guangzhou University, Guangzhou, China}
\icmlaffiliation{zju}{Zhejiang University, Hangzhou, China}
\icmlaffiliation{nus}{National University of Singapore, Singapore}

\icmlcorrespondingauthor{Li Shen}{mathshenli@gmail.com}

\icmlkeywords{Machine Learning, ICML}

\vskip 0.3in
]



\printAffiliationsAndNotice{\icmlEqualContribution} 

\begin{abstract}
	Nonparametric contextual bandit is an important model of sequential decision making problems. Under $\alpha$-Tsybakov margin condition, existing research has established a regret bound of $\tilde{O}\left(T^{1-\frac{\alpha+1}{d+2}}\right)$ for bounded supports. However, the optimal regret with unbounded contexts has not been analyzed. The challenge of solving contextual bandit problems with unbounded support is to achieve both exploration-exploitation tradeoff and bias-variance tradeoff simultaneously. In this paper, we solve the nonparametric contextual bandit problem with unbounded contexts. We propose two nearest neighbor methods combined with UCB exploration. The first method uses a fixed $k$. Our analysis shows that this method achieves minimax optimal regret under a weak margin condition and relatively light-tailed context distributions. The second method uses adaptive $k$. By a proper data-driven selection of $k$, this method achieves an expected regret of $\tilde{O}\left(T^{1-\frac{(\alpha+1)\beta}{\alpha+(d+2)\beta}}+T^{1-\beta}\right)$, in which $\beta$ is a parameter describing the tail strength. This bound matches the minimax lower bound up to logarithm factors, indicating that the second method is approximately optimal.
\end{abstract}

\section{Introduction}
Multi-armed bandit \cite{robbins1952some,lai1985asymptotically} is an important sequential decision problem that has been extensively studied \cite{agrawal1995sample,auer2002finite,garivier2011kl}. In many practical applications such as recommender systems and information retrieval in healthcare
and finance \cite{bouneffouf2020survey}, decision problems are usually modeled as contextual bandits \cite{woodroofe1979one}, in which the reward depends on some side information, called contexts. At the $t$-th iteration, the decision maker observes the context $\mathbf{X}_t$, and then pulls an arm $A_t\in \mathcal{A}$ based on $\mathbf{X}_t$ and the previous trajectory $(\mathbf{X}_i, A_i), i=1, \ldots, t-1$. Many research assume linear rewards \cite{abbasi2011improved,bastani2020online,bastani2021mostly,qian2023adaptive,langford2007epoch,dudik2011efficient,chu2011contextual,li2010contextual}, which is restrictive and may not fit well into practical scenarios. Consequently, in recent years, nonparametric contextual bandits have received significant attention, which does not make any parametric assumption about the reward functions \cite{perchet2013multi,guan2018nonparametric,gur2022smoothness,blanchard2023adversarial,suk2023tracking,suk2024adaptive,cai2024transfer}.

Despite significant progress on nonparametric contextual bandits, existing studies focus only on the case with bounded contexts, and the probability density functions (pdf) of the contexts are required to be bounded away from zero. However, many practical applications often involve unbounded contexts, such as healthcare \cite{durand2018contextual}, dynamic pricing \cite{misra2019dynamic} and recommender systems \cite{zhou2017large}. In particular, the contexts may follow a heavy-tailed distribution \cite{zangerle2022evaluating}, which is significantly different from bounded contexts. Therefore, to bridge the gap between theoretical studies and practical applications of contextual bandits, an in-depth theoretical study of unbounded contexts is crucially needed. Compared with bounded contexts, heavy-tailed context distribution requires the learning method to be adaptive to the pdf of contexts, in order to balance the bias and variance of the estimation of reward functions. On the other hand, compared with existing works on nonparametric classification and regression with identically and independently distributed (i.i.d) data, bandit problems require us to achieve a good balance between exploration and exploitation, thus the learning method needs to be adaptive to the suboptimality gap of reward functions. Therefore, the main challenge of solving nonparametric contextual bandit problems with unbounded contexts is to achieve both bias-variance tradeoff and exploration-exploitation tradeoff using a single algorithm.
\begin{table*}[h!]
	\begin{center}
		\begin{tabular}{c|c|c|c}
			\hline
			& \multirow{2}{*}{Method} & \multicolumn{2}{c}{Bound of expected regret} \\
			\cline{3-4}
			&& Bounded context & Heavy-tailed context\\
			\hline
			\cite{rigollet2010nonparametric} & UCBogram & $\tilde{O}\left(T^{1-\min\left\{\frac{\alpha+1}{d+2}, \frac{2}{d+2}\right\}}\right)$ & None\\
			\cite{perchet2013multi} & ABSE & $\tilde{O}\left(T^{1-\frac{\alpha+1}{d+2}}\right)$ & None\\
			\cite{gur2022smoothness} & SACB & $\tilde{O}\left(T^{1-\frac{\alpha+1}{d+2}}\right)$ & None\\
			\cite{guan2018nonparametric} & kNN-UCB & $\tilde{O}\left(T^\frac{d+1}{d+2}\right)$ & None\\
			\cite{reeve2018k} & kNN-UCB & $\tilde{O}\left(T^{1-\frac{\alpha+1}{d+2}}\right)$ & None\\
			This work & kNN-UCB\footnotemark & $\tilde{O}\left(T^{\max\left\{1-\frac{\alpha+1}{d+2}, \frac{2}{\alpha+3} \right\}}\right)$ & $\tilde{O}\left( T^{1-\frac{\beta \min(d,\alpha+1)}{\min(d-1,\alpha)+\max(1,d\beta)+2\beta}}\right)$\\
			This work & Adaptive kNN-UCB& $\tilde{O}\left(T^{1-\frac{\alpha+1}{d+2}}\right)$ & $\tilde{O}\left(	T^{1-\min\left\{\frac{(\alpha+1)\beta}{\alpha+(d+2)\beta}, \beta\right\}}\right)$\\
			\hline
			\multicolumn{2}{c|}{Minimax lower bound} & $\Omega\left(T^{1-\frac{\alpha+1}{d+2}}\right)$ &$\Omega \left(	T^{1-\min\left\{\frac{(\alpha+1)\beta}{\alpha+(d+2)\beta}, \beta\right\}}\right)$\\
			\hline
		\end{tabular}
		\caption{Comparison of the $T$-step expected cumulative regrets of learning algorithms for nonparametric contextual bandits with Lipschitz reward function under $\alpha$-Tsybakov margin condition and tail parameter $\beta$ (Assumption \ref{ass:basic}(a) and (b)).}\label{tab}
	\end{center}
	\vspace{-5mm}
\end{table*}

In this paper, we solve the nonparametric contextual bandit problem with heavy-tailed contexts. To begin with, we derive a minimax lower bound that characterizes the theoretical limits of contextual bandit learning. We then propose a relatively simple method that uses fixed $k$ combined with upper confidence bound (UCB) exploration and derive the bound of expected regret. Even for bounded contexts, our method improves over an existing nearest neighbor method \cite{guan2018nonparametric} for large margin parameter $\alpha$, since our method uses an improved UCB calculation which is more adaptive to the suboptimality gap of reward functions. Despite such progress, there is still some gap between the regret bound and the minimax lower bound, indicating room for further improvement. To close such a gap, we further propose a new adaptive nearest neighbor approach, which selects $k$ adaptively based on the density of samples and the suboptimality gap of reward functions. Our analysis shows that the regret bound of this new method nearly matches the minimax lower bound up to logarithmic factors, indicating that this method is approximately minimax optimal.  
\footnotetext[1]{Despite that the name kNN-UCB is the same as \cite{guan2018nonparametric} and \cite{reeve2018k}, the calculations of UCB are different between these methods. See details in the "Nearest neighbor method with fixed $k$" section.}

The general guidelines of our adaptive kNN method are summarized as follows. Firstly, with higher context pdf, we use larger $k$, and vice versa. Secondly, given a specific context, if the value of an action is far away from optimal (i.e. large suboptimality gap), then we use smaller $k$, and vice versa. Such a choice of $k$ achieves a good tradeoff between estimation bias and variance. With a lower pdf or larger suboptimality gap, the samples are relatively more sparse. As a result, a large bias may happen due to large kNN distances. Therefore, we use smaller $k$ to control the bias. On the contrary, with a higher pdf or smaller suboptimality gap, the samples are dense and thus we can use larger $k$ to reduce the variance. Note that the pdf and the suboptimality gap are unknown to the learner. Therefore, we design a method, such that the value of $k$ is selected by a data-driven manner, based on the density of existing samples.

\subsection{Contribution}
The contributions of this paper are summarized as follows.
\begin{itemize}
	\item We derive the minimax lower bound of nonparametric contextual bandits with heavy-tailed context distributions. 
	
	\item We propose a simple kNN method with UCB exploration. The regret bound matches the minimax lower bound with small $\alpha$ and large $\beta$.
	
	\item We propose an adaptive kNN method, such that $k$ is selected based on previous steps. The regret bound matches the minimax lower bound under all parameter regimes.
\end{itemize}

Our results and the comparison with related works are summarized in Table \ref{tab}. In general, to the best of our knowledge, our work is the first attempt to handle heavy-tailed context distribution in contextual bandit problems. In particular, our new proposed adaptive kNN method achieves the minimax lower bound for the first time. The proofs of all theoretical results in the paper are shown in the supplementary material.

\section{Related Work}
In this section, we briefly review the related works about contextual bandits and nearest neighbor methods.


\textbf{Nonparametric contextual bandits with bounded contexts.} \cite{yang2002randomized} first introduced the nonparametric contextual bandit problem, proposed an $\epsilon$-greedy approach and proved the consistency. \cite{rigollet2010nonparametric} derived a minimax lower bound on the regret, and showed that this bound is achievable by a UCB method. \cite{perchet2013multi} proposed Adaptively Binned Successive Elimination (ABSE), which adapts to the unknown margin parameter. \cite{qian2016kernel} proposed a kernel estimation method. \cite{hu2020smooth} analyzed nonparametric bandit problem under general H{\"o}lder smoothness assumption. \cite{gur2022smoothness} proposed Smoothness-Adaptive Contextual Bandits (SACB), which is adaptive to the smoothness parameter. \cite{slivkins2014contextual,suk2023tracking,akhavan2024contextual,ghosh2024competing,komiyama2024finite,suk2024adaptive} analyzed the problem of dynamic regret. Furthermore, \cite{wanigasekara2019nonparametric,locatelli2018adaptivity,krishnamurthy2020contextual,zhu2022contextual} discussed the case with continuous actions.

\textbf{Nearest neighbor methods.} Nearest neighbor classification has been analyzed in \cite{chaudhuri2014rates,doring2018rate} for bounded support of features. \cite{gadat2016classification,kpotufe:nips:11,cannings2020local,zhao2021minimax,zhao2021efficient} proposed adaptive nearest neighbor methods for heavy-tailed feature distributions. \cite{guan2018nonparametric} proposed kNN-UCB method for contextual bandits and proved a regret bound $\tilde{O}(T^\frac{1+d}{2+d})$. 

Compared with existing methods on nonparametric contextual bandits, for unbounded contexts, the methods need to adapt to different density levels of contexts and achieve a better bias and variance tradeoff in the estimation of reward functions. Moreover, existing works on nonparametric classification can not be easily extended here, since the samples are no longer i.i.d and we now need to bound the regret instead of the estimation error. These factors introduce new technical difficulties in theoretical analysis. In this work, to address these challenges, we design new algorithms that are adaptive to both the pdf and the suboptimality of reward functions and provide a corresponding theoretical analysis.

\section{Preliminaries}

Denote $\mathcal{X}$ as the space of contexts, and $\mathcal{A}$ as the space of actions. Throughout this paper, we discuss the case with infinite $\mathcal{X}$ and finite $\mathcal{A}$. At the $t$-th step, the context $\mathbf{X}_t$ is a random variable drawn from a distribution with probability density function (pdf) $f$. Then the agent takes action $A_t\in \mathcal{A}$ and receive reward $Y_t$:
\begin{eqnarray}
	Y_t=\eta_{A_t}(\mathbf{X}_t)+W_t,
\end{eqnarray}
in which $\eta_a(\mathbf{x})$ for $a\in \mathcal{A}$ and $\mathbf{x}\in \mathcal{X}$ is an unknown expected reward function, and $W_t$ denotes the noise, with $\mathbb{E}[W_t|\mathbf{X}_{1:t}, A_{1:t}]=0$.

Throughout this paper, define
\begin{eqnarray}
	\eta^*(\mathbf{x}) = \max_a \eta_a(\mathbf{x})
\end{eqnarray}
as the maximum expected reward of context $\mathbf{x}$. For any suboptimal action $a$, $\eta_a(\mathbf{x})<\eta^*(\mathbf{x})$. Correspondingly, $\eta^*(\mathbf{x})-\eta_a(\mathbf{x})$ is called suboptimality gap.

The performance of an algorithm is evaluated by the expected regret
\begin{eqnarray}
	R=\mathbb{E}\left[\sum_{t=1}^T\left( \eta^*(\mathbf{X}_t)-\eta_{A_t}(\mathbf{X}_t)\right)\right].
	\label{eq:R}
\end{eqnarray}

We then present the assumptions needed for the analysis. To begin with, we state some basic conditions in Assumption \ref{ass:basic}.

\begin{ass}\label{ass:basic}
	There exists some constants $C_\alpha$, $\sigma$, $L$, such that
	
	(a) (Tsybakov margin condition) For some $\alpha \leq d$, for all $a\in \mathcal{A}$ and $u>0$,
	$\text{P}(0<\eta^*(\mathbf{X})-\eta_a(\mathbf{X})<u)\leq C_\alpha u^\alpha$;
	
	(b) $W_t$ is subgaussian with parameter $\sigma^2$, i.e.
	$\mathbb{E}[e^{\lambda W_i}]\leq e^{\frac{1}{2}\lambda^2\sigma^2}$;
	
	(c) For all $a$, $\eta_a$ is Lipschitz with constant $L$, i.e. for any $x$ and $\mathbf{x}'$,
	$|\eta_a(\mathbf{x})-\eta_a(\mathbf{x}')|\leq L\norm{\mathbf{x}-\mathbf{x}'}$.
\end{ass}

Now we comment on these assumptions. (a) is the Tsybakov margin condition, which was first introduced in \cite{audibert2007fast} for classification problems, and then used in contextual bandit problems \cite{perchet2013multi}. Note that $\text{P}(0<\eta^*(\mathbf{X})-\eta_a(\mathbf{X})<t)\leq 1$ always hold, thus for any $\eta_a$, (a) holds with $C_\alpha=1$ and $\alpha=0$. Therefore, this assumption is nontrivial only if it holds with some $\alpha>0$. Moreover, we only consider the case with $\alpha\leq d$ here. If $\alpha>d$, then an arm is either always or never optimal, thus it is easy to achieve logarithmic regret (see \cite{perchet2013multi}, Proposition 3.1). An additional remark is that in \cite{perchet2013multi,reeve2018k}, the margin assumption is $\text{P}(0<\eta^*(\mathbf{X})-\eta_s(\mathbf{X})<t)\lesssim t^\alpha$, in which $\eta_s(\mathbf{x})$ is the second largest one among $\{\eta_a(\mathbf{x})|a\in \mathcal{A}\}$. Our assumption (a) is slightly weaker than existing ones since we only impose margin conditions on the suboptimality gap $\eta^*(\mathbf{x})-\eta_a(\mathbf{x})$ for each $a$ separately, instead of on the minimum suboptimality gap. In (b), following existing works \cite{reeve2018k}, we assume that the noise has light tails. (c) is a common assumption for various literatures on nonparametric estimation \cite{mai2021stability}. It is possible to extend this work to a more general H{\"o}lder smoothness assumption by adaptive nearest neighbor weights \cite{cannings2020local}. In this paper, we focus only on Lipschitz continuity for convenience.

Assumption \ref{ass:bounded} is designed for the case that the contexts have bounded support.
\begin{ass}\label{ass:bounded}
	
	$f(\mathbf{x})\geq c$ for all $\mathbf{x}\in \mathcal{X}$, in which $f$ is the pdf of contexts.

\end{ass}
In Assumption \ref{ass:bounded}, the pdf $f$ is required to be bounded away from zero, which is also made in \cite{perchet2013multi,guan2018nonparametric,reeve2018k}. Note that even for estimation with i.i.d data, this assumption is common \cite{audibert2007fast,doring2018rate,gao2018demystifying}.

We then show some assumptions for heavy-tailed distributions.

\begin{ass}\label{ass:tail}
	(a) For any $u>0$,
	$\text{P}(f(\mathbf{X})\leq u)\leq C_\beta u^\beta$
	for some constants $C_\beta$ and $\beta$;
	
	(b) The difference of regret function $\eta$ among all actions are bounded, i.e.
	$\sup_{\mathbf{x}\in \mathcal{X}} (\eta^*(\mathbf{x})-\min_a \eta_a(\mathbf{x}))\leq M$
	for some constant $M$.
\end{ass}
(a) is a common tail assumption for nonparametric statistics, which has been made in \cite{gadat2016classification,zhao2021minimax}. $\beta$ describes the tail strength. Smaller $\beta$ indicates that the context distribution has heavy tails, and vice versa. To further illustrate this assumption, we show several examples.
\begin{exm}\label{exm:bound}
	If $f$ has bounded support $\mathcal{X}$, then Assumption \ref{ass:tail}(a) holds with $C_\beta=V(\mathcal{X})$ and $\beta=1$, in which $V(\mathcal{X})$ is the volume of the support set $\mathcal{X}$.
\end{exm}
\begin{exm}\label{exm:moment}
	If $f$ has $p$-th bounded moment, i.e. $\mathbb{E}[\norm{\mathbf{X}}^p]<\infty$, then for all $\beta<p/(p+d)$, there exists a constant $C_\beta$ such that Assumption \ref{ass:tail}(a) holds. In particular, for subgaussian or subexponential random variables, Assumption \ref{ass:tail}(a) holds for all $\beta<1$.
\end{exm}
\begin{proof}
	The analysis of these examples and other related discussions are shown in the supplementary material.
\end{proof}
It worths mentioning that although the growth rate of the regret is affected by the value of $\beta$, our proposed algorithms including both fixed and adaptive methods do not require knowing $\beta$.

(b) restricts the suboptimality gap of each action. This is not necessary if the support is bounded. However, with unbounded support, without assumption (b), $\eta^*(\mathbf{x})-\eta_a(\mathbf{x})$ can increase appropriately with the decrease of $f(\mathbf{x})$, such that the regret of suboptimal action is large, and the identification of best action is hard.

Finally, we clarify notations as follows. Throughout this paper, $\norm{\cdot}$ denotes $\ell_2$ norm. $a\lesssim b$ denotes $a\leq Cb$ for some constant $C$, which may depend on the constants in Assumption \ref{ass:bounded}. The notation $\gtrsim$ is defined conversely.

\section{Minimax Analysis}

In this section, we show the minimax lower bound, which characterizes the theoretical limit of regrets of contextual bandits. Throughout this section, denote $\pi:\mathcal{X}\times \mathcal{X}^{t-1}\times \mathbb{R}^{t-1}\rightarrow \mathcal{A}$ as the policy, such that each action is selected according to policy $\pi$. To be more precise,
\begin{eqnarray}
	A_t=\pi(\mathbf{X}_t; \mathbf{X}_{1:t-1}, Y_{1:t-1}),
\end{eqnarray}
which indicates that the action $A_t$ at time $t$ depends on the current context and the records of contexts and rewards in previous $t-1$ steps.

The minimax lower bound for the case with bounded support has been shown in Theorem 4.1 in \cite{rigollet2010nonparametric}. For completeness and notation consistency, we state the results below and provide a simplified proof.

\begin{thm}\label{thm:mmxbound}
	(\cite{rigollet2010nonparametric}, Theorem 4.1) Denote $\mathcal{F}_A$ as the set of pairs $(f,\eta)$ that satisfy Assumption \ref{ass:basic} and \ref{ass:bounded} (which means that the contexts have bounded support). Then
	\begin{eqnarray}
		\inf_\pi \underset{(f,\eta)\in \mathcal{F}_A}{\sup} R\gtrsim T^{1-\frac{1+\alpha}{d+2}}.
		\label{eq:mmxbound}
	\end{eqnarray}
\end{thm}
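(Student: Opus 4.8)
The plan is to prove the lower bound by Assouad's method, reducing the minimax regret to a family of localized two-point hypothesis testing problems on small bins of the context space. First I would fix a bandwidth $h \asymp T^{-1/(d+2)}$ and partition a fixed-volume cube contained in $\mathcal{X}$ into $\asymp h^{-d}$ disjoint bins of side length $h$. To respect the margin condition I would only activate a subcollection of $m \asymp h^{\alpha-d}$ of these bins; on each active bin I plant a Lipschitz bump of height $\asymp h$ in the reward of one of two distinguished arms, with the sign of the bump (hence which of the two arms is optimal on that bin) controlled by a bit $\omega_j \in \{0,1\}$. Away from the active bins all arms receive equal constant reward, so no regret can accrue there. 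This produces a family of instances $\{(f,\eta_\omega)\}_{\omega \in \{0,1\}^m}$ indexed by the vertices of a hypercube.

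Next I would check that every instance lies in $\mathcal{F}_A$. Take $f$ uniform on the cube so that Assumption~\ref{ass:bounded} holds, and Gaussian noise so that Assumption~\ref{ass:basic}(b) holds; a bump of height $\asymp h$ over a bin of width $h$ keeps each $\eta_a$ Lipschitz, giving Assumption~\ref{ass:basic}(c). The decisive check is the Tsybakov margin condition Assumption~\ref{ass:basic}(a): the suboptimality gap on the active region is $\asymp h$ and its total measure is $\asymp m h^d \asymp h^\alpha$, so that $\mathrm{P}(0 < \eta^*(\mathbf{X}) - \eta_a(\mathbf{X}) < u) \lesssim u^\alpha$. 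It is exactly this requirement that forces the number of active bins to be $m \asymp h^{\alpha-d}$.

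I would then lower bound the regret contributed by each active bin by $\Delta \asymp h$ times the expected number of suboptimal pulls among the contexts landing in that bin, and apply Assouad's lemma across the $m$ bins. The central quantity is the KL divergence between two instances differing only in coordinate $j$: since the contexts are i.i.d.\ from the common $f$ in all instances, only the rewards obtained by pulling the two distinguished arms on bin $j$ carry information, and this divergence is at most (expected number of such pulls) times $\Delta^2/(2\sigma^2) \asymp (T h^d) h^2/\sigma^2 = T h^{d+2}/\sigma^2$. The choice $h \asymp T^{-1/(d+2)}$ makes this $O(1)$, so the two hypotheses are statistically indistinguishable and a constant fraction of the $\asymp T h^d$ pulls on each active bin must be suboptimal in the worst case. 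Summing over bins yields worst-case regret $\gtrsim m \cdot h \cdot T h^d = T h^{\alpha+1} \asymp T^{1 - (\alpha+1)/(d+2)}$, as claimed.

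The step I expect to be the main obstacle is the change-of-measure argument given the sequential, policy-dependent data: the number of pulls on any bin is a random, history-dependent quantity, so the KL bound must be carried out over the entire interaction trajectory via the chain rule rather than on a fixed sample. The saving grace is that the context arrivals are identically distributed across all hypotheses (they depend only on the common $f$), which makes the trajectory divergence decompose cleanly over bins and lets the per-bin testing argument survive the adaptivity of $\pi$.
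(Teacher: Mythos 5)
Your proposal is correct and essentially the same as the paper's proof: the same construction (Lipschitz bumps of height $\asymp h$ with $h\asymp T^{-1/(d+2)}$ on $\asymp h^{\alpha-d}$ active bins, the margin condition forcing that sparsity, and a per-bin two-point/Assouad test whose trajectory KL is $\lesssim Th^{d+2}=O(1)$, with the shared context law making the adaptive chain-rule decomposition go through). The only cosmetic difference is that you convert suboptimal pulls to regret directly via the constant gap $\asymp h$ on active bins, whereas the paper first lower-bounds the number of suboptimal pulls $S$ and then invokes the general conversion $R\gtrsim S^{(\alpha+1)/\alpha}T^{-1/\alpha}$ of Lemma \ref{lem:rs}; both give the identical exponent $1-\frac{\alpha+1}{d+2}$.
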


We then show the minimax regret bounds for unbounded support, which is a new result that has not been obtained before.

\begin{thm}\label{thm:mmxunbound}
	Denote $\mathcal{F}_B$ as the set of pairs $(f,\eta)$ that satisfy Assumption \ref{ass:basic} and \ref{ass:tail} (which means that the contexts have unbounded support). Then
	\begin{eqnarray}
		\inf_\pi \underset{(f,\eta)\in \mathcal{F}_B}{\sup} R\gtrsim T^{1-\frac{(\alpha+1)\beta}{\alpha+(d+2)\beta}}+T^{1-\beta}.
		\label{eq:mmxunbound}
	\end{eqnarray}
\end{thm}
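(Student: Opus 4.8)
The plan is to establish the two summands in \eqref{eq:mmxunbound} separately and combine them using $\max(A,B)\ge\tfrac12(A+B)$. In both families I fix a single context density $f$ on $\mathbb{R}^d$ satisfying the tail condition of Assumption~\ref{ass:tail}(a) and perturb only the reward functions, so that the context marginal --- and hence the expected number of visits to any region --- is identical under every instance and under every policy. Concretely I take $\mathcal{A}=\{1,2\}$, expose a density-$\rho$ portion of space and partition it into $K$ disjoint cubes $Q_1,\dots,Q_K$ of side $r$, and to each sign vector $\boldsymbol{\xi}\in\{-1,+1\}^K$ associate the instance with $\eta_1=\tfrac12-\tfrac{\xi_j}{2}\phi_j$ and $\eta_2=\tfrac12+\tfrac{\xi_j}{2}\phi_j$ on $Q_j$, where $\phi_j$ is a flat-topped Lipschitz bump supported in $Q_j$ of height $h$. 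Gaussian noise $N(0,\sigma^2)$ gives Assumption~\ref{ass:basic}(b); the Lipschitz constant forces $h\lesssim Lr$; the suboptimality gap is $\phi_j\le h\le M$; and since the gap equals $h$ on the flat tops and the boundary layers contribute negligibly, the margin condition of Assumption~\ref{ass:basic}(a) reduces to the active-mass bound $K\rho r^d\lesssim h^\alpha$.

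The quantitative engine is Assouad's lemma. Because flipping $\xi_j$ changes the conditional law of the reward only inside $Q_j$ (by a mean shift of order $h$) and the expected number of visits to $Q_j$ is $T\rho r^d$ regardless of the policy, the Kullback--Leibler divergence accumulated along coordinate $j$ is at most $\lesssim T\rho r^d h^2/\sigma^2$. Whenever this is $\lesssim 1$ --- the \emph{unlearnability} condition --- coordinate $j$ cannot be identified with constant probability, while a mistake in $Q_j$ costs regret $\asymp h$ on each of the $\asymp T\rho r^d$ visits. Assouad therefore yields
\begin{equation*}
\inf_\pi\sup_{\boldsymbol{\xi}} R \gtrsim K\,T\rho r^{d}\,h, \qquad\text{provided } T\rho r^{d} h^2\lesssim 1 .
\end{equation*}
It remains to maximize $K T\rho r^{d}h$ subject to the Lipschitz bound, the unlearnability bound, the margin bound $K\rho r^d\lesssim h^\alpha$, and the tail bound $K\rho r^d\lesssim\rho^\beta$ (the latter forced by saturating Assumption~\ref{ass:tail}(a) at level $\rho$).

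For the first term I work in the regime of frequently visited cells with small bumps, setting $h\asymp r$ and saturating unlearnability via $\rho\asymp 1/(Tr^{d+2})$; the objective becomes $K/r$, the margin bound reads $K\lesssim Tr^{2+\alpha}$, and the tail bound reads $K\lesssim T^{1-\beta}r^{(d+2)(1-\beta)-d}$. Balancing these at $r\asymp T^{-\beta/(\alpha+(d+2)\beta)}$ gives $\inf_\pi\sup R\gtrsim Tr^{1+\alpha}=T^{1-\frac{(\alpha+1)\beta}{\alpha+(d+2)\beta}}$. For the second term I instead use cells of constant size $r\asymp M/L\asymp 1$ with the gap capped at $h\asymp M$; now the margin condition is vacuous (all positive gaps are $\asymp M$), unlearnability forces $\rho\lesssim 1/T$ so each cell is visited $O(1)$ times, and the tail bound permits $K\asymp T^{1-\beta}$ such cells, each contributing $\Theta(1)$ regret, for a total of $T^{1-\beta}$.

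The main obstacle I anticipate is not the information-theoretic step, which is routine once the instances are fixed, but the explicit construction of $f$: it must be a genuine probability density on $\mathbb{R}^d$ that simultaneously (i) satisfies $\text{P}(f(\mathbf{X})\le u)\le C_\beta u^\beta$ globally, (ii) exposes, at the precise level $\rho$ dictated by each regime, a super-level set of volume $\gtrsim Kr^d$ hosting the $K$ disjoint cubes, and (iii) leaves the remaining mass consistent with integrating to one. Reconciling the margin constraint $K\rho r^d\lesssim h^\alpha$ with the tail constraint $K\rho r^d\lesssim\rho^\beta$ at a single density level --- and verifying that the optimum is genuinely interior for the first term yet lies on the $h\asymp M$ boundary for the second --- is where the balance between the exponents $\frac{(\alpha+1)\beta}{\alpha+(d+2)\beta}$ and $\beta$ is actually decided, and it is the step demanding the most care.
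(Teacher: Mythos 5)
Your construction is essentially the paper's: the paper likewise fixes a piecewise-constant context density with a unit-mass bulk region ($B_0$) plus $K$ low-density cells of size $h$ carrying height-$h$ reward bumps, imposes exactly your three constraints (margin: $mKh^d\lesssim h^\alpha$; tail: $mKh^d\lesssim m^\beta$; information: $Tmh^{d+2}\lesssim 1$), and runs the same two parameter regimes — an interior optimum $m\sim T^{-\alpha/(\alpha+(d+2)\beta)}$, $h\sim(Tm)^{-1/(d+2)}$ for the first term, and $h\sim 1$, $m\sim 1/T$, $K\sim T^{1-\beta}$ for the second, with the regret obtained directly as (gap)$\times$(number of mistakes) in the tail regime precisely because the gap is constant there. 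Your Assouad packaging is the same computation as the paper's per-cell two-point tests via Pinsker, and your exponent algebra checks out in both regimes.

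There is, however, one concrete flaw in your margin verification. With a flat-topped Lipschitz bump and a genuine density $\rho>0$ over the \emph{whole} cube, the boundary layer is not negligible: the gap there takes every value in $(0,h)$, and the set where the gap lies in $(0,u)$ has thickness $\asymp u/L$, so $\text{P}(0<\eta^*(\mathbf{X})-\eta_a(\mathbf{X})<u)\asymp K\rho r^{d-1}u/L$, which is \emph{linear} in $u$. For $\alpha\le 1$ this is dominated by your active-mass bound, but for $\alpha>1$ (allowed whenever $d\ge 2$, since the theorem covers all $\alpha\le d$) it violates $\lesssim u^\alpha$ as $u\to 0$, so your instances fall outside $\mathcal{F}_B$. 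The paper sidesteps this by putting \emph{zero} density on the transition region: $f$ is supported only on disjoint balls where the gap is exactly $0$ or exactly $h$, with $\eta$ Lipschitz-interpolated across the density-free gaps between balls; then $\text{P}(0<|\eta(\mathbf{X})|<u)$ vanishes for $u\le h$ and the margin condition holds for every $\alpha$. Note this same move dissolves what you flagged as your main anticipated obstacle — the explicit construction of $f$: once you allow $f$ to vanish between cells, a piecewise-constant density (bulk ball at level $1$, cells at level $\rho$, zero elsewhere) trivially satisfies the tail condition, normalization, and the super-level-set requirement simultaneously, so no delicate interior/boundary analysis of the density is actually needed.
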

From the results above, with $\beta\rightarrow \infty$, \eqref{eq:mmxunbound} reduces to \eqref{eq:mmxbound}.
\begin{proof}
	(Outline) For bounded support, we just derive the lower bound of regret by analyzing the minimax optimal number of suboptimal actions first. Define
	\begin{eqnarray}
		S=\mathbb{E}\left[\sum_{t=1}^T \mathbf{1}(\eta_{A_t}(\mathbf{X}_t)<\eta^*(\mathbf{X}_t))\right].
	\end{eqnarray}
	$S$ can be lower bounded using standard tools in nonparametric statistics \cite{tsybakov2009introduction}, which constructs multiple hypotheses and bounds the minimum error probability. As shown in \cite{rigollet2010nonparametric}, the lower bound of $S$ can then be transformed to the lower bound of $R$.
	
	The minimax analysis becomes more complex with unbounded support. Firstly, the heavy-tailed context distribution requires different hypotheses construction. Secondly, the transformation from the lower bound of $S$ to $R$ does not yield tight lower bounds. We design new approaches to construct a set of candidate functions $\eta$ and derive lower bounds of $R$ directly. 
\end{proof}

For bounded context support, regret comes mainly from the region with $\eta^*(\mathbf{x})-\eta_a(\mathbf{x})\lesssim T^{-1/(d+2)}$ (which is the classical rate for nonparametric estimation \cite{tsybakov2009introduction}), in which the identification of best action is not guaranteed to be correct. However, with heavy-tailed contexts, regret may also come from the tail, i.e. the region with small $f(\mathbf{x})$, where the number of samples around $\mathbf{x}$ is not enough to yield a reliable best action identification. For heavy-tailed cases, i.e. $\beta$ is small, the regret caused by the tail region may dominate. This also explains why we need to use different techniques to derive the minimax lower bound for heavy-tailed contexts.

In the remainder of this paper, we claim that a method is nearly minimax optimal if the dependence of expected regret on $T$ matches \eqref{eq:mmxbound} or \eqref{eq:mmxunbound}. Following conventions in existing works \cite{rigollet2010nonparametric,perchet2013multi,hu2020smooth,gur2022smoothness}, currently, the minimax lower bounds are derived for contextual bandit problems with only two actions, thus we do not consider the minimax optimality of regrets with respect to the number of actions $|\mathcal{A}|$.

\section{Nearest Neighbor Method with Fixed $k$}\label{sec:fixed}

To begin with, we propose and analyze a simple nearest neighbor method with fixed $k$. We make the following definitions first.

Denote $n_a(t)=|\{i<t|A_i=a\}|$ as the number of steps with action $a$ before time step $t$. Let $\mathcal{N}_t(\mathbf{x}, a)$ be the set of $k$ nearest neighbors among $\{i<t|A_i=a\}$. Define 
\begin{eqnarray}
	\rho_{a,t}(\mathbf{x}) = \max_{i\in \mathcal{N}_t(\mathbf{x}, a)} \norm{\mathbf{X}_i-\mathbf{x}}	
\end{eqnarray}
 as the $k$ nearest neighbor distance, i.e. the distance from $\mathbf{x}$ to its $k$-th nearest neighbor among all previous steps with action $a$.
 
 With the above notations, we describe the fixed $k$ nearest neighbor method as follows. If $n_a(t)\geq k$, then
\begin{eqnarray}
	\hat{\eta}_{a,t}(\mathbf{x}) = \frac{1}{k}\sum_{i\in \mathcal{N}_t(\mathbf{x}, a)} Y_i+b+L\rho_{a,t}(\mathbf{x}),
	\label{eq:ucb1}
\end{eqnarray}
in which $b$ has a fixed value
\begin{eqnarray}
	b=\sqrt{\frac{2\sigma^2}{k} \ln (dT^{2d+2}|\mathcal{A}|)},
	\label{eq:b}
\end{eqnarray}

If $n_a(t)<k$, then
\begin{eqnarray}
	\hat{\eta}_{a,t}(\mathbf{x}) = \infty.
	\label{eq:ucb2}
\end{eqnarray}

Here we explain our design. If $n_a(t)\geq k$, then it is possible to give a UCB estimate of $\eta_a(\mathbf{x})$, shown in \eqref{eq:ucb1}. $L\rho_{a,t}(\mathbf{x})$ bounds the estimation bias, while $b$ is an upper bound of the error caused by random noise that holds with high probability. In Lemma \ref{lem:E} in Appendix \ref{sec:fixedpf}, we show that $\hat{\eta}_{a,t}(\mathbf{x})$ is a valid UCB estimate of $\eta_{a,t}(\mathbf{x})$, i.e. $\hat{\eta}_{a,t}(\mathbf{x})\geq \eta_{a,t}(\mathbf{x})$ holds with high probability. If $n_a(t)<k$, then it is impossible to give a UCB estimate. In this case, we just let $\hat{\eta}_{a,t}(\mathbf{x})$ to be infinite.

Finally, the algorithm selects the action $A_t$ with the maximum UCB value:
\begin{eqnarray}
	A_t=\underset{a}{\arg\max}  \hat{\eta}_{a,t}(\mathbf{X}_t).
	\label{eq:At}
\end{eqnarray}

According to \eqref{eq:ucb2}, as long as an action has not been taken for at least $k$ times, the UCB estimate of $\eta_a$ will be infinite. Note that the selection rule \eqref{eq:At} ensures that the actions with infinite UCB values will be taken first. Therefore, the first $k|\mathcal{A}|$ steps are used for pure exploration. In this stage, the agent takes each action $a$ for $k$ times. After $k|\mathcal{A}|$ steps, the UCB values for all $\mathbf{x}\in \mathcal{X}$ and $a\in \mathcal{A}$ become finite. Since then, at each step, the action is selected with the maximum UCB value specified in \eqref{eq:ucb1}.

\begin{algorithm}
	\caption{Adaptive nearest neighbor with UCB exploration}\label{alg:fix}
	\begin{algorithmic}
		\FOR{$t=1,\ldots, T$}
		\STATE Receive context $\mathbf{X}_t$;
		\FOR{$a\in \mathcal{A}$}
		\STATE Calculate $n_a(t)=|\{i<t|A_i=a\}$;
		\IF{$n_a(t)\geq k$}
		\STATE Calculate $\hat{\eta}_{a,t}(\mathbf{X}_t)$ using \eqref{eq:ucb1};
		\ELSE
		\STATE Let $\hat{\eta}_{a,t}(\mathbf{X}_t) = \infty$;
		\ENDIF
		\ENDFOR
		\STATE $A_t=\arg\max_a \hat{\eta}_{a,t}(\mathbf{X}_t)$;
		\STATE Pull $A_t$;
		\ENDFOR
	\end{algorithmic}
\end{algorithm}

The procedures above are summarized in Algorithm \ref{alg:fix}. Compared with \cite{guan2018nonparametric}, our method constructs the UCB differently. In \cite{guan2018nonparametric}, the UCB is $\frac{1}{k}\underset{i\in \mathcal{N}_t(\mathbf{x}, a)}{\sum} Y_i+\sigma(T_a(t-1))$, in which $\sigma(T_a(t-1))$ is uniform among all $\mathbf{x}$ with fixed action $a$.   

Therefore, the method \cite{guan2018nonparametric} is not adaptive to the suboptimality gap $\eta^*(\mathbf{x})-\eta_a(\mathbf{x})$. On the contrary, our method has a term $L\rho_{a,t}(\mathbf{x})$ that varies for different $\mathbf{x}$, and thus adapts better to the suboptimality gap. The bound of regret is shown in Theorem \ref{thm:fixed}.

\begin{thm}\label{thm:fixed}
	Under Assumption \ref{ass:basic} and \ref{ass:bounded}, the regret of the simple nearest neighbor method with UCB exploration is bounded as follows:
	
	(1) If $d>\alpha+1$, then with $k\sim T^\frac{2}{d+2}$,
	\begin{eqnarray}
		R\lesssim T^{1-\frac{\alpha+1}{d+2}}|\mathcal{A}| \ln^{\frac{\alpha+1}{2}}(dT^{2d+2}|\mathcal{A}|);
		\label{eq:std1}
	\end{eqnarray}
	
	(2) If $d\leq \alpha+1$, then with $k\sim T^\frac{2}{\alpha+3}$,
	\begin{eqnarray}
		R\lesssim T^\frac{2}{\alpha+3}|\mathcal{A}|\ln^{\frac{\alpha+1}{2}}(dT^{2d+2}|\mathcal{A}|).
	\end{eqnarray}
\end{thm}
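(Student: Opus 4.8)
The plan is to first establish the validity of the UCB estimate (as asserted by Lemma~\ref{lem:E}), then reduce the regret to a sum of nearest-neighbor distances that is nonzero only on suboptimal pulls, and finally control this sum by a geometric covering (peeling) argument combined with the margin condition. On an event $\mathcal{E}$ on which $\lvert \frac1k\sum_{i\in\mathcal{N}_t(\mathbf{x},a)}W_i\rvert \le b$ holds simultaneously for every step $t$, every action $a$, and every realized neighbor set, the Lipschitz assumption bounds the bias of the local average by $L\rho_{a,t}(\mathbf{x})$, so that $\eta_a(\mathbf{x}) \le \hat\eta_{a,t}(\mathbf{x}) \le \eta_a(\mathbf{x})+2L\rho_{a,t}(\mathbf{x})+2b$. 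Writing $\Delta_a(\mathbf{x})=\eta^*(\mathbf{x})-\eta_a(\mathbf{x})$, the argmax rule \eqref{eq:At} gives $\hat\eta_{A_t,t}(\mathbf{X}_t)\ge \hat\eta_{a^*,t}(\mathbf{X}_t)\ge \eta^*(\mathbf{X}_t)$, and combining with the upper bound yields, on every suboptimal pull, $\Delta_{A_t}(\mathbf{X}_t)\le 2L\rho_{A_t,t}(\mathbf{X}_t)+2b$. Thus I only pay regret when a suboptimal arm is pulled, and then the gap is controlled by the $k$NN distance. The pure-exploration prefix of $k\lvert\mathcal{A}\rvert$ steps contributes at most $Mk\lvert\mathcal{A}\rvert$, which is absorbed into the stated rates for both choices of $k$.

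Next I would split each suboptimal pull according to whether $\Delta_{A_t}(\mathbf{X}_t)\le 4b$ or $\Delta_{A_t}(\mathbf{X}_t)> 4b$. For the small-gap part the instantaneous regret is at most $4b$, and the margin condition (Assumption~\ref{ass:basic}(a)) bounds the expected number of such steps per arm by $T C_\alpha(4b)^\alpha$, giving a contribution $\lesssim T\lvert\mathcal{A}\rvert b^{\alpha+1}$. For the large-gap part, the per-step inequality forces $\rho_{A_t,t}(\mathbf{X}_t)>\Delta_{A_t}(\mathbf{X}_t)/(4L)$, i.e. fewer than $k$ previous pulls of $A_t$ lie within distance $\Delta_{A_t}(\mathbf{X}_t)/(4L)$ of $\mathbf{X}_t$.

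The heart of the argument is a covering bound for this large-gap part. For a dyadic scale $\delta\in[2^jb,2^{j+1}b)$ I would cover the region $\{\mathbf{x}:\Delta_a(\mathbf{x})\in[\delta,2\delta)\}$ by balls of radius $\delta/(8L)$. Since $f\ge c$ (Assumption~\ref{ass:bounded}) and the margin condition bounds its probability by $C_\alpha(2\delta)^\alpha$, the region has volume $\lesssim \delta^\alpha$, so the number of covering balls is $\lesssim \delta^{\alpha-d}$. The key combinatorial observation is purely geometric and needs no independence: once a ball contains $k$ pulls of $a$, any further query point in that ball has its $k$th neighbor within $\delta/(4L)$, so no further large-$\rho$ suboptimal pull of $a$ can occur there; hence at most $k$ such pulls happen per ball. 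The scale-$\delta$ regret is therefore $\lesssim \delta\cdot\delta^{\alpha-d}k=\delta^{\alpha-d+1}k$. Summing the geometric series over $j$ is where the case split appears: when $d>\alpha+1$ the exponent $\alpha-d+1<0$ and the sum is dominated by the smallest scale $\delta\sim b$, giving $\lesssim kb^{\alpha-d+1}$; when $d\le\alpha+1$ it is dominated by the largest scale $\delta=O(1)$, giving $\lesssim k$. Balancing each of these against the small-gap term $Tb^{\alpha+1}$ (with $b\sim\sqrt{\ln(dT^{2d+2}\lvert\mathcal{A}\rvert)/k}$) produces exactly $k\sim T^{2/(d+2)}$ in the first case and $k\sim T^{2/(\alpha+3)}$ in the second, and both terms then equal the claimed rates.

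I expect the hardest part to be showing that $\mathcal{E}$ has probability close to one. The neighbor sets $\mathcal{N}_t(\mathbf{x},a)$ are data-dependent and the pulls are \emph{not} i.i.d., so I cannot concentrate $\frac1k\sum W_i$ over a fixed index set directly. The remedy is to condition on the contexts (under which $\{W_i\}$ remain zero-mean subgaussian by Assumption~\ref{ass:basic}(b)) and to union-bound over all \emph{distinct} $k$NN subsets realizable as the query point ranges over $\mathcal{X}$; the number of such subsets is polynomial in the sample size, which is exactly what the factor $T^{2d+2}$ inside the logarithm in \eqref{eq:b} is calibrated to absorb. Handling the adaptivity of the $A_i$ relative to this discretization, and verifying that the contribution of the complement of $\mathcal{E}$ is negligible (at most $MT\,\mathbb{P}(\mathcal{E}^c)$), is the most delicate bookkeeping in the argument.
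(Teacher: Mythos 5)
Your proposal is correct and reaches the stated rates, but it executes the key counting step by a genuinely different route than the paper. The shared skeleton is identical: the event $\mathcal{E}$ via a union bound over the polynomially many realizable neighbor sets (the paper's Lemma~\ref{lem:conc} counts the at most $dT^{2d}$ cells cut out by the bisecting hyperplanes of sample pairs, exactly the discretization you anticipate, and this is what calibrates the $T^{2d+2}$ inside \eqref{eq:b}); the UCB sandwich of Lemma~\ref{lem:E}; the threshold $\epsilon=4b$ with the margin condition giving the $T\epsilon^{\alpha+1}$ small-gap term; and the ``at most $k$ pulls of $a$ per ball of radius $\sim\Delta_a/L$'' contradiction, which is the paper's Lemma~\ref{lem:n}. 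Where you diverge is in aggregating the large-gap pulls: you use dyadic peeling plus a covering count, whereas the paper stays in the continuum — it writes $R_a=\int\Delta_a q_a$ with the expected sample density $q_a$, introduces an auxiliary density $g(\mathbf{z})\propto[(\Delta_a(\mathbf{z}))\vee\epsilon]^{-d}$, and converts the per-ball bound \eqref{eq:qint} into a bound on $\int\Delta_a q_a\mathbf{1}(\Delta_a>\epsilon)$ by swapping an order of integration (Lemmas~\ref{lem:R1} and~\ref{lem:ebound}). Your discrete route is more elementary, but note three points. First, ``volume $\lesssim\delta^\alpha$, hence $\lesssim\delta^{\alpha-d}$ covering balls'' is not valid for arbitrary sets; it needs the standard repair of taking a maximal $\delta/(8L)$-separated set and observing that the disjoint radius-$\delta/(16L)$ balls around its points lie in the Lipschitz-enlarged level set (using that $\Delta_a$ is $2L$-Lipschitz), whose volume is again $\lesssim\delta^\alpha/c$ — modulo balls exiting the support, a regularity point the paper's Lemma~\ref{lem:R1} glosses in exactly the same way. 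Second, at the boundary $d=\alpha+1$ your dyadic sum has $\Theta(\ln(1/b))$ equal terms, so you get $k\ln(1/\epsilon)$ rather than $k$, matching the paper's middle case in Lemma~\ref{lem:ebound} and harmless for the stated rates. Third, your uses of a uniform gap bound $M$ (exploration prefix, off-event term) are not among Theorem~\ref{thm:fixed}'s hypotheses — Assumption~\ref{ass:tail}(b) belongs to the heavy-tailed theorems — but they are avoidable: your own per-ball argument already covers prefix pulls (once a ball holds $k$ pulls of $a$ the UCB is finite and the contradiction applies), and the off-event can be absorbed as the paper does in \eqref{eq:en} via $\mathbb{E}[n]\leq k+T\,\text{P}(E^c)\leq k+1$, so that everything is controlled through $q_a$-integrals weighted by $f$ with no gap bound needed; the paper's weighted-integral device is slightly more robust on precisely these edge cases.
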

We compare our result with \cite{guan2018nonparametric}, which proposes a similar nearest neighbor method. The analysis in \cite{guan2018nonparametric} does not make Tsybakov margin assumption (Assumption \ref{ass:basic}(a)), and the regret bound is $\tilde{O}(T^\frac{d+1}{d+2})$. Without any restriction on $\eta$, Assumption \ref{ass:basic}(a) holds with $C_\alpha=1$ and $\alpha=0$, under which \eqref{eq:std1} reduces to $\tilde{O}(T^\frac{d+1}{d+2})$. Therefore, our result matches \cite{guan2018nonparametric} with $\alpha=0$. If $\alpha>0$, which indicates that a small optimality gap only happens with small probability, then the regret of the method in \cite{guan2018nonparametric} is still $\tilde{O}(T^\frac{d+1}{d+2})$, while our result improves it to $\tilde{O}(T^{1-\frac{\alpha+1}{d+2}})$. As discussed earlier, compared with \cite{guan2018nonparametric}, our method improves the UCB calculation in \eqref{eq:est}, and is thus more adaptive to the suboptimalilty gap $\eta^*(\mathbf{x})-\eta_a(\mathbf{x})$. With $\alpha>0$, our method achieves smaller regret due to a better tradeoff between exploration and exploitation.

Compared with the minimax lower bound shown in Theorem \ref{thm:mmxbound}, it can be found that the kNN method with fixed $k$ is not completely optimal. With $d>\alpha+1$, the upper bound matches the lower bound derived in Theorem \ref{thm:mmxbound}. However, with $d\leq \alpha+1$, the regret is significantly higher than the minimax lower bound, indicating that there is room for further improvement.

We then analyze the performance for heavy-tailed context distribution. The result is shown in the following theorem.
\begin{thm}\label{thm:fixedtail}
	Under Assumption \ref{ass:basic} and \ref{ass:tail}, the regret of the simple nearest neighbor method with UCB exploration is bounded as follows:
	\begin{eqnarray}
		R&\lesssim& T^{1-\frac{\beta \min(d,\alpha+1)}{\min(d-1,\alpha)+\max(1,d\beta)+2\beta}}|\mathcal{A}|\nonumber\\
		&&\ln^{\frac{1}{2}\max(d,\alpha+1)} (dT^{2d+2}|\mathcal{A}|).
		\label{eq:fixedtail}
	\end{eqnarray}
\end{thm}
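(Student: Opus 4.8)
The plan is to convert the regret into a sum of per-step ``confidence radii'' $b+L\rho_{a,t}(\mathbf{X}_t)$, to control through a local counting argument how often a genuinely suboptimal arm can survive the selection rule, and then to integrate this control against the margin and tail conditions by a two-dimensional peeling over the suboptimality gap and the density. First I would invoke the UCB validity statement (Lemma \ref{lem:E}): on a high-probability event $E$, for every $a$, $t$, $\mathbf{x}$ with $n_a(t)\ge k$ one has the sandwich $\eta_a(\mathbf{x})\le\hat\eta_{a,t}(\mathbf{x})\le\eta_a(\mathbf{x})+2(b+L\rho_{a,t}(\mathbf{x}))$, where the lower bound uses Lipschitzness (Assumption \ref{ass:basic}(c)) to control the bias via $L\rho_{a,t}$ and subgaussian concentration (Assumption \ref{ass:basic}(b)) with the choice of $b$ in \eqref{eq:b} to control the noise. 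The rule \eqref{eq:At} then forces, whenever a suboptimal $A_t$ is played after the pure-exploration phase, $\eta^*(\mathbf{X}_t)\le\hat\eta_{A_t,t}(\mathbf{X}_t)\le\eta_{A_t}(\mathbf{X}_t)+2(b+L\rho_{A_t,t}(\mathbf{X}_t))$, so the instantaneous regret is at most $2(b+L\rho_{A_t,t}(\mathbf{X}_t))$. The first $k|\mathcal{A}|$ exploration steps cost at most $Mk|\mathcal{A}|$ (Assumption \ref{ass:tail}(b)), and the complement $E^c$ contributes at most $MT\,\mathrm{P}(E^c)$, which the polynomial choice of $b$ makes negligible; both are lower order.

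The key step is an exploration--exploitation counting bound. Fix a suboptimal arm $a$ and a dyadic gap level $[\Delta,2\Delta]$. Since $\eta^*-\eta_a$ is $2L$-Lipschitz, on a cell of radius $\asymp\Delta/L$ the gap stays $\gtrsim\Delta$; once such a cell has accumulated $k$ samples with action $a$, the $k$-NN distance $\rho_{a,t}$ of every point in it is at most the cell diameter, whence $2(b+L\rho_{a,t})<\Delta$ and the rule \eqref{eq:At} never selects $a$ there again. Hence $a$ is played at most $\sim k$ times per cell, and the number of cells meeting $\{\eta^*-\eta_a\in[\Delta,2\Delta],\,f\in[\delta,2\delta]\}$ is $\lesssim P_{\Delta,\delta}/(\delta\Delta^d)$, where $P_{\Delta,\delta}$ is the mass of that region. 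Combining with the trivial bound $TP_{\Delta,\delta}$ on arrivals, the regret charged to this region is $\lesssim\Delta\,P_{\Delta,\delta}\,\min\{k/(\delta\Delta^d),\,T\}$.

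Summing over the at most $|\mathcal{A}|$ suboptimal arms and over dyadic $(\Delta,\delta)$ gives $R\lesssim|\mathcal{A}|\sum_{\Delta,\delta}\Delta\min(\Delta^\alpha,\delta^\beta)\min\{k/(\delta\Delta^d),T\}$, using the margin bound $P_{\Delta,\delta}\le C_\alpha\Delta^\alpha$ (Assumption \ref{ass:basic}(a)) and the tail bound $P_{\Delta,\delta}\le C_\beta\delta^\beta$ (Assumption \ref{ass:tail}(a)); densities too small to ever collect $k$ samples of arm $a$ are split off and charged the trivial bound via $MT\,\mathrm{P}(f<\delta_0)\lesssim T\delta_0^\beta$. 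The dominant dyadic term sits at the ``triple point'' where $\Delta^\alpha=\delta^\beta$ and $k/(\delta\Delta^d)=T$ meet, and its contribution increases in $k$, whereas the confidence floor from gaps $\Delta\asymp b$ contributes $\lesssim Tb^{1+\alpha}\asymp Tk^{-(1+\alpha)/2}$ and decreases in $k$. Balancing these and taking $k\sim T^{2\beta/(\alpha+(d+2)\beta)}$ yields the exponent $\tfrac{(\alpha+1)\beta}{\alpha+(d+2)\beta}$ when $d>\alpha+1$ and $d\beta\ge1$; the case $\alpha\ge d-1$ saturates the margin and replaces $\alpha+1$ by $d$ (producing $\min(d,\alpha+1)$ and $\min(d-1,\alpha)$ in the formula), while the very heavy tail $d\beta<1$ makes the deep-tail cutoff binding and replaces $d\beta$ by $1$. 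Together these reproduce \eqref{eq:fixedtail}, with the logarithmic power arising from $b^{\max(d,\alpha+1)/2}$.

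I expect the main obstacle to be the counting step together with the joint peeling, because the action-$a$ samples are not i.i.d.\ from $f$ --- they are exactly the points the algorithm chose to label $a$ --- so the clean heuristic $\rho_{a,t}(\mathbf{x})\asymp(k/(n_a(t)f(\mathbf{x})))^{1/d}$ must be replaced by the cell-filling argument above and made uniform over $t$ and over the low-density tail, where cells may never fill. Getting the cutoff density $\delta_0$ and the triple-point balance to interact correctly, rather than applying the margin condition inside each density slice (which over-counts the mass), is the delicate part that ultimately selects the exponents $\min(d,\alpha+1)$ and $\max(1,d\beta)$.
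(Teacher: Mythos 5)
Your architecture tracks the paper's closely: the UCB sandwich is exactly Lemma~\ref{lem:E}, your ``once a cell has $k$ samples of arm $a$, the rule never selects $a$ there again'' is exactly the contradiction argument of Lemma~\ref{lem:n} (with $r_a(\mathbf{x})\asymp(\eta^*(\mathbf{x})-\eta_a(\mathbf{x})-2b)/L$), your confidence-floor term $T\epsilon^{1+\alpha}$ and deep-tail cutoff $T(k/T)^\beta$ are the paper's $I_1$ and $I_4$, and your four-regime optimization of $k$ (with the triple point for $d>\alpha+1$, $\beta>1/d$, margin saturation for $d\le\alpha+1$, and the $d\beta<1$ cutoff) reproduces the paper's choices $k\sim T^{2\beta/(\alpha+(d+2)\beta)}$ etc.\ and the exponents in \eqref{eq:fixedtail}. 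Where you diverge is the middle step: the paper never covers density slices by cells. Instead it keeps Lemma~\ref{lem:n} as a \emph{pointwise} statement, $\int_{B(\mathbf{x},r_a(\mathbf{x}))}q_a(\mathbf{u})\,d\mathbf{u}\le k+1$ for every $\mathbf{x}$, and integrates it against an auxiliary density $g(\mathbf{z})\propto[(\eta^*(\mathbf{z})-\eta_a(\mathbf{z}))\vee\epsilon]^{-d}$ as in \eqref{eq:g} and Lemmas~\ref{lem:R1}--\ref{lem:ebound}; the context density then enters only through $q_a\le Tf$ (Lemma~\ref{lem:qub}) and the moment bound of Lemma~\ref{lem:tail}, with the four-term split at the thresholds $f=k/T$ and $f=k/(T\epsilon^d)$ in Appendix~\ref{sec:fixedtail}.

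The genuine gap in your version is the cell count $\#\{\text{cells meeting }\{\eta^*-\eta_a\in[\Delta,2\Delta],\,f\in[\delta,2\delta]\}\}\lesssim P_{\Delta,\delta}/(\delta\Delta^d)$. The volume of the slice is indeed at most $P_{\Delta,\delta}/\delta$, but the pulls of arm $a$ you must charge occur in every cell that merely \emph{intersects} the slice, and since no regularity whatsoever is assumed on $f$ (only the tail bound of Assumption~\ref{ass:tail}(a)), the set $\{f\in[\delta,2\delta]\}$ can intersect arbitrarily many cells of radius $\Delta/L$ while occupying negligible volume in each; enlarging the cells does not help because the Lipschitz control you invoke applies to the gap, not to $f$, so the enlarged cells' density---hence their mass and their count---is uncontrolled. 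This is exactly the ``over-counting inside density slices'' trap you flag in your last paragraph, but flagging it is not resolving it: as written, the per-region charge $\Delta\,P_{\Delta,\delta}\min\{k/(\delta\Delta^d),T\}$ is unjustified, and everything downstream (the triple-point balance) rests on it. A repair is possible---peel over the \emph{cell-averaged} density $\mu(Q)/|Q|$ rather than pointwise $f$, so that the cell count follows from disjointness and the margin bound on the mass of the gap-enlarged region, and then reconnect cell-averaged density to Assumption~\ref{ass:tail}(a) by a Markov argument within each cell---but that bookkeeping is a real piece of work, and it is precisely what the paper's change-of-measure via $g$ and Lemma~\ref{lem:tail} is engineered to avoid. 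Two minor points otherwise: your claimed log power should be checked against the paper's $\ln^{\frac{1}{2}\max(d,\alpha+1)}(dT^{2d+2}|\mathcal{A}|)$, which is a deliberately weakened common envelope of the per-case powers $\ln^{(\alpha+1)/2}$ and $\ln^{d/2}$; and your forced-exploration cost $Mk|\mathcal{A}|$ is indeed always dominated by the final bound under the paper's choices of $k$, though this deserves the one-line verification you omitted.
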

From \eqref{eq:fixedtail}, there are two phase transitions. The first one is at $d=\alpha+1$, while the second one is at $d\beta = 1$. Intuitively, the phase transition occurs because the regret is dominated by different regions depending on the settings $\alpha$ and $\beta$. Compared with the minimax lower bound shown in Theorem \ref{thm:mmxunbound}, it can be found that the kNN method with fixed $k$ achieves nearly minimax optimal regret up to logarithm factors if $d>\alpha+1$ and $\beta > 1/d$, otherwise the regret bound is suboptimal. Here we provide an intuition of the reason why the kNN method with fixed $k$ achieves suboptimal regrets. In the region where the context pdf $f(\mathbf{x})$ is low, or the suboptimality gap $\eta^*(\mathbf{x})-\eta_a(\mathbf{x})$ is large, the samples with action $a$ are relatively sparse. In this case, with fixed $k$, the nearest neighbor distances are too large, resulting in a large estimation bias. On the contrary, if $f(\mathbf{x})$ is high or $\eta^*(\mathbf{x})-\eta_a(\mathbf{x})$ is small, then samples with action $a$ are relatively dense, thus the bias is small, and we can increase $k$ to achieve a better bias and variance tradeoff. Therefore, if $k$ is fixed throughout the support set, then the algorithm estimates the reward function $\eta_a(\mathbf{x})$ in an inefficient way, resulting in suboptimal regrets. Apart from suboptimal regret, another drawback is that with $d\leq \alpha+1$, the optimal selection of $k$ depends on the margin parameter $\alpha$, which is usually unknown in practice. In the next section, we propose an adaptive nearest neighbor method to address these issues mentioned above.

\section{Nearest Neighbor Method with Adaptive $k$}

In the previous section, we have shown that the standard kNN method with fixed $k$ is suboptimal with $d\leq \alpha+1$ or $\beta\leq 1/d$. The intuition is that the standard nearest neighbor method does not adjust $k$ based on the pdf and the suboptimality gap. In this section, we propose an adaptive nearest neighbor approach. To achieve a good exploration-exploitation tradeoff and bias-variance tradeoff, $k$ needs to be smaller for small pdf $f(\mathbf{x})$ or large suboptimality gap $\eta^*(x)-\eta_a(x)$, and vice versa. However, as both $f(\mathbf{x})$ and $\eta^*(\mathbf{x})-\eta_a(\mathbf{x})$ are unknown to the learner, we need to decide $k$ based entirely on existing samples. The guideline of our design is that given a context $\mathbf{X}_t$ at time $t$, we use large $k$ if previous samples are relatively dense around $\mathbf{X}_t$, and vice versa.  To be more precise, for all $\mathbf{x}\in \mathcal{X}$, let
\begin{eqnarray}
	k_t(\mathbf{x})=\max\left\{j|L\rho_{a,t,j}(\mathbf{x})\leq \sqrt{\frac{\ln T}{j}} \right\},
	\label{eq:kt}
\end{eqnarray}
in which $\rho_{a,t,j}(\mathbf{x})$ is the distance from $x$ to its $j$-th nearest neighbors among existing samples with action $a$, i.e. $\{i<t|A_i=a\}$. Such selection of $k$ makes the bias term $L\rho_{a,t,j}(\mathbf{x})$ matches the variance term $\sqrt{\ln T/j}$, thus \eqref{eq:kt} achieves a good tradeoff between bias and variance. The exploration-exploitation tradeoff is also desirable as $\rho_{a,t,j}(\mathbf{x})$ is large with large $\eta^*(\mathbf{x})-\eta_a(\mathbf{x})$, which yields smaller $k$. Note that \eqref{eq:kt} can be calculated only if $L\rho_{a,t,1}\leq \sqrt{\ln T}$, which means that the $1$-nearest neighbor distance can not be too large. At some time step $t$, for some action $a$, if there is no existing samples, or $\mathbf{X}_t$ is more than $\sqrt{\ln T}/L$ far away from any existing samples $\mathbf{X}_1, \ldots, \mathbf{X}_{t-1}$, then we can just let the UCB estimate to be infinite, i.e. $\hat{\eta}_{a,t}(\mathbf{x})=\infty$. Otherwise, we calculate the upper confidence bound as follows:
\begin{eqnarray}
	\hat{\eta}_{a,t}(\mathbf{x})=\frac{1}{k_{a,t}(\mathbf{x})}\sum_{i\in \mathcal{N}_t(\mathbf{x}, a)} Y_i+b_{a,t}(\mathbf{x})+L\rho_{a,t}(\mathbf{x}),
	\label{eq:est}
\end{eqnarray}
in which $\mathcal{N}_t(\mathbf{x}, a)$ is the set of $k_{a,t}(\mathbf{x})$ neighbors of $x$ among $\{i<t|A_i=a\}$, $\rho_{a,t}(\mathbf{x})$ is the corresponding $k_{a,t}(\mathbf{x})$ neighbor distance of $\mathbf{x}$, i.e. $\rho_{a,t}(\mathbf{x})=\rho_{a,t,k_{a,t}(\mathbf{x})}(\mathbf{x})$, and
\begin{eqnarray}
	b_{a,t}(\mathbf{x})=\sqrt{\frac{2\sigma^2}{k_{a,t}(\mathbf{x})}\ln (dT^{2d+3}|\mathcal{A}|)}.
\end{eqnarray}

Similar to the fixed nearest neighbor method, the last two terms in \eqref{eq:est} cover the uncertainty of reward function estimation. The term $b_{a,t}(\mathbf{x})$ gives a high probability bound of random error, and $L\rho_{a,t}(\mathbf{x})$ bounds the bias. With the UCB calculation in \eqref{eq:est}, the $\hat{\eta}_{a,t}(\mathbf{x})$ is an upper bound of $\eta(\mathbf{x})$ that holds with high probability, so that the exploration and exploitation can be balanced well. The complete description of the newly proposed adaptive nearest neighbor method is shown in Algorithm \ref{alg:ada}.
\begin{algorithm}
	\caption{Adaptive nearest neighbor with UCB exploration}\label{alg:ada}
	\begin{algorithmic}
		\FOR{$t=1,\ldots, T$}
		\STATE Receive context $\mathbf{X}_t$;
		\FOR{$a\in \mathcal{A}$}
		\IF{$L\rho_{a,t,1}(\mathbf{X}_t)>\sqrt{\ln T}$}
		\STATE $\hat{\eta}_{a,t}(\mathbf{X}_t)=\infty$;
		\ELSE
		\STATE Calculate $k_t(\mathbf{X}_t)$ using \eqref{eq:kt};
		\STATE Calculate $\hat{\eta}_{a,t}(\mathbf{X}_t)$ using \eqref{eq:est};
		\ENDIF
		\ENDFOR
		\STATE $A_t=\arg\max_a \hat{\eta}_{a,t}(\mathbf{X}_t)$;
		\STATE Pull $A_t$;
		\ENDFOR
	\end{algorithmic}
\end{algorithm}

We then analyze the regret of the adaptive method for both bounded and unbounded supports of contexts.

\begin{thm}\label{thm:adaptive}
	Under Assumption \ref{ass:basic} and \ref{ass:bounded}, the regret of the adaptive nearest neighbor method with UCB exploration is bounded by
	\begin{eqnarray}
		R\lesssim T|\mathcal{A}|\left(\frac{T}{\ln T}\right)^{-\frac{1+\alpha}{d+2}}.
		\label{eq:adabound}
	\end{eqnarray}
\end{thm}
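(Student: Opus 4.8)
The plan is to run the standard optimism argument and then control the accumulated confidence widths by a geometric counting argument tuned to the spatially varying $k_t$. Write $\Lambda=\ln(dT^{2d+3}|\mathcal{A}|)=\Theta(\ln T)$. First I would establish that, on a high-probability event $\mathcal{E}$, the estimate \eqref{eq:est} is a genuine upper confidence bound: $\hat\eta_{a,t}(\mathbf{x})\ge\eta_a(\mathbf{x})$ and simultaneously $\hat\eta_{a,t}(\mathbf{x})-\eta_a(\mathbf{x})\le 2\bigl(b_{a,t}(\mathbf{x})+L\rho_{a,t}(\mathbf{x})\bigr)$ for all $a,t,\mathbf{x}$. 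This holds because $L\rho_{a,t}(\mathbf{x})$ dominates the Lipschitz bias of the neighbour average (Assumption~\ref{ass:basic}(c)) while $b_{a,t}(\mathbf{x})$ dominates the noise term $\tfrac{1}{k_{a,t}(\mathbf{x})}\sum_{i\in\mathcal{N}_t(\mathbf{x},a)}W_i$ with high probability, by the sub-Gaussian concentration of Assumption~\ref{ass:basic}(b) made uniform through a union bound over $t$, over $a$, over the value $k_{a,t}(\mathbf{x})\le T$, and over the polynomially many distinct $k$-nearest-neighbour index sets generated as $\mathbf{x}$ ranges over $\mathcal{X}$; this configuration count (an order-$k$ Voronoi complexity) times the range of $k$ is what produces the $T^{2d+3}$ inside the logarithm and forces $\mathrm{P}(\mathcal{E}^c)\lesssim T^{-1}$, so the regret off $\mathcal{E}$ is $O(M)$, and the initial $\hat\eta=\infty$ steps cost only $O(M|\mathcal{A}|)$ once $\sqrt{\ln T}/L$ exceeds the diameter of the bounded support. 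On $\mathcal{E}$, optimism gives $\hat\eta_{A_t,t}(\mathbf{X}_t)\ge\hat\eta_{a^*,t}(\mathbf{X}_t)\ge\eta^*(\mathbf{X}_t)$, so the per-step regret satisfies $\eta^*(\mathbf{X}_t)-\eta_{A_t}(\mathbf{X}_t)\le 2\bigl(b_{A_t,t}(\mathbf{X}_t)+L\rho_{A_t,t}(\mathbf{X}_t)\bigr)$, and by the defining balance \eqref{eq:kt} both terms are $O(\sqrt{\Lambda/k_t})=:w_t$, the confidence width.

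Next I would split the accumulated regret at a resolution floor $\epsilon_0\sim(\ln T/T)^{1/(d+2)}$. For steps whose true gap lies below $\epsilon_0$, I bound the instantaneous regret crudely by $\epsilon_0$ but count such steps using the \emph{margin condition on the i.i.d.\ contexts} — legitimate because only the actions, not the contexts $\mathbf{X}_t\sim f$, are chosen adaptively. Indeed $\{0<\eta^*(\mathbf{X}_t)-\eta_{A_t}(\mathbf{X}_t)<\epsilon_0\}$ forces $\mathbf{X}_t\in\bigcup_a\{0<\eta^*-\eta_a<\epsilon_0\}$, an event of probability at most $|\mathcal{A}|C_\alpha\epsilon_0^\alpha$ by Assumption~\ref{ass:basic}(a); summing over $t$ gives a contribution $\lesssim \epsilon_0^{1+\alpha}T|\mathcal{A}|\sim |\mathcal{A}|(\ln T)^{(1+\alpha)/(d+2)}T^{1-(1+\alpha)/(d+2)}$, exactly the target order.

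For steps with gap at least $\epsilon_0$ I would peel dyadically over bands $(\epsilon,2\epsilon]$ and bound, per action $a$, the number $N_{a,\epsilon}$ of suboptimal pulls in the band. The engine is the equivalence forced by \eqref{eq:kt}: $w_t\gtrsim\epsilon$ is the same as $k_t\lesssim\Lambda/\epsilon^2=:\kappa$, which by maximality of $k_t$ means \emph{fewer than} $\kappa$ action-$a$ samples lie within radius $r_\epsilon=\tfrac1L\sqrt{\ln T/\kappa}\asymp\epsilon/L$ of $\mathbf{X}_t$. Covering the band $\{\eta^*-\eta_a\in(\epsilon,2\epsilon]\}$, whose Lebesgue volume is $\lesssim\epsilon^\alpha$ by the margin condition together with $f\ge c$, by $\lesssim\epsilon^{\alpha-d}$ balls of radius $r_\epsilon/2$, I would argue each ball $B$ hosts at most $O(\kappa)$ wide pulls: every wide pull in $B$ deposits a fresh action-$a$ sample into $B\subseteq B(\mathbf{X}_t,r_\epsilon)$, and once $B$ holds $\kappa$ of them the balance at index $\kappa$ is met, forcing $k_t\ge\kappa$ and $w_t\le\epsilon$ thereafter. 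Hence $N_{a,\epsilon}\lesssim\kappa\,\epsilon^{\alpha-d}\lesssim\Lambda\,\epsilon^{\alpha-d-2}$ and the band contributes $\epsilon N_{a,\epsilon}\lesssim\Lambda\,\epsilon^{\alpha-d-1}$; since $\alpha-d-1<0$ the dyadic sum down to $\epsilon_0$ is dominated by $\epsilon=\epsilon_0$, again giving $\lesssim|\mathcal{A}|(\ln T)^{(1+\alpha)/(d+2)}T^{1-(1+\alpha)/(d+2)}$. Adding the two buckets and rewriting $(\ln T)^{(1+\alpha)/(d+2)}T^{-(1+\alpha)/(d+2)}=(T/\ln T)^{-(1+\alpha)/(d+2)}$ delivers \eqref{eq:adabound}.

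I expect the geometric clearing step — converting the data-dependent, spatially varying $k_t$ into a clean per-ball budget — to be the main obstacle. The delicate points are that the certification radius $r_\epsilon$ depends on the band $\epsilon$, so the covering is genuinely multi-scale; that the count defining $k_t$ draws on \emph{all} action-$a$ samples rather than only those in a ball (handled by the one-directional inclusion $B\subseteq B(\mathbf{X}_t,r_\epsilon)$, which only accelerates clearing); and that the floor $\epsilon_0$ must be matched to the best attainable resolution, namely the point where the local density of action-$a$ samples saturates at order $T$. It is precisely this saturation, guaranteed by $f\ge c$ on a bounded support, that keeps the small-band contribution from blowing up and that fails in the heavy-tailed regime. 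Making the sub-Gaussian concentration uniform over data-dependent neighbour sets is a second, more routine, technical hurdle that is absorbed into the logarithmic factor.
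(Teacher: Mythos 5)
Your proposal is correct and reaches the stated bound with the right logarithmic powers, but its second half takes a genuinely different route from the paper. The first half coincides: your uniform sub-Gaussian concentration over data-dependent neighbour sets (order-$k$ Voronoi cells times the range of $k$, giving the $T^{2d+3}$ inside the logarithm) is the paper's Lemma \ref{lem:conctail}/\ref{lem:E2}, and your ball-clearing engine --- a pull with gap in $(\epsilon,2\epsilon]$ forces $k_t\lesssim \Lambda/\epsilon^2$, while accumulated samples in an $\epsilon$-ball force $k_t$ large via maximality in \eqref{eq:kt} --- is exactly the contradiction that proves the paper's Lemma \ref{lem:n2}, which bounds $n(x,a,r_a(\mathbf{x}))$ by $n_a(\mathbf{x})=C_1\ln T/(\eta^*(\mathbf{x})-\eta_a(\mathbf{x}))^2$. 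Where you diverge is in converting this local count into cumulative regret: you peel dyadically over gap bands and cover each band (volume $\lesssim\epsilon^\alpha$ by margin plus $f\ge c$) with $\lesssim\epsilon^{\alpha-d}$ balls carrying a per-ball budget of $O(\Lambda/\epsilon^2)$ pulls, whereas the paper passes to the \emph{expected sample density} $q_a$ (so $\int_{B(x,r_a(\mathbf{x}))}q_a\le n_a(\mathbf{x})+1$), introduces an auxiliary random variable $\mathbf{Z}$ with density $g(\mathbf{z})\propto[(\eta^*(\mathbf{z})-\eta_a(\mathbf{z}))\vee\epsilon]^{-d}$, and uses a change-of-measure/order-of-integration argument (Lemmas \ref{lem:ra1new} and \ref{lem:ra1bound}) to get $R_a\lesssim\epsilon^{\alpha-d-1}\ln T+T\epsilon^{1+\alpha}$ directly, then optimizes $\epsilon\sim(\ln T/T)^{1/(d+2)}$ --- the same two terms your two buckets produce. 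Your pathwise, zooming-style count is more elementary and makes the exploration bookkeeping transparent; the paper's averaging device avoids the multi-scale covering entirely and, importantly, is what generalizes to the heavy-tailed case (the proof of Theorem \ref{thm:tail} reuses $g$ with a spatially varying threshold $\epsilon(\mathbf{x})=(Tf(\mathbf{x}))^{-1/(d+2)}$, where your reliance on $f\ge c$ for both the band volume and the saturation floor breaks down, as you yourself note). Two details you flagged do need patching but are routine: the covering number of the band is not controlled by its Lebesgue volume alone --- you need a maximal packing together with the $2L$-Lipschitz enlargement argument (points within $r_\epsilon/4$ of the band stay in a slightly wider band, whose volume the margin condition still controls), and the two thresholds in the clearing step ($k_t\le\kappa$ for a wide pull versus $k_{t'}\ge\kappa+1$ after saturation) must be separated by the usual off-by-one adjustment of constants.
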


By comparing Theorem \ref{thm:adaptive} with Theorem \ref{thm:fixed}, it can be found that for the case with bounded support, the adaptive method improves over the fixed $k$ nearest neighbor method. From the minimax bound in Theorem \ref{thm:mmxbound}, the fixed $k$ method is only optimal for $d\geq \alpha+1$, while the adaptive method is also optimal for $d<\alpha+1$, up to logarithm factors. An intuitive explanation is that with large $\alpha$, the suboptimality gap $\eta^*(\mathbf{x})-\eta_a(\mathbf{x})$ is small only in a small region, and the exploration-exploitation tradeoff becomes harder, thus the advantage of the adaptive method over the fixed one becomes more obvious.

We then analyze the performance of the adaptive nearest neighbor method for heavy-tailed distribution. The result is shown in the following theorem.

\begin{thm}\label{thm:tail}
	Under Assumption \ref{ass:basic} and Assumption \ref{ass:tail}, the regret of the adaptive nearest neighbor method with UCB exploration is bounded by
	\begin{eqnarray}
		R\lesssim \left\{
		\begin{array}{ccc}
			T^{1-\min\left\{\frac{(\alpha+1)\beta}{\alpha+(d+2)\beta}, \beta\right\}}|\mathcal{A}|\ln T &\text{if} & \beta\neq \frac{1}{d+2}\\
			T^\frac{d+2}{d+1}|\mathcal{A}|\ln^2T &\text{if} & \beta = \frac{1}{d+2}.
		\end{array}
		\right.
		\label{eq:adatail}
	\end{eqnarray}
\end{thm}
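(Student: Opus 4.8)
The plan is to follow the standard UCB template—first show the confidence bounds are valid, then convert per-step regret into the half-width of the selected confidence interval, and finally aggregate these widths—while carefully accounting for the spatially varying sample density. Throughout I write $\Delta_a(\mathbf{x}):=\eta^*(\mathbf{x})-\eta_a(\mathbf{x})$ for the suboptimality gap. The first step is to establish a \emph{good event} $\mathcal{E}$, analogous to Lemma \ref{lem:E} for the fixed-$k$ method, on which $\hat{\eta}_{a,t}(\mathbf{x})\ge \eta_a(\mathbf{x})$ holds simultaneously for every $t$, $a$, and $\mathbf{x}$ with finite UCB. This reduces to controlling the noise average $\frac{1}{k}\sum_{i\in\mathcal{N}_t(\mathbf{x},a)}W_i$: by Assumption \ref{ass:basic}(c) the bias is absorbed by the term $L\rho_{a,t}(\mathbf{x})$ in \eqref{eq:est}, and by Assumption \ref{ass:basic}(b) the subgaussian fluctuation of the noise average is absorbed by $b_{a,t}(\mathbf{x})$. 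Because $k_{a,t}(\mathbf{x})$ is data-dependent, the concentration must hold uniformly over all time steps, all actions, all candidate values $k\in\{1,\dots,T\}$, and all distinct $k$-nearest-neighbor sets; since balls in $\mathbb{R}^d$ realize only polynomially many subsets of the sample (VC dimension $d+1$), a union bound with the budget built into the factor $T^{2d+3}$ inside $b_{a,t}$ yields $\mathrm{P}(\mathcal{E}^c)\lesssim 1/T$. On $\mathcal{E}^c$ the regret is at most $MT$ by Assumption \ref{ass:tail}(b), contributing only $O(M)$ in expectation.

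On $\mathcal{E}$, since $A_t$ maximizes the UCB and $\hat{\eta}_{a^*,t}(\mathbf{X}_t)\ge \eta^*(\mathbf{X}_t)$ for the optimal action $a^*$, the per-step regret telescopes into the width of the chosen interval: $\eta^*(\mathbf{X}_t)-\eta_{A_t}(\mathbf{X}_t)\le \hat{\eta}_{A_t,t}(\mathbf{X}_t)-\eta_{A_t}(\mathbf{X}_t)\le 2\bigl(b_{A_t,t}(\mathbf{X}_t)+L\rho_{A_t,t}(\mathbf{X}_t)\bigr)\lesssim \sqrt{\ln T/k_{A_t,t}(\mathbf{X}_t)}$, where the last bound uses the defining balance \eqref{eq:kt} of $k$ (bias $\asymp$ variance). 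When the UCB is infinite—no neighbor of action $a$ within distance $\sqrt{\ln T}/L$—the per-step regret is at most $M$. Two complementary facts then drive the aggregation. First, a \emph{self-bounding} property: playing a suboptimal $a$ with gap $\Delta$ forces $\sqrt{\ln T/k}\gtrsim \Delta$, hence $k_{A_t,t}(\mathbf{X}_t)\lesssim \ln T/\Delta^2$, so inside any ball of radius $\approx \Delta/L$ the action $a$ can be played suboptimally at most $\approx \ln T/\Delta^2$ times before its UCB drops below $\eta^*$. Second, the total number of visits to any region is at most $\approx T$ times its probability mass.

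I would then perform a double peeling over dyadic gap levels $\Delta\approx 2^{-j}$ and density levels $\lambda\approx 2^{-m}$. For the region $R(\Delta,\lambda)=\{\mathbf{x}:\Delta_a(\mathbf{x})\approx\Delta,\ f(\mathbf{x})\approx\lambda\}$, its volume is $\mathrm{P}(R)/\lambda$ with $\mathrm{P}(R)\lesssim\min(\Delta^\alpha,\lambda^\beta)$, the two bounds coming from the margin condition (Assumption \ref{ass:basic}(a)) and the tail condition (Assumption \ref{ass:tail}(a)) respectively. Covering $R(\Delta,\lambda)$ by balls of radius $\Delta/L$ and combining the two facts above, the number of suboptimal plays of $a$ in $R(\Delta,\lambda)$ is $\lesssim\min\{L^d\,\mathrm{Vol}(R)\ln T/\Delta^{d+2},\ T\lambda\,\mathrm{Vol}(R)\}$, and its regret contribution is $\Delta$ times this quantity. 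The inner minimum encodes a phase transition at $\lambda\Delta^{d+2}\approx (\ln T)/T$: above this threshold the region is \emph{learnable} (first branch, classical bias–variance balance), below it the region is \emph{unlearnable} and every visit costs $\Delta$ (second branch). Summing the learnable branch over $(\Delta,\lambda)$, where the margin exponent $\alpha$ and the tail exponent $\beta$ interact, produces the term $T^{1-(\alpha+1)\beta/(\alpha+(d+2)\beta)}$; the unlearnable branch, dominated by densities $\lambda\approx 1/T$, produces $T^{1-\beta}$; and their sum equals $T^{1-\min\{(\alpha+1)\beta/(\alpha+(d+2)\beta),\,\beta\}}$, recovering \eqref{eq:adatail} and matching Theorem \ref{thm:mmxunbound}. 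At the exact balance $\beta=1/(d+2)$ the geometric sum over scales degenerates into a sum of equal terms, which is what contributes the extra $\ln T$ factor.

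The main obstacle is justifying the occupancy bounds under non-i.i.d. data: because the samples $\{\mathbf{X}_i:A_i=a\}$ are generated adaptively, their local density is not simply $f$, so the $k$-NN radius cannot be directly equated with the i.i.d. prediction $(k/(\lambda n))^{1/d}$. The fix is a uniform (VC-type, relative-deviation) concentration guaranteeing that for every ball the empirical count is within a constant factor of its mean once that mean exceeds $\approx \ln T$; the same event also converts the abstract ``plays per ball'' statement into the quantitative cap $\ln T/\Delta^2$. Threading this concentration simultaneously through the validity of $\mathcal{E}$ and through the occupancy counts—while keeping the union-bound budget consistent with the $T^{2d+3}$ factor in $b_{a,t}$—is the delicate part, together with verifying that the double sum over $(\Delta,\lambda)$ collapses to exactly the two stated exponents rather than a strictly larger bound.
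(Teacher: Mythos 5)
Your proposal is correct in substance, and its core mechanisms coincide with the paper's: your good event $\mathcal{E}$ is Lemmas \ref{lem:conctail} and \ref{lem:E2} (the paper counts cells of the arrangement of bisecting hyperplanes, at most $dT^{2d}$ of them, rather than invoking the VC dimension of balls --- equivalent polynomial counting, consistent with the $T^{2d+3}$ budget in $b_{a,t}$); your self-bounding cap of $\approx \ln T/\Delta^2$ plays of $a$ per ball of radius $\approx \Delta/L$ is exactly Lemma \ref{lem:n2}, proved there by the same UCB contradiction; and your mass cap is Lemma \ref{lem:qub}, $q_a(\mathbf{x})\leq Tf(\mathbf{x})$. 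Where you genuinely diverge is the aggregation. The paper works continuously: it introduces an auxiliary random variable $\mathbf{Z}$ with density $g(\mathbf{z})\propto (\eta^*(\mathbf{z})-\eta_a(\mathbf{z}))^{-d}$ restricted to the learnable region $\{\eta^*(\mathbf{x})-\eta_a(\mathbf{x})>\epsilon(\mathbf{x}),\, f(\mathbf{x})\geq 1/T\}$ with the spatially varying threshold $\epsilon(\mathbf{x})=(Tf(\mathbf{x}))^{-\frac{1}{d+2}}$ --- precisely your phase transition $\lambda\Delta^{d+2}\approx 1/T$, up to the log --- then converts the ball-occupancy bound into a global bound by change of measure (the analogue of Lemma \ref{lem:ra1new}) and evaluates the resulting integrals with the tail-moment Lemma \ref{lem:tail} and a single density cut optimized at $c=T^{-\alpha/(\alpha+(d+2)\beta)}$; the decomposition $I_1+I_2+I_3$ there plays the role of your learnable/unlearnable/deep-tail branches, with $I_3$ giving $T^{1-\beta}$ via Assumption \ref{ass:tail}(b) exactly as in your sketch. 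Your dyadic double peeling with a covering by radius-$\Delta/L$ balls computes the same quantity discretely: it is more elementary and makes the learnable/unlearnable transition vivid, at the cost of managing covering multiplicities and two nested geometric sums; the change-of-measure route avoids coverings entirely and handles the boundary case $\beta=\frac{1}{d+2}$ cleanly (one $\ln T$ from the occupancy cap $n_a(\mathbf{x})$, one from the degenerate integral $\ln(Tc)$ in Lemma \ref{lem:tail}, matching your equal-scales explanation of the extra logarithm).

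One correction to what you flag as the main obstacle: no relative-deviation, VC-type occupancy concentration is needed --- and it is fortunate none is, since under adaptive sampling the empirical law of $\{\mathbf{X}_i : A_i=a\}$ admits no two-sided comparison with $f$. The argument only ever needs upper bounds. The per-ball cap is a \emph{deterministic} consequence, under the good event, of the selection rule together with the definition \eqref{eq:kt}: once a small ball contains more than $C_1\ln T/\Delta^2$ prior samples of $a$, every $\mathbf{x}$ there has $k_{a,t}(\mathbf{x})$ large, hence confidence width below $\Delta/2$, hence $a$ cannot be selected --- no control of the kNN radius against its i.i.d.\ prediction $(k/(\lambda n))^{1/d}$ enters anywhere. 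The mass cap holds in expectation directly through the expected sample density $q_a\leq Tf$. With that simplification, your plan is the paper's proof in discretized form.
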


Compared with the minimax lower bound shown in Theorem \ref{thm:mmxunbound}, it can be found that our method achieves nearly minimax optimal regret up to a logarithm factor. Regarding this result, we have some additional remarks.
\begin{rmk}
	It can be found that with $\beta\rightarrow \infty$, the regret bound in \eqref{eq:adatail} reduces to \eqref{eq:adabound}. As discussed earlier, the case that contexts have bounded support and $f$ is bounded away from zero can be viewed as a special case with $\beta\rightarrow \infty$. 	
\end{rmk}
\begin{rmk}
	In \cite{zhao2021minimax}, it is shown that the optimal rate of the excess risk of nonparametric classification is $\tilde{O}\left(N^{-\frac{(\alpha+1)\beta}{\alpha+(d+2)\beta}}\right)$\footnote{The analysis in \cite{zhao2021minimax} is under a general smoothness assumption with parameter $p$. $p=1$ corresponds to the Lipschitz assumption (Assumption \ref{ass:basic}(c) in this paper). Therefore, here we replace the bounds in \cite{zhao2021minimax} with $p=1$.}. From \eqref{eq:adatail}, the average regret over all $T$ steps is $\tilde{O}\left(T^{-\frac{(\alpha+1)\beta}{\alpha+(d+2)\beta}}\right)$, which has the same rate as the nonparametric classification problem.
\end{rmk}

\section{Numerical Experiments}

To begin with, to validate our theoretical analysis, we run experiments using some synthesized data. We then move on to experiments with the MNIST dataset \cite{lecun1998mnist}.

\subsection{Synthesized Data}

\begin{figure}[h!]
	\begin{subfigure}{0.48\linewidth}
		\includegraphics[width=\linewidth,height=0.8\linewidth]{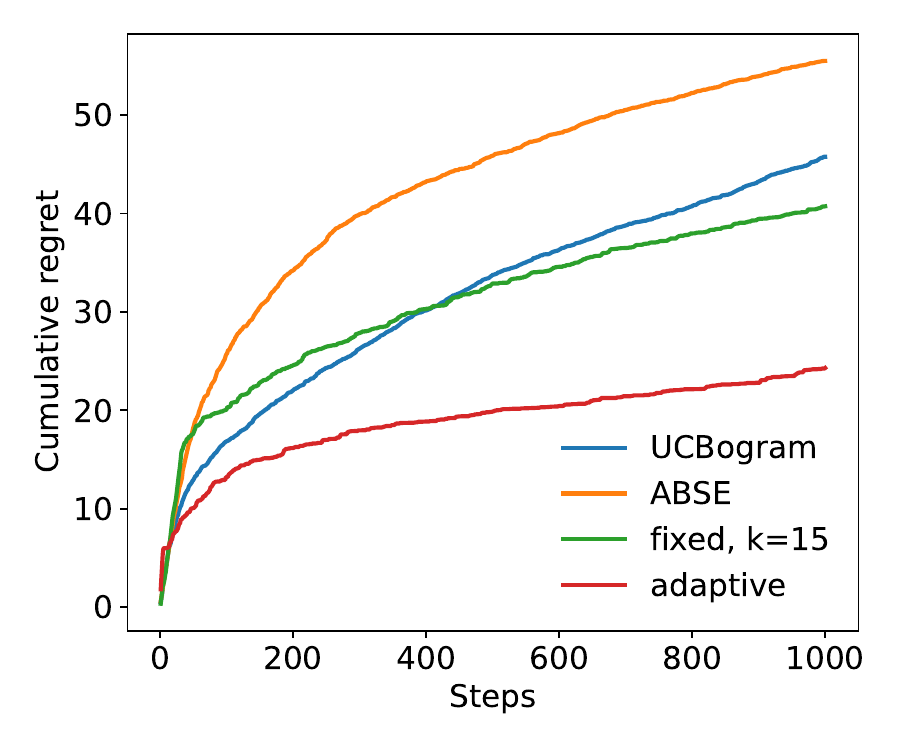}
		\caption{Uniform distribution.}
	\end{subfigure}
	\begin{subfigure}{0.48\linewidth}
		\includegraphics[width=\linewidth,height=0.8\linewidth]{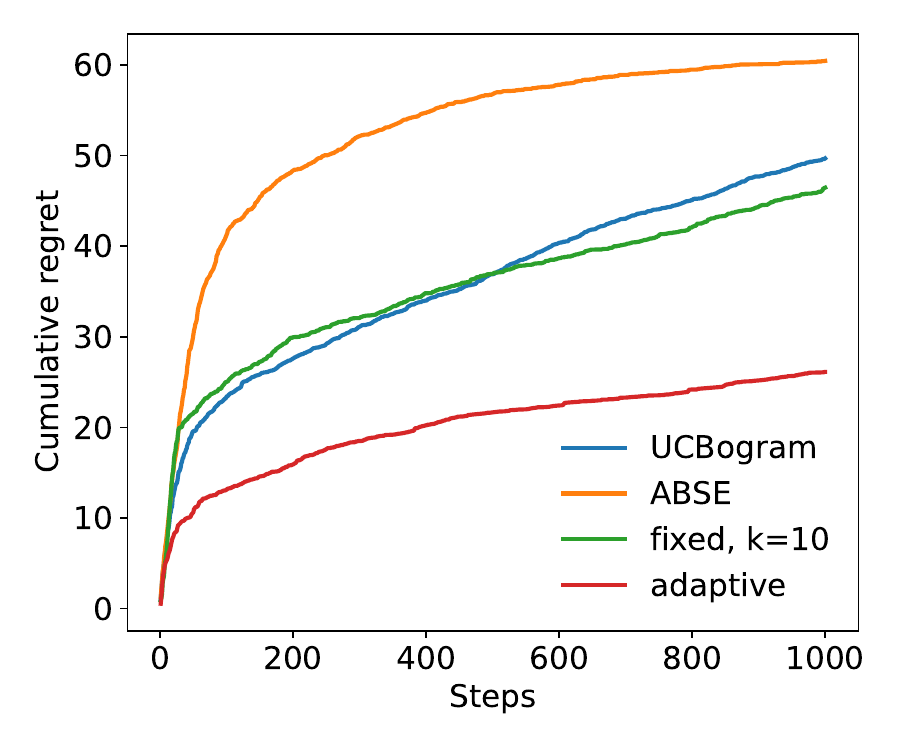}
		\caption{Gaussian distribution.}
	\end{subfigure}
	\\
	\begin{subfigure}{0.48\linewidth}
		\includegraphics[width=\linewidth,height=0.8\linewidth]{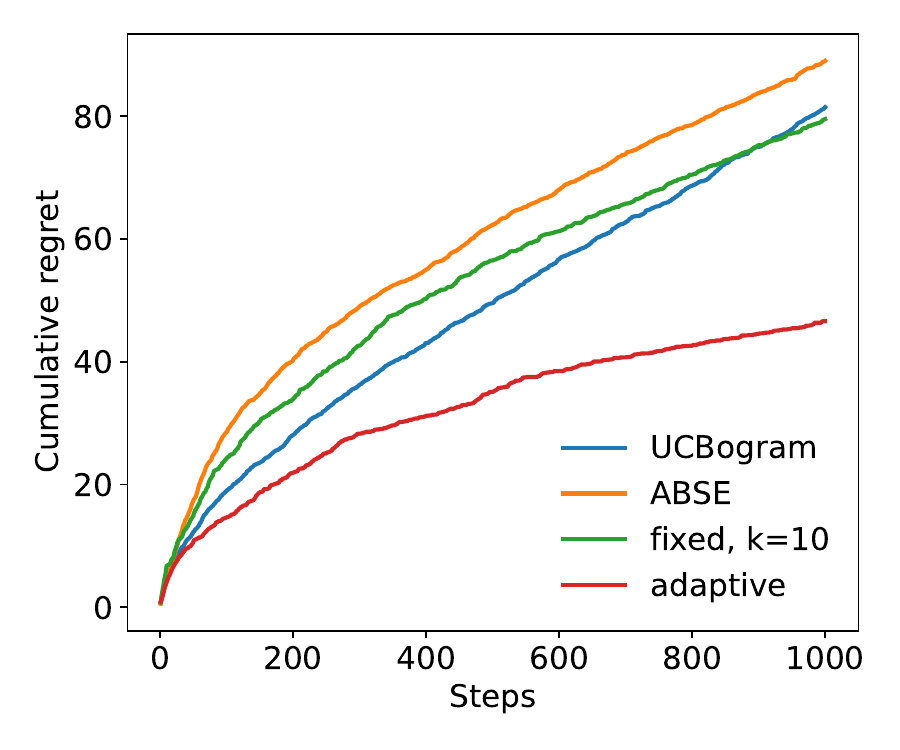}
		\caption{$t_4$ distribution.}
	\end{subfigure}
	\begin{subfigure}{0.48\linewidth}
		\includegraphics[width=\linewidth,height=0.8\linewidth]{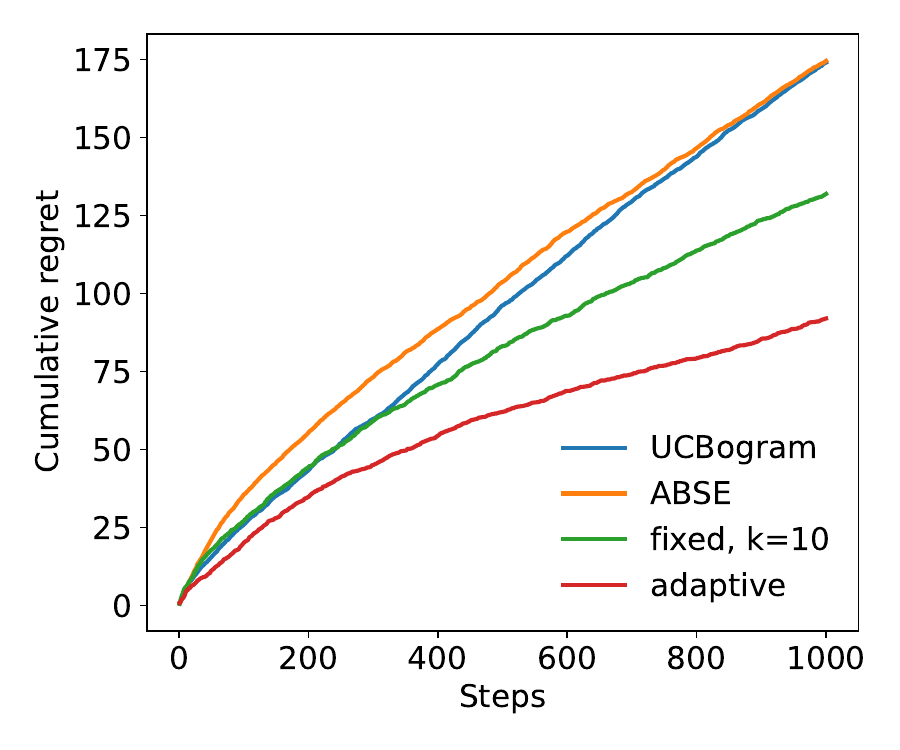}
		\caption{Cauchy distribution.}
	\end{subfigure}
	\caption{Comparison of cumulative regrets of different methods for one dimensional distributions.}\label{fig:1d}
	\vspace{-5mm}
\end{figure}
To begin with, we conduct experiments with $d=1$. In each experiment, we run $T=1,000$ steps and compare the performance of the adaptive nearest neighbor method with the UCBogram \cite{rigollet2010nonparametric}, ABSE \cite{perchet2013multi} and fixed $k$ nearest neighbor method. For a fair comparison, for UCBogram and ABSE, we try different numbers of bins and only pick the one with the best performance. The results are shown in Figure \ref{fig:1d}. In (a), (b), (c), and (d), the contexts follow uniform distribution in $[-1,1]$, standard Gaussian distribution, $t_4$ distribution, and Cauchy distribution, respectively. The uniform distribution is an example of distributions with bounded support. The Gaussian, $t_4$ and Cauchy distribution  satisfy the tail assumption (Assumption \ref{ass:tail}(a)) with $\beta = 1$, $0.8$ and $0.5$, respectively. In each experiment, there are two actions. For uniform and Gaussian distribution, we have $\eta_1(x)=x$ and $\eta_2(x)=-x$. For $t_4$ and Cauchy distribution, since they are heavy-tailed, to ensure that Assumption \ref{ass:tail}(b) is satisfied, we do not use the linear reward function. Instead, we let $\eta_1(x)=\sin(x)$ and $\eta_2(x)=\cos(x)$. To make the comparison more reliable, the values in each curve in Figure \ref{fig:1d} are averaged over $m=100$ random and independent trials.

We then run experiments for two dimensional distributions. In these experiments, the context distributions are just Cartesian products of two one dimensional distributions. The two dimensional Gaussian distribution still satisfies Assumption \ref{ass:tail}(a) with $\beta = 1$, and the two dimensional $t_4$ and Cauchy distribution satisfy Assumption \ref{ass:tail}(a) with $\beta = 2/3$ and $1/3$, respectively, which are lower than the one dimensional case. The results are shown in Figure \ref{fig:2d}.

\begin{figure}[h!]
	\begin{subfigure}{0.48\linewidth}
		\includegraphics[width=\linewidth,height=0.8\linewidth]{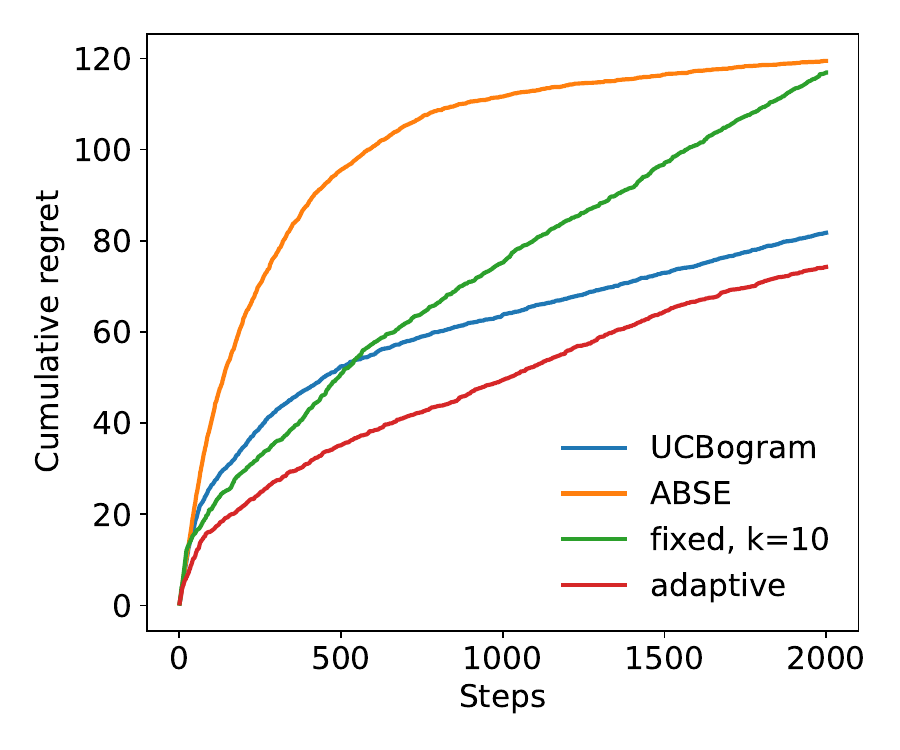}
		\caption{Uniform distribution.}
	\end{subfigure}
	\begin{subfigure}{0.48\linewidth}
		\includegraphics[width=\linewidth,height=0.8\linewidth]{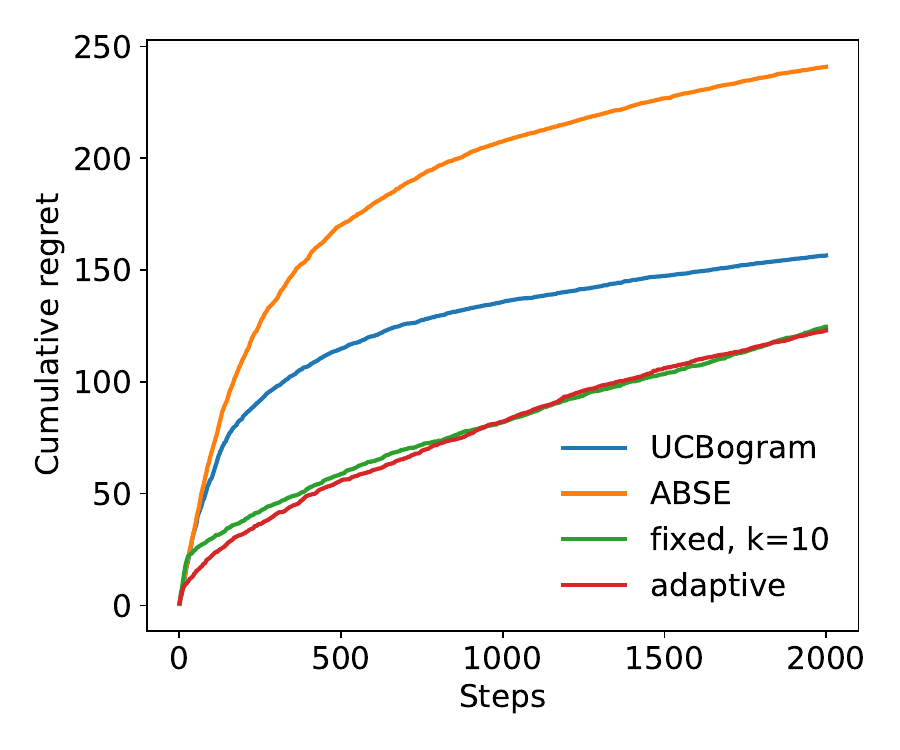}
		\caption{Gaussian distribution.}
	\end{subfigure}
	\\
	\begin{subfigure}{0.48\linewidth}
		\includegraphics[width=\linewidth,height=0.8\linewidth]{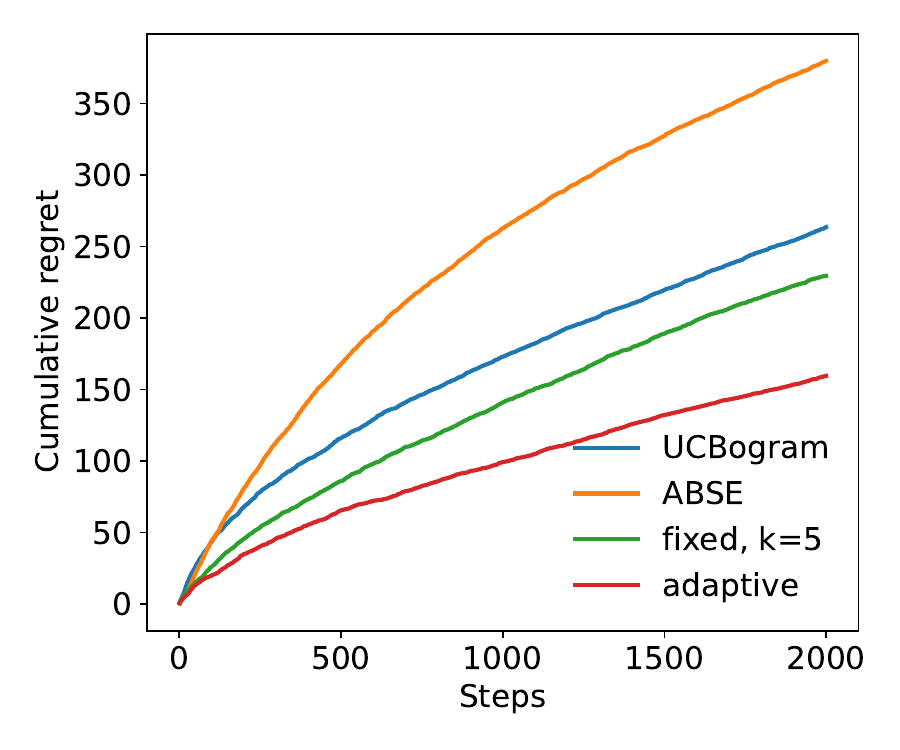}
		\caption{$t_4$ distribution.}
	\end{subfigure}
	\begin{subfigure}{0.48\linewidth}
		\includegraphics[width=\linewidth,height=0.8\linewidth]{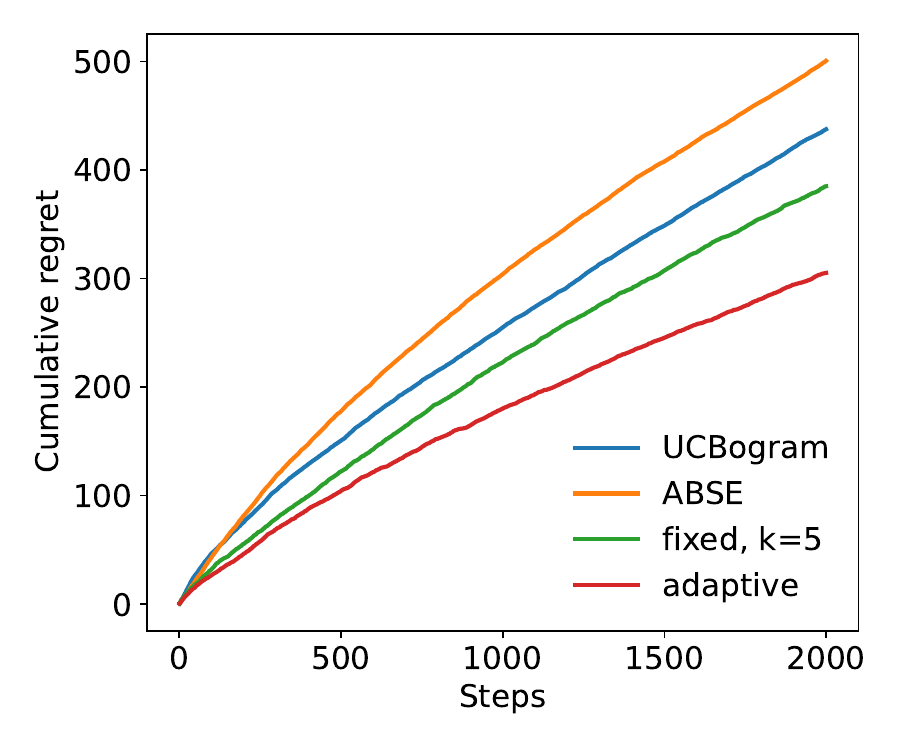}
		\caption{Cauchy distribution.}
	\end{subfigure}
	\caption{Comparison of cumulative regrets of different methods for one dimensional distributions.}\label{fig:2d}
	\vspace{-3mm}
\end{figure}

From these experiments, it can be observed that the adaptive nearest neighbor method significantly outperforms the other baselines.
\subsection{Real Data}

Now we run experiments using the MNIST dataset \cite{lecun1998mnist}, which contains $60,000$ images of handwritten digits with size $28\times 28$. Following the settings in \cite{guan2018nonparametric}, the images are regarded as contexts, and there are $10$ actions from $0$ to $9$. The reward is $1$ if the selected action equals the true label, and $0$ otherwise. The results are shown in Figure \ref{fig:mnist}. Image data have high dimensionality but low intrinsic dimensionality. Compared with bin splitting based methods \cite{rigollet2010nonparametric,perchet2013multi}, nearest neighbor methods are more adaptive to local intrinsic dimension \cite{kpotufe:nips:11}. Therefore, in this experiment, we do not compare with the bin splitting based methods.
\begin{figure}[h!]
	\centering
	\includegraphics[width=0.6\linewidth,height=0.4\linewidth]{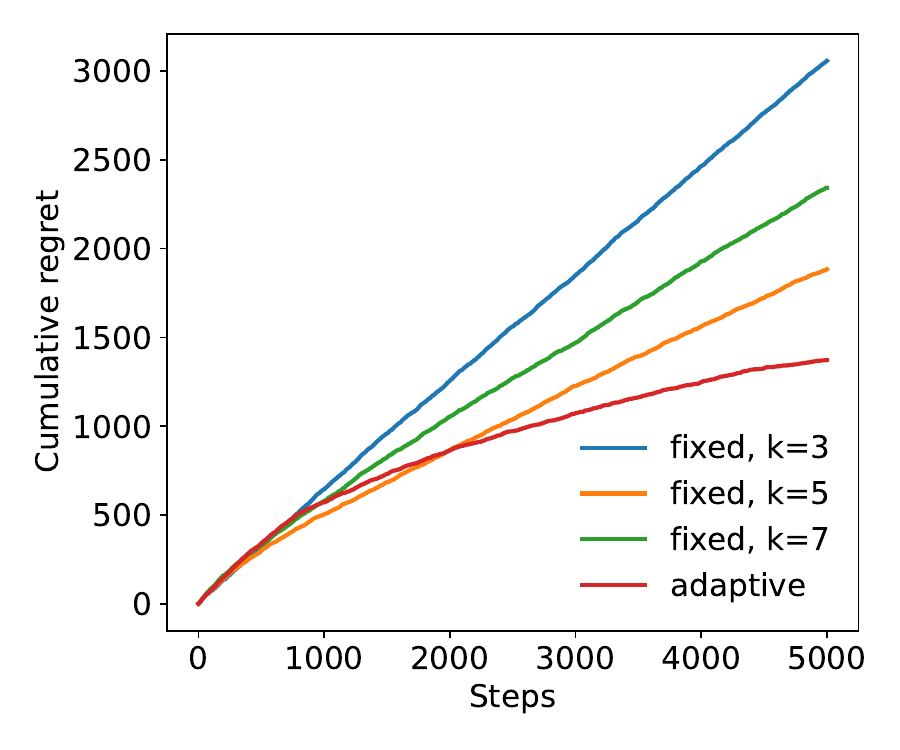}
	\caption{Cumulative regrets for MNIST dataset.}\label{fig:mnist}
	\vspace{-3mm}
\end{figure}

From Figure \ref{fig:mnist}, the adaptive kNN method performs better than the standard kNN method with various values of $k$.

\section{Conclusion}
This paper analyzes the contextual bandit problem that allows the context distribution to be heavy-tailed. To begin with, we have derived the minimax lower bound of the expected cumulative regret. We then show that the expected cumulative regret of the fixed $k$ nearest neighbor method is suboptimal compared with the minimax lower bound. To close the gap, we have proposed an adaptive nearest neighbor approach, which significantly improves the performance, and the bound of expected regret matches the minimax lower bound up to logarithm factors. Finally, we have conducted numerical experiments to validate our results.

In the future, this work can be extended in the following ways. Firstly, following existing analysis in \cite{gur2022smoothness}, it may be meaningful to design a smoothness adaptive method that can handle any H{\"o}lder smoothness parameters. Secondly, it is worth extending current work to handle dynamic regret functions. Finally, the theories and methods developed in this paper can be extended to more complicated tasks, such as reinforcement learning \cite{zhao2024minimax}.

\nocite{zhao2022analysis}

\bibliography{bandit}
\bibliographystyle{icml2025}

\newpage
\appendix
\onecolumn
\section{Examples of Heavy-tailed Distributions}\label{sec:heavytail}
This section explains Example \ref{exm:bound} and \ref{exm:moment} in the paper. For Example \ref{exm:bound},
\begin{eqnarray}
	\text{P}(f(\mathbf{X})<t)=\int_\mathcal{X} f(\mathbf{x})\mathbf{1}(f(\mathbf{x})<t)d\mathbf{x}\leq tV(\mathcal{X}).
\end{eqnarray}
For Example \ref{exm:moment}, from H{\"o}lder's inequality,
\begin{eqnarray}
	&&\int f^{1-\beta}(\mathbf{x})d\mathbf{x}=\int f^{1-\beta}(\mathbf{x})(1+\norm{\mathbf{x}}^\gamma)\frac{1}{1+\norm{\mathbf{x}}^\gamma}\nonumber\\
	&&\hspace{-9mm}\leq \left(\int f(\mathbf{x})(1+\norm{\mathbf{x}}^\gamma)^\frac{1}{1-\beta}d\mathbf{x}\right)^{1-\beta}\left(\int (1+\norm{\mathbf{x}}^\gamma)^{-\frac{1}{\beta}}d\mathbf{x}\right)^\beta.\nonumber\\
\end{eqnarray}
Let $\gamma = p(1-\beta)$, then $\left(\int f(\mathbf{x})(1+\norm{\mathbf{x}}^\gamma)^\frac{1}{1-\beta}d\mathbf{x}\right)^{1-\beta}<\infty$. If $\beta<p/(p+d)$, then $\gamma/\beta>d$, thus $\left(\int (1+\norm{\mathbf{x}}^\gamma)^{-\frac{1}{\beta}}d\mathbf{x}\right)^\beta<\infty$. Hence $\int f^{1-\beta}(\mathbf{x})d\mathbf{x}<\infty$, and
\begin{eqnarray}
	\text{P}(f(\mathbf{X})<t)=\text{P}(f^{-\beta}(\mathbf{X})>t^{-\beta})\leq t^\beta \mathbb{E}[f^{-\beta}(\mathbf{X})]=t^\beta \int f^{1-\beta}(\mathbf{x})d\mathbf{x}.
\end{eqnarray}
Therefore for all $\beta<p/(p+d)$, Assumption \ref{ass:tail}(a) holds with some finite $C_\beta$. 

For subgaussian or subexponential random variables, $\mathbb{E}[\norm{\mathbf{X}}^p]<\infty$ holds for any $p$, thus Assumption \ref{ass:tail}(a) holds for $\beta$ arbitrarily close to $1$.

\section{Expected Sample Density}

In this section, we define \emph{expected sample density}, which is then used in the later analysis. Throughout this section, denote $x(j)$ as the value of $j$-th component of vector $\mathbf{x}$.

\begin{defi}
	(expected sample density) $q_a:\mathcal{X}\rightarrow \mathbb{R}$ is defined as the function such that for all $S\subseteq \mathcal{X}$,
	\begin{eqnarray}
		\mathbb{E}\left[\sum_{t=1}^T \mathbf{1}(\mathbf{X}_t\in S, A_t=a)\right] = \int_S q_a(\mathbf{x})d\mathbf{x}.
		\label{eq:qdf}
	\end{eqnarray}
\end{defi}
To show the existence of $q_a$, define
\begin{eqnarray}
	Q_a(\mathbf{x})=\mathbb{E}\left[\sum_{t=1}^T \mathbf{1}(\mathbf{X}_t\in \{u|u(1)\leq x(1), \ldots, u(d)\leq x(d) \}, A_t=a)\right].
\end{eqnarray}
Then let
\begin{eqnarray}
	q_a(\mathbf{x})=\left.\frac{\partial^d Q_a}{\partial x(1)\ldots\partial x(d)}\right|_{x},
\end{eqnarray}
and then \eqref{eq:qdf} is satisfied for all $S\subseteq \mathcal{X}$.

Then we show the following basic lemmas.
\begin{lem}\label{lem:qub}
	Regardless of $\eta$, $q_a$ satisfies
	\begin{eqnarray}
		q_a(\mathbf{x})\leq Tf(\mathbf{x})
	\end{eqnarray}
	for almost all $\mathbf{x}\in \mathcal{X}$.
\end{lem}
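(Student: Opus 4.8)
The plan is to bound the defining integral of $q_a$ by the corresponding integral of $Tf$ over every measurable set, and then pass from this family of integral inequalities to the desired pointwise bound.

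First I would fix an arbitrary measurable $S\subseteq \mathcal{X}$ and use the trivial pointwise inequality $\mathbf{1}(\mathbf{X}_t\in S, A_t=a)\leq \mathbf{1}(\mathbf{X}_t\in S)$, valid for every $t$ and every realization. Summing over $t$ and taking expectations, the left-hand side is exactly $\int_S q_a(\mathbf{x})d\mathbf{x}$ by the definition in \eqref{eq:qdf}, while the right-hand side becomes $\sum_{t=1}^T \text{P}(\mathbf{X}_t\in S)$. Here I invoke the fact stated in the preliminaries that each context $\mathbf{X}_t$ is marginally distributed with pdf $f$, so that each term equals $\int_S f(\mathbf{x})d\mathbf{x}$. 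This gives $\int_S q_a(\mathbf{x})d\mathbf{x}\leq T\int_S f(\mathbf{x})d\mathbf{x}$ for every measurable $S$.

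Second, since this integral inequality holds for all $S$, I would invoke the standard measure-theoretic fact that $\int_S g\,d\mathbf{x}\leq \int_S h\,d\mathbf{x}$ for every $S$ forces $g\leq h$ almost everywhere: taking $S=\{\mathbf{x}\mid q_a(\mathbf{x})>Tf(\mathbf{x})\}$ would, if this set had positive Lebesgue measure, produce a strict inequality contradicting the bound. This yields $q_a(\mathbf{x})\leq Tf(\mathbf{x})$ for almost all $\mathbf{x}$, uniformly over all choices of $\eta$.

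There is essentially no serious obstacle here, as the lemma is a soft consequence of the definition. The only point requiring a moment's care is the marginal-distribution step: although the actions $A_t$ depend on the past trajectory and hence implicitly on $\eta$, dropping the constraint $A_t=a$ decouples the estimate from the policy entirely, which is precisely why the bound holds \emph{regardless of} $\eta$ as claimed. The passage from integrated to pointwise inequality is routine, and could alternatively be phrased through the Lebesgue differentiation theorem, which is already implicit in the construction of $q_a$ as the mixed partial derivative of $Q_a$.
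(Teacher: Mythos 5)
Your proposal is correct and follows essentially the same route as the paper's proof: both drop the constraint $A_t=a$ to bound $\mathbb{E}\left[\sum_{t=1}^T \mathbf{1}(\mathbf{X}_t\in S, A_t=a)\right]$ by $T\int_S f(\mathbf{x})\,d\mathbf{x}$ using the marginal pdf of $\mathbf{X}_t$, then pass from the integral inequality over all $S$ to the pointwise almost-everywhere bound. Your explicit treatment of the last measure-theoretic step (and of why the argument is policy-independent) merely spells out what the paper leaves implicit.
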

\begin{proof}
	Note that $\text{P}(\mathbf{X}_t\in S)=\int_S f(\mathbf{x}) d\mathbf{x}$. Therefore for all set $S$,
	\begin{eqnarray}
		\mathbb{E}\left[\sum_{t=1}^T \mathbf{1}(\mathbf{X}_t\in S, A_t=a)\right] \leq T\int_S f(\mathbf{x}) d\mathbf{x}.
		\label{eq:es}
	\end{eqnarray}
	From \eqref{eq:qdf} and \eqref{eq:es}, $\int_S q_a(\mathbf{x})d\mathbf{x}\leq T\int_S f(\mathbf{x}) d\mathbf{x}$ for all $S$. Therefore $q_a(\mathbf{x})\leq Tf(\mathbf{x})$ for almost all $\mathbf{x}\in \mathcal{X}$.
\end{proof}

\begin{lem}\label{lem:Ra}
	$R=\sum_{a\in \mathcal{A}} R_a$, in which $R_a$ is defined as
	\begin{eqnarray}
		R_a:=\int_\mathcal{X} (\eta^*(\mathbf{x})-\eta_a(\mathbf{x}))q_a(\mathbf{x})d\mathbf{x}.
		\label{eq:ra}
	\end{eqnarray}
\end{lem}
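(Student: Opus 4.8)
The plan is to decompose the regret according to the action actually pulled at each step, and then to recognize each per-action contribution as an integral against the occupation measure whose Lebesgue density is exactly $q_a$. First I would use the fact that exactly one action is taken at each time $t$, so that
\begin{equation}
\eta^*(\mathbf{X}_t)-\eta_{A_t}(\mathbf{X}_t)=\sum_{a\in\mathcal{A}}\mathbf{1}(A_t=a)\bigl(\eta^*(\mathbf{X}_t)-\eta_a(\mathbf{X}_t)\bigr).
\end{equation}
Substituting this into the definition \eqref{eq:R} of $R$ and exchanging the two finite sums with the expectation gives
\begin{equation}
R=\sum_{a\in\mathcal{A}}\mathbb{E}\left[\sum_{t=1}^T\mathbf{1}(A_t=a)\bigl(\eta^*(\mathbf{X}_t)-\eta_a(\mathbf{X}_t)\bigr)\right],
\end{equation}
so it suffices to identify each summand with $R_a$.

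The key step is to read the defining identity \eqref{eq:qdf} of $q_a$ as the statement that $q_a$ is the Lebesgue density of the finite occupation measure $\mu_a(S):=\mathbb{E}[\sum_{t=1}^T\mathbf{1}(\mathbf{X}_t\in S, A_t=a)]$. For an indicator integrand $g=\mathbf{1}_S$, \eqref{eq:qdf} is precisely
\begin{equation}
\mathbb{E}\left[\sum_{t=1}^T g(\mathbf{X}_t)\mathbf{1}(A_t=a)\right]=\int_\mathcal{X} g(\mathbf{x})q_a(\mathbf{x})\,d\mathbf{x}.
\end{equation}
I would then extend this identity from indicators to general nonnegative measurable $g$ by the standard measure-theoretic ladder: linearity gives it for simple functions, and monotone convergence passes it to any nonnegative measurable $g$. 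Applying the extended identity to the nonnegative integrand $g(\mathbf{x})=\eta^*(\mathbf{x})-\eta_a(\mathbf{x})$ turns the $a$-th summand above into $\int_\mathcal{X}(\eta^*(\mathbf{x})-\eta_a(\mathbf{x}))q_a(\mathbf{x})\,d\mathbf{x}=R_a$, and summing over $a\in\mathcal{A}$ then yields $R=\sum_a R_a$.

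The only technical care concerns the passage from the set-based identity \eqref{eq:qdf} to a general integrand and the interchange of the finite sum over $a$ with the expectation. Both are handled cleanly by nonnegativity of the suboptimality gap: Tonelli justifies the sum--expectation interchange, and monotone convergence justifies the extension of the density identity, so the equality $R=\sum_a R_a$ holds as an identity in $[0,\infty]$ with no cancellation issues and no boundedness assumption needed for the equality itself. The boundedness conditions (a Lipschitz bound over the bounded support under Assumption \ref{ass:bounded}, or the bound $M$ under Assumption \ref{ass:tail}(b)), together with the total mass $\mu_a(\mathcal{X})\leq T$, are only needed to guarantee that each $R_a$ is finite. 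I expect this measure-theoretic bookkeeping to be the only genuine, though entirely routine, obstacle; the action decomposition and the identification of $q_a$ as the occupation density are immediate from the definitions.
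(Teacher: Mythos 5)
Your proposal is correct and follows essentially the same route as the paper's own proof: decompose the regret by the action pulled via $\mathbf{1}(A_t=a)$, then identify each per-action term with $\int_\mathcal{X}(\eta^*(\mathbf{x})-\eta_a(\mathbf{x}))q_a(\mathbf{x})\,d\mathbf{x}$ using the defining property \eqref{eq:qdf} of $q_a$. The only difference is that you make explicit the routine measure-theoretic extension from indicator sets to general nonnegative integrands (simple functions plus monotone convergence, Tonelli for the interchange), which the paper treats as immediate.
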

\begin{proof}
	\begin{eqnarray}
		R &=& \mathbb{E}\left[\sum_{t=1}^T \left(\eta^*(\mathbf{X}_t)-\eta_{A_t}(\mathbf{X}_t)\right) \right]\nonumber\\
		&=&\sum_{a\in \mathcal{A}}\mathbb{E}\left[\sum_{t=1}^T (\eta^*(\mathbf{X}_t)-\eta_a(\mathbf{X}_t))\mathbf{1}(A_t=a)\right]\nonumber\\
		&=&\sum_{a\in \mathcal{A}} \int_\mathcal{X} \left(\eta^*(\mathbf{x})-\eta_a(\mathbf{x})\right) q_a(\mathbf{x}) d\mathbf{x}.
	\end{eqnarray}	
	The proof is complete.
\end{proof}
\section{Proof of Theorem \ref{thm:mmxbound}}\label{sec:mmxbound}
Recall that 
\begin{eqnarray}
	R=\mathbb{E}\left[\sum_{t=1}^T (\eta^*(\mathbf{X}_t)-\eta_{A_t}(\mathbf{X}_t))\right].
\end{eqnarray}
Now we define
\begin{eqnarray}
	S=\mathbb{E}\left[\sum_{t=1}^T \mathbf{1}(\eta_{A_t}(\mathbf{X}_t)<\eta^*(\mathbf{X}_t))\right]
\end{eqnarray}
as the expected number of steps with suboptimal actions. 

The following lemma characterizes the relationship between $S$ and $R$.
\begin{lem}\label{lem:rs}
	There exists a constant $C_0$, such that
	\begin{eqnarray}
		R\geq C_0 S^\frac{\alpha+1}{\alpha} T^{-\frac{1}{\alpha}}.
	\end{eqnarray}
\end{lem}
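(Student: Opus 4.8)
The plan is to exploit the Tsybakov margin condition (Assumption \ref{ass:basic}(a)) to trade off the \emph{magnitude} of the suboptimality gaps against their \emph{count}. Write $\Delta_t = \eta^*(\mathbf{X}_t) - \eta_{A_t}(\mathbf{X}_t) \geq 0$ for the instantaneous regret at step $t$, so that $R = \mathbb{E}\left[\sum_{t=1}^T \Delta_t\right]$ and $S = \mathbb{E}\left[\sum_{t=1}^T \mathbf{1}(\Delta_t > 0)\right]$. The first step is a truncation at an arbitrary threshold $u>0$:
\[
R \;\geq\; \mathbb{E}\left[\sum_{t=1}^T \Delta_t\, \mathbf{1}(\Delta_t \geq u)\right] \;\geq\; u\, \mathbb{E}\left[\sum_{t=1}^T \mathbf{1}(\Delta_t \geq u)\right].
\]
Splitting $\mathbf{1}(\Delta_t > 0) = \mathbf{1}(\Delta_t \geq u) + \mathbf{1}(0 < \Delta_t < u)$ (the two events are disjoint with union $\{\Delta_t>0\}$), I would rewrite the surviving expectation as $S - \mathbb{E}\left[\sum_t \mathbf{1}(0 < \Delta_t < u)\right]$.

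The second step controls the number of near-optimal-but-suboptimal steps through the margin condition. Since $A_t$ is itself a context- and policy-dependent random action, I would pass to a union bound: for each $t$, $\mathbf{1}(0 < \Delta_t < u) \leq \sum_{a\in\mathcal{A}} \mathbf{1}(0 < \eta^*(\mathbf{X}_t) - \eta_a(\mathbf{X}_t) < u)$. Because contexts are drawn i.i.d.\ from $f$ irrespective of the policy, $\mathbf{X}_t$ has marginal law $f$, so Assumption \ref{ass:basic}(a) gives $\text{P}(0 < \eta^*(\mathbf{X}_t) - \eta_a(\mathbf{X}_t) < u) \leq C_\alpha u^\alpha$ for each fixed $a$. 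Summing over the finitely many actions and over $t$ yields $\mathbb{E}\left[\sum_t \mathbf{1}(0 < \Delta_t < u)\right] \leq T|\mathcal{A}| C_\alpha u^\alpha$, and therefore
\[
R \;\geq\; u\left(S - T|\mathcal{A}| C_\alpha u^\alpha\right) \;=\; uS - T|\mathcal{A}| C_\alpha u^{\alpha+1}.
\]

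Finally, I would optimize the free parameter. The right-hand side is concave in $u$ and maximized at $u^* = \left(S/(T|\mathcal{A}| C_\alpha (\alpha+1))\right)^{1/\alpha}$; at this point the subtracted term equals $u^* S/(\alpha+1)$, leaving $R \geq \tfrac{\alpha}{\alpha+1}\,u^* S$, which simplifies to the claimed $R \geq C_0\, S^{(\alpha+1)/\alpha} T^{-1/\alpha}$ with $C_0 = \tfrac{\alpha}{\alpha+1}\bigl(|\mathcal{A}| C_\alpha (\alpha+1)\bigr)^{-1/\alpha}$. A quick sanity check confirms the bracket stays nonnegative at the optimum, since $T|\mathcal{A}| C_\alpha (u^*)^\alpha = S/(\alpha+1) \leq S$.

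The main obstacle here is conceptual rather than computational: correctly invoking the margin condition when $A_t$ is random and depends on both $\mathbf{X}_t$ and the history. The union bound over $\mathcal{A}$ resolves this cleanly, and since the minimax construction uses only $|\mathcal{A}|=2$, the factor $|\mathcal{A}|$ is harmless and is absorbed into $C_0$. One should also note that the statement is meaningful only for $\alpha>0$ (for $\alpha=0$ the exponent $(\alpha+1)/\alpha$ degenerates), consistent with the discussion following Assumption \ref{ass:basic}.
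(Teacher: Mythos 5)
Your proof is correct and follows essentially the same route as the paper's (which itself follows Lemma 3.1 of Rigollet--Zeevi): truncate the instantaneous regret at a threshold, bound the count of small-gap suboptimal steps via the Tsybakov margin condition, and optimize the threshold, arriving at the same optimizer and the same final constant structure. Your only deviation is the explicit union bound over actions yielding the (harmless, absorbable) factor $|\mathcal{A}|$, which in fact handles the randomness of $A_t$ more carefully than the paper's version, where the per-action margin condition is applied to $\eta_{A_t}$ without comment.
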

\begin{proof}
	The proof of Lemma \ref{lem:rs} follows the proof of Lemma 3.1 in \cite{rigollet2010nonparametric}. For completeness and consistency of notations, we show the proof in Appendix \ref{sec:rs}.
\end{proof}

From now on, we only discuss the case with only two actions, such that $\mathcal{A}=\{1, -1\}$. Construct $B$ disjoint balls with centers $\mathbf{c}_1,\ldots, \mathbf{c}_B$ and radius $h$. Let
\begin{eqnarray}
	f(\mathbf{x})=\sum_{j=1}^B \mathbf{1}(\mathbf{x}\in B_j),	
\end{eqnarray} 
in which $B_j=\{\mathbf{x}'|\norm{\mathbf{x}'-\mathbf{c}_j}\leq h\}$ is the $j$-th ball. To ensure that the pdf defined above is normalized (i.e. $\int f(\mathbf{x})d\mathbf{x} = 1$), $B$ and $h$ need to satisfy
\begin{eqnarray}
	Bv_dh^d=1,
\end{eqnarray}
in which $v_d$ is the volume of $d$ dimensional unit ball. 

Let $\eta_1(\mathbf{x})=\eta(\mathbf{x})$ and $\eta_2(\mathbf{x})=0$, with
\begin{eqnarray}
	\eta(\mathbf{x})=\sum_{j=1}^K v_jh\mathbf{1}(\mathbf{x}\in B(c_j, h)),
\end{eqnarray}
in which $v_j\in \{-1,1\}$ for $j=1,\ldots, K$.
To satisfy the margin assumption (Assumption \ref{ass:basic}(a)), note that
\begin{eqnarray}
	\text{P}(0<|\eta(\mathbf{X})|\leq t)\leq \left\{
	\begin{array}{ccc}
		Kv_dh^d &\text{if} & t\geq h\\
		0 &\text{if} & t<h.
	\end{array}
	\right.
\end{eqnarray}
Note that for any suboptimal action $a$, $\eta^*(\mathbf{x})-\eta_a(\mathbf{x})=|\eta(\mathbf{x})|$. Assumption \ref{ass:basic}(a) requires that $\text{P}(0<|\eta(\mathbf{X})|\leq t)\leq C_\alpha t^\alpha$. Therefore, it suffices to ensure that
\begin{eqnarray}
	Kv_dh^d= C_\alpha h^\alpha.
	\label{eq:Kh}
\end{eqnarray}
Then
\begin{eqnarray}
	S&=&\sum_{j=1}^K \sum_{t=1}^T \text{P}(\mathbf{X}_t\in B_j, A_t\neq a^*(\mathbf{X}_t))\nonumber\\
	&\geq & \sum_{j=1}^K \sum_{t=1}^T \int_{B_j} f(\mathbf{x})\text{P}(A_t\neq a^*(\mathbf{x})|\mathbf{X}_t=x)d\mathbf{x}\nonumber\\
	&=&\sum_{j=1}^K \sum_{t=1}^T \int_{B_j} f(\mathbf{x})\text{P}(A_t\neq v_j|\mathbf{X}_t=x) d\mathbf{x}\nonumber\\
	&=&\sum_{j=1}^k \sum_{t=1}^T \mathbb{E}\left[\int_{B_j} f(\mathbf{x}) \mathbf{1}(\pi(x;\mathbf{X}_{1:t-1}, Y_{1:t-1})\neq v_j)d\mathbf{x}\right].
	\label{eq:Sexp}
\end{eqnarray}
Define
\begin{eqnarray}
	\hat{v}_j(t)=\sign\left(\int_{B_j} f(\mathbf{x})\pi(x;\mathbf{X}_{1:t-1}, Y_{1:t-1})d\mathbf{x}\right).
\end{eqnarray}
Then
\begin{eqnarray}
	\int_{B_j}  f(\mathbf{x})\mathbf{1}(\pi(x;\mathbf{X}_{1:t-1}, Y_{1:t-1})-\hat{v}_j(t))d\mathbf{x} \geq \int_{B_j} f(\mathbf{x})\mathbf{1}\left(\pi(x;\mathbf{X}_{1:t-1}, Y_{1:t-1})=-\hat{v}_j(t)\right) d\mathbf{x}.
\end{eqnarray}
Since
\begin{eqnarray}
	&&\int_{B_j}f(\mathbf{x})\mathbf{1}\left(\pi(x;\mathbf{X}_{1:t-1}, Y_{1:t-1})-\hat{v}_j(t)\right)d\mathbf{x}+\int_{B_j} f(\mathbf{x})\mathbf{1}\left(\pi(x;\mathbf{X}_{1:t-1}, Y_{1:t-1})=-\hat{v}_j(t)\right) d\mathbf{x}\nonumber\\
	&&=\int_{B_j} f(\mathbf{x})d\mathbf{x},
\end{eqnarray}
we have
\begin{eqnarray}
	\int_{B_j} f(\mathbf{x})\mathbf{1}\left(\pi(x;\mathbf{X}_{1:t-1}, Y_{1:t-1})=\hat{v}_j(t)\right)d\mathbf{x} \geq \frac{1}{2}\int_{B_j} f(\mathbf{x}) d\mathbf{x}. 
\end{eqnarray}
If $\hat{v}_j(t)\neq v_j$, then
\begin{eqnarray}
	\int_{B_j} f(\mathbf{x})\mathbf{1}\left(\pi(x; \mathbf{X}_{1:t-1}, Y_{1:t-1})\neq v_j\right) d\mathbf{x}\geq \frac{1}{2} \int_{B_j} f(\mathbf{x}) d\mathbf{x}.
\end{eqnarray}
Therefore, from \eqref{eq:Sexp},
\begin{eqnarray}
	S&\geq& \sum_{j=1}^K \sum_{t=1}^T \frac{1}{2}\text{P}(\hat{v}_j(t)\neq v_j)\int_{B_j} f(\mathbf{x}) d\mathbf{x}\nonumber\\
	&\geq & \sum_{j=1}^K \sum_{t=1}^T \frac{1}{2}v_dh^d\text{P}(\hat{v}_j(t)\neq v_j).
	\label{eq:Slb}
\end{eqnarray}
Note that the error probability of hypothesis testing between distance $p$ and $q$ is at least $(1-\mathbb{TV}(p,q))/2$, in which $\mathbb{TV}$ denotes the total variation distance. Let $(V_1,\ldots, V_K)$ be a vector of $K$ random variables taking values from $\{-1,1\}^K$ randomly. In other words, $\text{P}(V_j=1)=\text{P}(V_j=-1)=1/2$, and $V_j$ for different $j$ are i.i.d. Denote $\mathbb{P}_{XY|V_j=v_j}$ as the distribution of $X$ and $Y$ conditional on $V_j=v_j$. Moreover, $\mathbb{P}_{XY|V_j=v_j}^{t-1}$ means the distribution of $X$ and $Y$ of the first $t-1$ samples conditional on $V_j=v_j$. Then
\begin{eqnarray}
	\text{P}(\hat{v}_j(t)\neq v_j)&\geq& \frac{1}{2}\left(1-\mathbb{TV}\left(\mathbb{P}_{XY|v_j=1}^{t-1}||\mathbb{P}_{XY|V_j=-1}^{t-1}\right)\right)\nonumber\\
	&\geq & \frac{1}{2}\left(1-\sqrt{\frac{1}{2}D\left(\mathbb{P}_{XY|V_j=1}^{t-1}||\mathbb{P}_{XY|V_j=-1}^{t-1}\right)}\right),
	\label{eq:errprob}
\end{eqnarray}
in which the second step uses Pinsker's inequality \cite{fedotov2003refinements}, and $D(p||q)$ denotes the Kullback-Leibler (KL) divergence between distributions $p$ and $q$. Note that the KL divergence between the conditional distribution is bounded by
\begin{eqnarray}
	D(\mathbb{P}_{Y|X, V_j=1}||\mathbb{P}_{Y|X, V_j=-1})\leq \frac{1}{2} (\eta_1(\mathbf{x})-\eta_2(\mathbf{x}))^2\leq \frac{1}{2}\eta^2(\mathbf{x})=\frac{1}{2}h^2
\end{eqnarray}
for $\mathbf{x}\in B_j$. Therefore
\begin{eqnarray}
	D(\mathbb{P}_{XY|V_j=1}||\mathbb{P}_{XY|V_j=-1}) &=& \int f(\mathbf{x})D(\mathbb{P}_{Y|X, V_j=1}||\mathbb{P}_{Y|X, V_j=-1})d\mathbf{x}\nonumber\\
	&\leq & \int_{B_j} \frac{1}{2}h^2 d\mathbf{x}\nonumber\\
	&=&\frac{1}{2} v_d h^{d+2}.
\end{eqnarray}
Hence, from \eqref{eq:errprob},
\begin{eqnarray}
	\text{P}(\hat{v}_j(t)\neq v_j)&\geq& \frac{1}{2}\left(1-\sqrt{\frac{1}{4}(t-1)v_dh^{d+2}}\right) \nonumber\\
	&\geq & \frac{1}{2}\left(1-\frac{1}{2} \sqrt{Tv_d h^{d+2}}\right).
\end{eqnarray}
Recall \eqref{eq:Slb},
\begin{eqnarray}
	S&\geq & \frac{1}{4}\sum_{j=1}^K \sum_{t=1}^T v_dh^d \left(1-\frac{1}{2}\sqrt{Tv_dh^{d+2}}\right)\nonumber\\
	&=& \frac{1}{4}KTv_dh^d\left(1-\frac{1}{2}\sqrt{Tv_dh^{d+2}}\right)\nonumber\\
	&=&\frac{1}{4}C_\alpha Th^\alpha\left(1-\frac{1}{2}\sqrt{Tv_dh^{d+2}}\right),
\end{eqnarray}
in which the last step comes from \eqref{eq:Kh}. 

Let $h\sim T^{-\frac{1}{d+2}}$, then 
\begin{eqnarray}
	S\gtrsim T^{1-\frac{\alpha}{d+2}}.
\end{eqnarray}
From Lemma \ref{lem:rs}, 
\begin{eqnarray}
	R &\gtrsim & T^{\left(1-\frac{\alpha}{d+2}\right)\frac{\alpha+1}{\alpha}} T^{-\frac{1}{\alpha}} \sim T^{1-\frac{\alpha+1}{d+2}}\nonumber\\
	&\sim & T^{1-\frac{\alpha+1}{d+2}}.
\end{eqnarray}

\section{Proof of Theorem \ref{thm:mmxunbound}}\label{sec:mmxunbound}

In this section, we derive the minimax lower bound of the expected regret with unbounded support. Recall that in the case with bounded support of contexts (Proof of Theorem \ref{thm:mmxbound} in Appendix \ref{sec:mmxbound}), we construct $B$ disjoint balls with pdf $f(\mathbf{x})=1$ for all $\mathbf{x}\in \mathcal{X}$. Now for the case with unbounded support, the distribution of context has tails, on which the pdf $f(\mathbf{x})$ is small. Therefore, we modify the construction of balls as follows. We now construct $B+1$ disjoint balls with center $\mathbf{c}_0, \ldots, \mathbf{c}_B$, such that
\begin{eqnarray}
	B_0&=&\{\mathbf{x}'|\norm{\mathbf{x}'-\mathbf{c}_0}\leq r_0 \},\\
	B_j&=&\{\mathbf{x}'|\norm{\mathbf{x}'-\mathbf{c}_j}\leq h\}.
\end{eqnarray}
Let
\begin{eqnarray}
	\eta(\mathbf{x})=\sum_{j=1}^K v_j h\mathbf{1}(\mathbf{x}\in B(\mathbf{c}_j, h)),
	\label{eq:etatail}
\end{eqnarray}
in which $v_j\in \{0,1\}$ is unknown, and
\begin{eqnarray}
	f(\mathbf{x})=\mathbf{1}(\mathbf{x}\in B_0)+\sum_{j=1}^B m\mathbf{1}(\mathbf{x}\in B_j),
\end{eqnarray}
in which $m\ll 1$ will be determined later. Here we construct one ball that denotes the center region which has the most of probability mass, as well as $B$ balls that denotes the tail region. For simplicity, we let $\eta(\mathbf{x})=0$ at the largest ball $B_0$, and only 

To satisfy the margin condition (i.e. Assumption \ref{ass:basic}(a)), note that now
\begin{eqnarray}
	\text{P}(0<|\eta(\mathbf{X})|<t) &\leq & \left\{
	\begin{array}{ccc}
		mKv_dh^d &\text{if} & t>h\\
		0 &\text{if} & t\leq h.
	\end{array}
	\right.
	\label{eq:etaprob}
\end{eqnarray}
The right hand side of \eqref{eq:etaprob} can not exceed $C_\alpha t^\alpha$, which requires
\begin{eqnarray}
	mKv_dh^d\leq C_\alpha h^\alpha.
	\label{eq:cond1}
\end{eqnarray}
Moreover, to satisfy the tail assumption (Assumption \ref{ass:tail}(a)), note that
\begin{eqnarray}
	\text{P}(f(\mathbf{X})<t)&\leq & \left\{
	\begin{array}{ccc}
		mKv_dh^d &\text{if} & t>m\\
		0 &\text{if} & t\leq m.
	\end{array}
	\right.
	\label{eq:fprob}
\end{eqnarray}
The right hand side of \eqref{eq:fprob} can not exceed $C_\beta t^\beta$, which requires
\begin{eqnarray}
	mKv_dh^d \lesssim C_\beta m^\beta.
	\label{eq:cond2}
\end{eqnarray}
Following \eqref{eq:Slb}, $S$ can be lower bounded by
\begin{eqnarray}
	S&\geq & \sum_{j=1}^K \sum_{t=1}^T \frac{1}{2}\text{P}(\hat{v}_j(t)\neq v_j)\int_{B_j} f(\mathbf{x}) d\mathbf{x}\nonumber\\
	&=& \sum_{j=1}^K \sum_{t=1}^T \frac{1}{2}mv_dh^d\text{P}(\hat{v}_j(t)\neq v_j)\nonumber\\
	&\geq & \frac{1}{4}\sum_{j=1}^K \sum_{t=1}^T mv_dh^d \left(1-\sqrt{\frac{1}{2}D(\mathbb{P}_{XY|V_j=1}||\mathbb{P}_{XY|V_j=-1})}\right)\nonumber\\
	&\geq & \frac{1}{4} \sum_{j=1}^K \sum_{t=1}^T mv_dh^d \left(1-\frac{1}{2}\sqrt{Tmv_dh^{d+2}}\right).
	\label{eq:Slbtail}
\end{eqnarray}
From \eqref{eq:Slbtail}, we pick $m$ and $h$ to ensure that
\begin{eqnarray}
	Tmv_dh^{d+2}<\frac{1}{2}.
	\label{eq:cond3}
\end{eqnarray}
Then under three conditions \eqref{eq:cond1}, \eqref{eq:cond2} and \eqref{eq:cond3},
\begin{eqnarray}
	S\gtrsim KTmh^d.
\end{eqnarray}
It remains to determine the value of $m$, $h$ and $K$ based on these three conditions. Let
\begin{eqnarray}
	h\sim (Tm)^{-\frac{1}{d+2}},
\end{eqnarray}
\begin{eqnarray}
	m\sim T^{-\frac{\alpha}{\alpha+\beta(d+2)}},
\end{eqnarray}
and
\begin{eqnarray}
	K\sim h^{\alpha-d}/m,
\end{eqnarray}
then
\begin{eqnarray}
	S\gtrsim Th^\alpha \sim T^{1-\frac{\alpha \beta}{\alpha+\beta(d+2)}}.
\end{eqnarray}
Based on Lemma \ref{lem:rs},
\begin{eqnarray}
	R\gtrsim S^\frac{1+\alpha}{\alpha} T^{-\frac{1}{\alpha}}\sim T^{1-\frac{(\alpha+1)\beta}{\alpha+\beta(d+2)}}.
	\label{eq:Rlb1}
\end{eqnarray}
It remains to show that $R\gtrsim T^{1-\beta}$. Let $h\sim 1$, $K\sim T^{1-\beta}$ and $m\sim 1/T$, the conditions \eqref{eq:cond1}, \eqref{eq:cond2} and \eqref{eq:cond3} are still satisfied. In this case,
\begin{eqnarray}
	S\gtrsim T^{1-\beta}.
\end{eqnarray}
Direct transformation using Lemma \ref{lem:rs} yields suboptimal bound. Intuitively, for the case with heavy tails (i.e. $\beta$ is small), the regret mainly occur at the tail of the context distribution. Therefore, we bound the expected regret again.
\begin{eqnarray}
	R&=&\mathbb{E}\left[\sum_{t=1}^T (\eta^*(\mathbf{X}_t)-\eta_{A_t}(\mathbf{X}_t))\right]\nonumber\\
	&\overset{(a)}{=}&\mathbb{E}\left[\sum_{t=1}^T h \mathbf{1}\left(\eta_{A_t}(\mathbf{X}_t)<\eta^*(\mathbf{X}_t)\right)\right]\nonumber\\
	&\overset{(b)}=& S\gtrsim T^{1-\beta}.
	\label{eq:Rlb2}
\end{eqnarray}
(a) comes from the construction of $\eta$ in \eqref{eq:etatail}. (b) holds since we set $h=1$ here.

Combine \eqref{eq:Rlb1} and \eqref{eq:Rlb2},
\begin{eqnarray}
	R\gtrsim T^{1-\frac{(\alpha+1)\beta}{\alpha+\beta(d+2)}}+T^{1-\beta}.
\end{eqnarray}

\section{Proof of Theorem \ref{thm:fixed}}\label{sec:fixedpf}

To begin with, we show the following lemma.
\begin{lem}\label{lem:conc}
	For all $u>0$,
	\begin{eqnarray}
		\text{P}\left(\underset{x,a}{\sup}\left|\frac{1}{k}\sum_{i\in \mathcal{N}_t(\mathbf{x}, a)} W_i\right|>u\right) \leq dT^{2d} |\mathcal{A}| e^{-\frac{ku^2}{2\sigma^2}}.
	\end{eqnarray}
\end{lem}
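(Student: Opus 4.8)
The plan is to turn the supremum over the infinite context space $\mathcal{X}$ into a supremum over a finite, polynomially-large family of index sets, and then apply a standard sub-Gaussian tail bound to each member of that family. First I would remove the action index by a union bound, so that it suffices to control, for each fixed $a\in\mathcal{A}$, the quantity $\sup_{\mathbf{x}}\left|\frac{1}{k}\sum_{i\in\mathcal{N}_t(\mathbf{x},a)}W_i\right|$; this accounts for the factor $|\mathcal{A}|$.

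The key geometric observation is that, once the contexts $\mathbf{X}_{1:t-1}$ are fixed, the map $\mathbf{x}\mapsto\mathcal{N}_t(\mathbf{x},a)$ takes only finitely many values. The relative order of $\|\mathbf{x}-\mathbf{X}_i\|$ and $\|\mathbf{x}-\mathbf{X}_j\|$ flips only when $\mathbf{x}$ crosses the perpendicular bisector hyperplane of $\mathbf{X}_i$ and $\mathbf{X}_j$. There are at most $\binom{t-1}{2}\le T^2$ such hyperplanes, and an arrangement of $m$ hyperplanes in $\mathbb{R}^d$ cuts the space into at most $\sum_{j=0}^d\binom{m}{j}$ cells. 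Within a single cell the full ordering of all the distances $\|\mathbf{x}-\mathbf{X}_i\|$ is constant, so the $k$ nearest neighbours among $\{i<t:A_i=a\}$ are also constant. Bounding the cell count then shows that $\{\mathcal{N}_t(\mathbf{x},a):\mathbf{x}\in\mathcal{X}\}$ contains at most $dT^{2d}$ distinct index sets, the extra factor $d$ and the two-sided tail being absorbed into this polynomial count.

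For each fixed index set $S$ with $|S|=k$, the average $\frac{1}{k}\sum_{i\in S}W_i$ is a normalized sum of $k$ mean-zero $\sigma^2$-sub-Gaussian increments (Assumption \ref{ass:basic}(b)), hence $\sigma^2/k$-sub-Gaussian, so a two-sided Chernoff bound gives $\text{P}\left(\left|\frac{1}{k}\sum_{i\in S}W_i\right|>u\right)\le 2e^{-ku^2/(2\sigma^2)}$. A union bound over the at most $dT^{2d}$ cells and the $|\mathcal{A}|$ actions then yields the claimed inequality.

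The hard part is that the neighbour sets $\mathcal{N}_t(\mathbf{x},a)$, and in particular the pool $\{i<t:A_i=a\}$ from which they are drawn, are \emph{not} independent of the noise: each $A_i$ is chosen using the past rewards $Y_1,\dots,Y_{i-1}$ and therefore depends on $W_1,\dots,W_{i-1}$, so the membership indicator $\mathbf{1}(i\in\mathcal{N}_t(\mathbf{x},a))$ is determined by noise variables with indices both below \emph{and} above $i$. The weights multiplying the $W_i$ are thus not predictable, and the finitely many realizable subsets cannot naively be treated as fixed when the sub-Gaussian bound is invoked. I would resolve this by conditioning on $\mathbf{X}_{1:t-1}$, which fixes the cell geometry while, using $\mathbb{E}[W_t\mid\mathbf{X}_{1:t},A_{1:t}]=0$ together with Assumption \ref{ass:basic}(b), leaves $(W_i)$ a conditionally $\sigma^2$-sub-Gaussian martingale-difference sequence. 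The delicate point is to argue that each per-subset tail bound survives this conditioning; a convenient device is to index the plays of arm $a$ by their play times $\tau_1<\tau_2<\cdots$, observe that $(W_{\tau_j})_j$ is a sub-Gaussian martingale-difference sequence since $W_{\tau_j}$ is revealed only after the decision to play $a$ at time $\tau_j$, and then combine the geometric cell count with this sequential structure so that the union bound is applied to quantities measurable with respect to the information available when each noise term is revealed. This reconciliation of the combinatorial reduction with the adaptive, noise-dependent selection of neighbours is the step requiring the most care.
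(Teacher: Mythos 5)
Your proposal is correct and follows essentially the same route as the paper's proof of Lemma \ref{lem:conc}: a Chernoff bound for the normalized noise sum over any fixed index set of size $k$ (giving $e^{-ku^2/(2\sigma^2)}$), combined with the observation that the perpendicular-bisector hyperplanes of the at most $T(T-1)/2$ pairs $(\mathbf{X}_i,\mathbf{X}_j)$ partition $\mathbb{R}^d$ into at most $dT^{2d}$ cells on each of which $\mathcal{N}_t(\mathbf{x},a)$ is constant, followed by a union bound over cells and actions, with the factor $2$ from the two-sided tail absorbed by the slack in the cell count exactly as you note. The one place you go beyond the paper is the adaptivity concern---that membership in $\mathcal{N}_t(\mathbf{x},a)$ depends on the noise through the arm choices, so the realizable neighbor sets are random rather than fixed; the paper's proof does not address this at all and simply applies the fixed-set bound to the cell-determined sets, so your conditioning/martingale sketch is a refinement of, rather than a deviation from, the published argument (and, as you yourself concede, making that step fully rigorous---the $k$ nearest plays of $a$ are selected by distance rank, not by play order, so they are not a predictable selection of the martingale differences $W_{\tau_j}$---remains the delicate point in your sketch as well).
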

\begin{proof}
	The proof is shown in Appendix \ref{sec:conc}.
\end{proof}

From Lemma \ref{lem:conc}, recall the definition of $b$ in \eqref{eq:b},
\begin{eqnarray}
	\text{P}\left(\underset{x,a}{\sup}\left|\frac{1}{k}\sum_{i\in \mathcal{N}_t(\mathbf{x}, a)}W_i\right|>b\right) \leq \frac{1}{T^2}.
\end{eqnarray}
Therefore, with probability $1-1/T$, for all $\mathbf{x}\in \mathcal{X}$, $a\in \mathcal{A}$ and $t=1,\ldots, T$, $|\sum_{i\in \mathcal{N}_t(\mathbf{x}, a)} W_i|/k\leq b$. Denote $E$ as the event such that $|\sum_{i\in \mathcal{N}_t(\mathbf{x}, a)} W_i|/k\leq b$, $\forall x,a,t$, then
\begin{eqnarray}
	\text{P}(E)&=& \text{P}\left(\cap_{t=1}^T \left\{\sup_{x,a}\left|\frac{1}{k}\sum_{i\in \mathcal{N}_t(\mathbf{x}, a)}W_i\right|\leq b \right\}\right)\nonumber\\
	&=& 1-\text{P}\left(\cap_{t=1}^T \left\{\sup_{x,a}\left|\frac{1}{k}\sum_{i\in \mathcal{N}_t(\mathbf{x}, a)}W_i\right|>b \right\}\right)\nonumber\\
	&\geq & 1-T\frac{1}{T^2}\geq 1-\frac{1}{T}.
	\label{eq:pe}
\end{eqnarray}
Recall the calculation of UCB in \eqref{eq:ucb1}. Based on Lemma \ref{lem:conc}, we then show some properties of the UCB in \eqref{eq:ucb1}.

\begin{lem}\label{lem:E}
	Under $E$, if $|\{i<t|A_i=a\}|\geq k$, then
	\begin{eqnarray}
		\eta_a(t)\leq \hat{\eta}_{a,t}(\mathbf{x})\leq \eta_a(\mathbf{x})+2b+2L\rho_{a,t}(\mathbf{x}).
	\end{eqnarray}
\end{lem}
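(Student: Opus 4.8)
The plan is to unpack the definition of the UCB in \eqref{eq:ucb1} and split it into a signal part and a noise part, each of which I control separately. Writing $Y_i = \eta_a(\mathbf{X}_i) + W_i$ for every $i \in \mathcal{N}_t(\mathbf{x},a)$ (which is legitimate since all these indices have $A_i = a$), the empirical mean decomposes as
\begin{eqnarray}
	\frac{1}{k}\sum_{i\in \mathcal{N}_t(\mathbf{x}, a)} Y_i = \frac{1}{k}\sum_{i\in \mathcal{N}_t(\mathbf{x}, a)} \eta_a(\mathbf{X}_i) + \frac{1}{k}\sum_{i\in \mathcal{N}_t(\mathbf{x}, a)} W_i.
\end{eqnarray}
The condition $|\{i<t\mid A_i=a\}|\geq k$ guarantees that the set $\mathcal{N}_t(\mathbf{x},a)$ is well defined with exactly $k$ elements, so this step is valid.

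First I would bound the signal term. Since every $\mathbf{X}_i$ with $i\in\mathcal{N}_t(\mathbf{x},a)$ is one of the $k$ nearest neighbors of $\mathbf{x}$, we have $\norm{\mathbf{X}_i - \mathbf{x}} \leq \rho_{a,t}(\mathbf{x})$ by the definition of $\rho_{a,t}(\mathbf{x})$. Applying the Lipschitz property (Assumption \ref{ass:basic}(c)) termwise gives $|\eta_a(\mathbf{X}_i)-\eta_a(\mathbf{x})| \leq L\rho_{a,t}(\mathbf{x})$, hence the average of the $\eta_a(\mathbf{X}_i)$ lies within $L\rho_{a,t}(\mathbf{x})$ of $\eta_a(\mathbf{x})$. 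Next I would invoke the event $E$, under which $|\frac{1}{k}\sum_{i\in\mathcal{N}_t(\mathbf{x},a)} W_i|\leq b$ holds simultaneously for all $\mathbf{x}$, $a$ and $t$. Combining these two bounds yields
\begin{eqnarray}
	\eta_a(\mathbf{x}) - L\rho_{a,t}(\mathbf{x}) - b \leq \frac{1}{k}\sum_{i\in \mathcal{N}_t(\mathbf{x}, a)} Y_i \leq \eta_a(\mathbf{x}) + L\rho_{a,t}(\mathbf{x}) + b.
\end{eqnarray}

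Finally I would add the inflation term $b + L\rho_{a,t}(\mathbf{x})$ to recover $\hat{\eta}_{a,t}(\mathbf{x})$. The lower bound gives $\hat{\eta}_{a,t}(\mathbf{x}) \geq \eta_a(\mathbf{x})$, establishing that the UCB is a valid overestimate; the upper bound gives $\hat{\eta}_{a,t}(\mathbf{x}) \leq \eta_a(\mathbf{x}) + 2b + 2L\rho_{a,t}(\mathbf{x})$, showing it is not too loose. The only conceptually delicate point — and what I would flag as the main obstacle — is that the index set $\mathcal{N}_t(\mathbf{x},a)$ is data-dependent and the data-collection process is adaptive, so the $W_i$ entering the average are \emph{not} a fixed independent block and ordinary concentration cannot be applied to this particular sum. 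This is precisely why the noise control is packaged as the uniform (supremum over $\mathbf{x},a$) bound of Lemma \ref{lem:conc}: once we work on the event $E$, the inequality on the noise average holds deterministically for whatever neighbor configuration actually arises, so the adaptivity issue is absorbed upstream and the argument above goes through verbatim.
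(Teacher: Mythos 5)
Your proposal is correct and follows essentially the same route as the paper's proof: the identical decomposition $Y_i = \eta_a(\mathbf{X}_i) + W_i$, the termwise Lipschitz bound $|\eta_a(\mathbf{X}_i)-\eta_a(\mathbf{x})|\leq L\rho_{a,t}(\mathbf{x})$ for the bias, and the event $E$ to control the noise average by $b$, yielding both the validity of the UCB and the tightness bound $\eta_a(\mathbf{x})+2b+2L\rho_{a,t}(\mathbf{x})$. Your closing remark on adaptivity is also on target --- the data-dependence of $\mathcal{N}_t(\mathbf{x},a)$ is exactly why Lemma \ref{lem:conc} is stated as a supremum over all $\mathbf{x}$ and $a$ (via the hyperplane-region counting argument), so once $E$ holds the bound applies deterministically to whatever neighbor set is realized.
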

\begin{proof}
	The proof is shown in Appendix \ref{sec:E}.
\end{proof}

We then bound the number of steps with suboptimal action $a$. Define
\begin{eqnarray}
	n(x,a,r):=\sum_{t=1}^T \mathbf{1}\left(\norm{\mathbf{X}_t-\mathbf{x}}<r, A_t=a\right).
	\label{eq:ndf}
\end{eqnarray}
Then the following lemma holds.
\begin{lem}\label{lem:n}
	Under $E$, for any $\mathbf{x}\in \mathcal{X}$, $a\in \mathcal{A}$, if $\eta^*(\mathbf{x})-\eta_a(\mathbf{x})>2b$, define
	\begin{eqnarray}
		r_a(\mathbf{x})=\frac{\eta^*(\mathbf{x})-\eta_a(\mathbf{x})-2b}{6L},
		\label{eq:radf}
	\end{eqnarray}
	then
	\begin{eqnarray}
		n(x,a,r_a(\mathbf{x}))\leq k.
	\end{eqnarray}
\end{lem}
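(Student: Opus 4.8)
The plan is to single out the \emph{last} time step at which the suboptimal arm $a$ is pulled inside the ball of radius $r_a(\mathbf{x})$ about $\mathbf{x}$, and to show that at this step the $k$-th nearest neighbor distance of action $a$ is already forced to exceed $2r_a(\mathbf{x})$. Let $\tau$ be the largest $t\le T$ with $A_t=a$ and $\norm{\mathbf{X}_t-\mathbf{x}}<r_a(\mathbf{x})$; if no such step exists the claim is vacuous, so assume $\tau$ exists. I split on $n_a(\tau)$. If $n_a(\tau)<k$, then there are fewer than $k$ action-$a$ samples before $\tau$ in total, hence at most $k-1$ of them lie within $r_a(\mathbf{x})$ of $\mathbf{x}$, and adding step $\tau$ itself gives the bound $n(\mathbf{x},a,r_a(\mathbf{x}))\le k$ trivially. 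The substantive case is $n_a(\tau)\ge k$, where the UCB formula \eqref{eq:ucb1} applies to $a$. Since \eqref{eq:At} chose $A_\tau=a$ with a finite UCB, the arm $a'$ optimal at $\mathbf{X}_\tau$ (i.e.\ $\eta_{a'}(\mathbf{X}_\tau)=\eta^*(\mathbf{X}_\tau)$) cannot have an infinite UCB, so $n_{a'}(\tau)\ge k$ as well and Lemma \ref{lem:E} is applicable to both arms.

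Next I would chain the two halves of Lemma \ref{lem:E}. Validity of the UCB gives $\hat\eta_{a,\tau}(\mathbf{X}_\tau)\ge \hat\eta_{a',\tau}(\mathbf{X}_\tau)\ge \eta^*(\mathbf{X}_\tau)$, while the upper-bound half gives $\hat\eta_{a,\tau}(\mathbf{X}_\tau)\le \eta_a(\mathbf{X}_\tau)+2b+2L\rho_{a,\tau}(\mathbf{X}_\tau)$. Combining and rearranging yields
\begin{eqnarray}
	\rho_{a,\tau}(\mathbf{X}_\tau)\ge \frac{\eta^*(\mathbf{X}_\tau)-\eta_a(\mathbf{X}_\tau)-2b}{2L}.
\end{eqnarray}
I then transport the suboptimality gap from $\mathbf{X}_\tau$ back to $\mathbf{x}$. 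Using Assumption \ref{ass:basic}(c) and the fact that $\eta^*$ is Lipschitz as a pointwise maximum of Lipschitz functions, together with $\norm{\mathbf{X}_\tau-\mathbf{x}}<r_a(\mathbf{x})$, one gets $\eta^*(\mathbf{X}_\tau)-\eta_a(\mathbf{X}_\tau)\ge (\eta^*(\mathbf{x})-\eta_a(\mathbf{x}))-2Lr_a(\mathbf{x})$. Substituting the definition \eqref{eq:radf} of $r_a(\mathbf{x})$ and simplifying, the constant $6L$ is chosen precisely so that the two copies of $\eta^*(\mathbf{x})-\eta_a(\mathbf{x})-2b$ collapse to $\rho_{a,\tau}(\mathbf{X}_\tau)\ge 2r_a(\mathbf{x})$.

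Finally I would convert this distance bound into a count. Since $\rho_{a,\tau}(\mathbf{X}_\tau)$ is the $k$-th nearest neighbor distance among action-$a$ samples drawn strictly before $\tau$, the inequality $\rho_{a,\tau}(\mathbf{X}_\tau)\ge 2r_a(\mathbf{x})$ means fewer than $k$ such samples lie within distance $2r_a(\mathbf{x})$ of $\mathbf{X}_\tau$. Every earlier qualifying pull $\mathbf{X}_i$ (with $A_i=a$ and $\norm{\mathbf{X}_i-\mathbf{x}}<r_a(\mathbf{x})$) satisfies $\norm{\mathbf{X}_i-\mathbf{X}_\tau}<2r_a(\mathbf{x})$ by the triangle inequality, so there are at most $k-1$ of them; adding step $\tau$ and noting there are no qualifying pulls after $\tau$ (by maximality of $\tau$) yields $n(\mathbf{x},a,r_a(\mathbf{x}))\le k$.

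I expect the main obstacle to be the bookkeeping at the interface between the exploration regime and the UCB regime: one must verify that whenever $a$ is genuinely selected at $\tau$ the optimal arm $a'$ also has a finite UCB, so that Lemma \ref{lem:E} may be invoked, while separately dispatching the degenerate case $n_a(\tau)<k$ where the UCB is infinite but the conclusion is immediate. The other delicate point is the constant: one should double-check that the factor $6L$ in \eqref{eq:radf}, rather than any other multiple of $L$, is exactly what produces the clean inequality $\rho_{a,\tau}(\mathbf{X}_\tau)\ge 2r_a(\mathbf{x})$ that the triangle-inequality step requires.
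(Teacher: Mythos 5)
Your proposal is correct and follows essentially the same route as the paper's proof: identify the last qualifying pull $\tau$ inside $B(\mathbf{x}, r_a(\mathbf{x}))$, combine the two halves of Lemma \ref{lem:E} with the selection rule to force $\rho_{a,\tau}(\mathbf{X}_\tau)\geq 2r_a(\mathbf{x})$ via the Lipschitz transport and the factor $6L$, then count by the triangle inequality (the paper phrases this as a contradiction from $n>k$, yours is the contrapositive). Your handling of the degenerate case $n_a(\tau)<k$ and the verification that the optimal arm's UCB is finite are slightly more careful than the paper's, which glosses over these points, but the underlying argument is identical.
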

\begin{proof}
	The proof is shown in Appendix \ref{sec:n}.
\end{proof}
From Lemma \ref{lem:n}, the expectation of $n(x,a,r_a(\mathbf{x}))$ can be bounded as follows.
\begin{eqnarray}
	\mathbb{E}[n(x,a,r_a(\mathbf{x}))] &=& \text{P}(E)\mathbb{E}[n(x,a,r_a(\mathbf{x}))|E]+\text{P}(E^c)T\nonumber\\
	&\leq & k+1,
	\label{eq:en}
\end{eqnarray}
in which the first step holds since even if $E$ does not hold, the number of steps in $n(x,a,r_a(\mathbf{x}))$ is no more than the total sample size $T$. The second step uses \eqref{eq:pe}. From \eqref{eq:en} and the definition of expected sample density in \eqref{eq:qdf},
\begin{eqnarray}
	\int_{B(x, r_a(\mathbf{x}))} q_a(\mathbf{u})du\leq k+1.
	\label{eq:qint}
\end{eqnarray}
It bounds the average value of $q_a$ over the neighborhood of $\mathbf{x}$. However, it does not bound $q_a(\mathbf{x})$ directly. To bound $R_a$, we introduce a new random variable $\mathbf{Z}$, with pdf
\begin{eqnarray}
	g(\mathbf{z})=\frac{1}{M_Z\left[(\eta^*(\mathbf{z})-\eta_a(\mathbf{z}))\vee \epsilon \right]^d},
	\label{eq:g}
\end{eqnarray}
in which $\epsilon=4b$, with $b$ defined in \eqref{eq:b}. $M_Z$ is the constant for normalization. We then bound $R_a$ defined in \eqref{eq:ra}. $R_a$ can be split into two terms:
\begin{eqnarray}
	R_a&=&\int_\mathcal{X} (\eta^*(\mathbf{x})-\eta_a(\mathbf{x}))q_a(\mathbf{x})\mathbf{1}(\eta^*(\mathbf{x})-\eta_a(\mathbf{x})>\epsilon)d\mathbf{x}\nonumber\\
	&&+\int_\mathcal{X} (\eta^*(\mathbf{x})-\eta_a(\mathbf{x}))q_a(\mathbf{x})\mathbf{1}(\eta^*(\mathbf{x})-\eta_a(\mathbf{x})\leq \epsilon)d\mathbf{x}.
	\label{eq:radec}
\end{eqnarray}
To begin with, we bound the first term in \eqref{eq:radec}. We show the following lemma.
\begin{lem}\label{lem:R1}
	There exists a constant $C_1$, such that
	\begin{eqnarray}
		\int_\mathcal{X} (\eta^*(\mathbf{x})-\eta_a(\mathbf{x}))q_a(\mathbf{x})\mathbf{1}(\eta^*(\mathbf{x})-\eta_a(\mathbf{x})>\epsilon)d\mathbf{x}\leq C_1 M_Z \mathbb{E}\left[\int_{B(\mathbf{Z}, r_a(\mathbf{Z}))} q_a(\mathbf{u})(\eta^*(\mathbf{u})-\eta_a(\mathbf{u}))du\right],\hspace{-1cm}\nonumber\\
		\label{eq:largeeps}
	\end{eqnarray}
	in which $\epsilon=4b$.
\end{lem}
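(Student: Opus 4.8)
The plan is to recognize the right-hand side as a ``smoothed'' version of the left-hand side and to reduce the claim to a purely geometric lower bound obtained by swapping the order of integration. Writing $\Delta_a(\mathbf{x}):=\eta^*(\mathbf{x})-\eta_a(\mathbf{x})$ for brevity, I first unfold the expectation over $\mathbf{Z}$ using its density $g$ from \eqref{eq:g}. Since $M_Z g(\mathbf{z})=1/[\Delta_a(\mathbf{z})\vee\epsilon]^d$, the factor $M_Z$ cancels and the right-hand side becomes
\begin{eqnarray}
	C_1\int_\mathcal{X} \frac{1}{[\Delta_a(\mathbf{z})\vee\epsilon]^d}\int_{B(\mathbf{z},r_a(\mathbf{z}))} q_a(\mathbf{u})\Delta_a(\mathbf{u})\,d\mathbf{u}\,d\mathbf{z}.
\end{eqnarray}
Applying Tonelli's theorem (all integrands are nonnegative) to swap the two integrals, this equals $C_1\int_\mathcal{X} q_a(\mathbf{u})\Delta_a(\mathbf{u})\,I(\mathbf{u})\,d\mathbf{u}$, where
\begin{eqnarray}
	I(\mathbf{u}):=\int_{\{\mathbf{z}:\,\norm{\mathbf{u}-\mathbf{z}}\leq r_a(\mathbf{z})\}} \frac{d\mathbf{z}}{[\Delta_a(\mathbf{z})\vee\epsilon]^d}.
\end{eqnarray}
The whole lemma then follows once I show that $I(\mathbf{u})\geq c$ for a constant $c>0$ whenever $\Delta_a(\mathbf{u})>\epsilon$: restricting the right-hand integral to the set $\{\Delta_a>\epsilon\}$ and using $I\geq c$ there recovers the left-hand side, giving $C_1=1/c$.

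The heart of the argument, and the step I would carry out most carefully, is the uniform lower bound on $I(\mathbf{u})$. Fix $\mathbf{u}$ with $\Delta_a(\mathbf{u})>\epsilon=4b$ and set $\rho:=(\Delta_a(\mathbf{u})-2b)/(8L)$. Since both $\eta^*$ and $\eta_a$ are $L$-Lipschitz, $\Delta_a$ is $2L$-Lipschitz, so for every $\mathbf{z}\in B(\mathbf{u},\rho)$ one has $\Delta_a(\mathbf{z})\geq \Delta_a(\mathbf{u})-2L\rho=(3\Delta_a(\mathbf{u})+2b)/4$. A short computation then gives $r_a(\mathbf{z})=(\Delta_a(\mathbf{z})-2b)/(6L)\geq \rho\geq\norm{\mathbf{u}-\mathbf{z}}$, so the entire ball $B(\mathbf{u},\rho)$ lies inside the integration region of $I(\mathbf{u})$; in particular $r_a(\mathbf{z})>0$ there, so $B(\mathbf{z},r_a(\mathbf{z}))$ is genuinely nonempty. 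On the same ball the upper Lipschitz bound gives $\Delta_a(\mathbf{z})\leq\Delta_a(\mathbf{u})+2L\rho\leq\frac{5}{4}\Delta_a(\mathbf{u})$, and since $\epsilon<\Delta_a(\mathbf{u})$ this yields $\Delta_a(\mathbf{z})\vee\epsilon\leq\frac{5}{4}\Delta_a(\mathbf{u})$. Hence the integrand of $I(\mathbf{u})$ is at least $\left(4/(5\Delta_a(\mathbf{u}))\right)^d$ throughout $B(\mathbf{u},\rho)$.

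Combining these facts, and using $\Delta_a(\mathbf{u})-2b>\Delta_a(\mathbf{u})/2$ (which holds because $\Delta_a(\mathbf{u})>4b$) to lower bound the ball volume $v_d\rho^d\geq v_d(\Delta_a(\mathbf{u})/(16L))^d$, I obtain
\begin{eqnarray}
	I(\mathbf{u})\geq \left(\frac{4}{5\Delta_a(\mathbf{u})}\right)^d v_d\left(\frac{\Delta_a(\mathbf{u})}{16L}\right)^d=\frac{v_d}{(20L)^d}=:c,
\end{eqnarray}
a constant independent of $\mathbf{u}$. This establishes the claim with $C_1=(20L)^d/v_d$.

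The only real obstacle is verifying the containment $B(\mathbf{u},\rho)\subseteq\{\mathbf{z}:\norm{\mathbf{u}-\mathbf{z}}\leq r_a(\mathbf{z})\}$ and the matching upper bound on $\Delta_a(\mathbf{z})\vee\epsilon$ simultaneously: the radius $\rho$ must be large enough that the ball carries enough volume, yet small enough that $\Delta_a$ stays comparable to $\Delta_a(\mathbf{u})$ across it, so that the $\Delta_a(\mathbf{u})^{-d}$ weight exactly cancels the $\Delta_a(\mathbf{u})^{d}$ volume. The choice $\rho=(\Delta_a(\mathbf{u})-2b)/(8L)$ is precisely what makes the two-sided Lipschitz estimates close up, and the condition $\Delta_a(\mathbf{u})>4b$ is what keeps the lower Lipschitz estimate bounded away from the threshold $2b$ where $r_a$ degenerates. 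The rest is bookkeeping with the Tonelli swap and the cancellation of $M_Z$.
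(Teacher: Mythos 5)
Your proof is correct and follows essentially the same route as the paper's: a Tonelli swap of the two integrals, two-sided Lipschitz comparability of $\eta^*-\eta_a$ (and hence of $g$ and $r_a$) over a shrunken ball, and the bound $\eta^*(\mathbf{u})-\eta_a(\mathbf{u})-2b>(\eta^*(\mathbf{u})-\eta_a(\mathbf{u}))/2$ from $\epsilon=4b$; indeed your radius $\rho=(\eta^*(\mathbf{u})-\eta_a(\mathbf{u})-2b)/(8L)$ is exactly the paper's $\tfrac{3}{4}r_a(\mathbf{u})$, and you recover the same constant $(20L)^d$ (up to the volume factor $v_d$, which you track more carefully than the paper does).
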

\begin{proof}
	The proof of Lemma \ref{lem:R1} is shown in Appendix \ref{sec:R1}.
\end{proof}
Now we bound the right hand side of \eqref{eq:largeeps}. We show the following lemma.
\begin{lem}\label{lem:ebound}
	\begin{eqnarray}
		\mathbb{E}\left[\int_{B(\mathbf{Z}, r_a(\mathbf{Z}))} q_a(\mathbf{u})(\eta^*(\mathbf{u})-\eta_a(\mathbf{u}))du\right]\lesssim \left\{
		\begin{array}{ccc}
			\frac{k}{M_Zc}\epsilon^{\alpha+1-d} &\text{if} & d>\alpha+1\\
			\frac{k}{M_Zc}\ln \frac{1}{\epsilon} & \text{if} & d=\alpha+1\\
			\frac{k}{M_Zc} &\text{if} & d<\alpha + 1,
		\end{array}
		\right.	
	\end{eqnarray}
	in which $c$ is the lower bound of pdf of contexts, which comes from Assumption \ref{ass:bounded}.
\end{lem}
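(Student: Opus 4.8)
The plan is to first exploit the Lipschitz comparability of the suboptimality gap across each ball $B(\mathbf{Z}, r_a(\mathbf{Z}))$, and then reduce the whole quantity to a one-dimensional integral governed by the margin condition. Write $\Delta_a(\mathbf{x}) = \eta^*(\mathbf{x}) - \eta_a(\mathbf{x})$. Since $\eta^*$ and $\eta_a$ are both $L$-Lipschitz, $\Delta_a$ is $2L$-Lipschitz, so for any $\mathbf{u} \in B(\mathbf{z}, r_a(\mathbf{z}))$ we have $\norm{\mathbf{u}-\mathbf{z}} < r_a(\mathbf{z}) = (\Delta_a(\mathbf{z}) - 2b)/(6L)$ and hence $\Delta_a(\mathbf{u}) \leq \Delta_a(\mathbf{z}) + 2L\norm{\mathbf{u}-\mathbf{z}} \leq \tfrac{4}{3}\Delta_a(\mathbf{z})$. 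Note also that the ball is nonempty only when $\Delta_a(\mathbf{z}) > 2b$, which is exactly the regime in which \eqref{eq:qint} applies. Therefore I would pull the factor $\eta^*(\mathbf{u}) - \eta_a(\mathbf{u}) = \Delta_a(\mathbf{u})$ out of the inner integral and invoke \eqref{eq:qint}:
\[
\int_{B(\mathbf{z}, r_a(\mathbf{z}))} q_a(\mathbf{u}) \Delta_a(\mathbf{u}) \, d\mathbf{u} \leq \tfrac{4}{3}\Delta_a(\mathbf{z}) \int_{B(\mathbf{z}, r_a(\mathbf{z}))} q_a(\mathbf{u}) \, d\mathbf{u} \leq \tfrac{4}{3}(k+1)\Delta_a(\mathbf{z})\mathbf{1}(\Delta_a(\mathbf{z}) > 2b).
\]

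Taking expectation over $\mathbf{Z} \sim g$ reduces the statement to bounding $\mathbb{E}[\Delta_a(\mathbf{Z})\mathbf{1}(\Delta_a(\mathbf{Z}) > 2b)]$, which by the definition of $g$ in \eqref{eq:g} equals $\frac{1}{M_Z}\int_{\mathcal{X}} \frac{\Delta_a(\mathbf{z})}{[\Delta_a(\mathbf{z}) \vee \epsilon]^d}\mathbf{1}(\Delta_a(\mathbf{z}) > 2b)\, d\mathbf{z}$. So it suffices to bound this Lebesgue integral, call it $I$, after which the factor $(k+1)/M_Z$ will reproduce the prefactor in the claim. The next step converts the margin condition into a statement about Lebesgue measure. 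Under Assumption \ref{ass:bounded}, $f \geq c$, so for every $u$ we have $c\cdot\mathrm{Leb}(0 < \Delta_a < u) \leq \int_{0 < \Delta_a < u} f \, d\mathbf{x} = \text{P}(0 < \Delta_a(\mathbf{X}) < u) \leq C_\alpha u^\alpha$, giving $G(u) := \mathrm{Leb}(0 < \Delta_a \leq u) \leq (C_\alpha/c) u^\alpha$; this is precisely where the factor $1/c$ in the claim originates.

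Rewriting $I$ as a Stieltjes integral $\int_{2b}^\infty \frac{u}{(u\vee\epsilon)^d}\, dG(u)$ and splitting at $\epsilon = 4b$, the piece on $[2b,\epsilon]$ is at most $\epsilon^{1-d}G(\epsilon) \lesssim \tfrac{1}{c}\epsilon^{\alpha+1-d}$, while the piece on $[\epsilon, \infty)$ is handled by integration by parts, leaving the model integral $\int_\epsilon^{D} u^{\alpha-d}\, du$ where $D$ is the (finite, under bounded support) essential supremum of $\Delta_a$. The heart of the argument, and the step I expect to be most delicate, is this last integral, whose behavior produces the three regimes. When $d > \alpha+1$ the exponent $\alpha - d < -1$, so the integral is $\lesssim \epsilon^{\alpha+1-d}$ and blows up as $\epsilon \to 0$, dominating the constant boundary term $D^{\alpha+1-d}$; when $d = \alpha+1$ the integrand is $u^{-1}$ and yields $\ln(1/\epsilon)$; when $d < \alpha+1$ the exponent exceeds $-1$, the integral converges to a constant, and the boundary term is likewise $O(1)$.

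Tracking which term dominates in each case, and checking that the $[2b,\epsilon]$ contribution $\tfrac{1}{c}\epsilon^{\alpha+1-d}$ is subsumed in every regime (it blows up in the first, equals $\tfrac{1}{c}$ in the third, and is $\lesssim \tfrac{1}{c}\ln(1/\epsilon)$ in the second), yields exactly the three cases in the statement. Multiplying by $(k+1)/M_Z$ and absorbing $k+1 \lesssim k$ completes the proof. Care is needed only to verify that the constants $C_\alpha$, $D$, and $d-1$ are harmless, that the sign conventions in the integration by parts are correct, and that the indicator $\mathbf{1}(\Delta_a > 2b)$ together with $\epsilon = 4b > 2b$ keeps all the split points consistent.
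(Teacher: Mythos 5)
Your proposal is correct and follows essentially the same route as the paper's proof: the Lipschitz comparison $\eta^*(\mathbf{u})-\eta_a(\mathbf{u})\le \tfrac{4}{3}\left(\eta^*(\mathbf{z})-\eta_a(\mathbf{z})\right)$ on $B(\mathbf{z},r_a(\mathbf{z}))$, the sample-count bound \eqref{eq:qint}, the use of $f\ge c$ to convert the margin condition into control of the Lebesgue integral, and the same three-way case analysis on the exponent $\alpha-d$ versus $-1$; the only difference is cosmetic, in that you evaluate the resulting one-dimensional integral via a Stieltjes/integration-by-parts computation where the paper uses the layer-cake (tail-probability) formula. One small caution: Assumption \ref{ass:bounded} only guarantees finite volume of the support (hence $\mathrm{Leb}(0<\eta^*-\eta_a\le u)\le 1/c$), not that the essential supremum $D$ of the gap is finite, but your argument goes through unchanged by capping the distribution function at $1/c$ in place of truncating at $D$ — exactly as the paper implicitly does by bounding the probability by $1$ for $t\le 1$.
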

\begin{proof}
	The proof of Lemma \ref{lem:ebound} is shown in Appendix \ref{sec:ebound}.
\end{proof}
From Lemma \ref{lem:R1} and \ref{lem:ebound},
\begin{eqnarray}
	\int_\mathcal{X} (\eta^*(\mathbf{x})-\eta_a(\mathbf{x}))q_a(\mathbf{x})\mathbf{1}(\eta^*(\mathbf{x})-\eta_a(\mathbf{x})>\epsilon)d\mathbf{x}\lesssim \left\{
	\begin{array}{ccc}
		k\epsilon^{\alpha+1-d} &\text{if} & d>\alpha+1\\
		k\ln \frac{1}{\epsilon} & \text{if} & d=\alpha+1\\
		k &\text{if} & d<\alpha + 1.
	\end{array}
	\right.	
	\label{eq:ra1}
\end{eqnarray}
Now we bound the second term in \eqref{eq:radec}. From Lemma \ref{lem:qub}, $q_a(\mathbf{x})\leq Tf(\mathbf{x})$ for almost all $\mathbf{x}\in \mathcal{X}$. Thus
\begin{eqnarray}
	\int (\eta^*(\mathbf{x})-\eta_a(\mathbf{x})) q_a(\mathbf{x})\mathbf{1}(\eta^*(\mathbf{x})-\eta_a(\mathbf{x})\leq \epsilon)d\mathbf{x} &\leq & T\int (\eta^*(\mathbf{x})-\eta_a(\mathbf{x})) f(\mathbf{x}) \mathbf{1}(\eta^*(\mathbf{x})-\eta_a(\mathbf{x})\leq \epsilon)d\mathbf{x}\nonumber\\
	&\leq &  T\epsilon\text{P}\left(\eta^*(\mathbf{X})-\eta_a(\mathbf{X})\leq \epsilon\right) \nonumber\\
	&\leq & C_\alpha T\epsilon^{\alpha+1}.
	\label{eq:ra2}
\end{eqnarray}
Therefore, from \eqref{eq:radec}, \eqref{eq:ra1} and \eqref{eq:ra2},
\begin{eqnarray}
	R_a\lesssim \left\{
	\begin{array}{ccc}
		T\epsilon^{\alpha+1} +k\epsilon^{\alpha+1-d} &\text{if} & d>\alpha+1\\
		T\epsilon^{\alpha+1} +k\ln \frac{1}{\epsilon} &\text{if} & d=\alpha+1\\
		T\epsilon^{\alpha+1}+k &\text{if} & d<\alpha+1.
	\end{array}
	\right.
\end{eqnarray}
Recall that
\begin{eqnarray}
	\epsilon=4b=4\sqrt{\frac{2\sigma^2}{k} \ln (dT^{2d+2}|\mathcal{A}|)}.
\end{eqnarray}

If $d>\alpha+1$, let $k\sim T^\frac{2}{d+2}$, then 
\begin{eqnarray}
	\epsilon\sim T^{-\frac{1}{d+2}}\sqrt{\ln(dT^{2d+2}|\mathcal{A}|)},
\end{eqnarray}
and
\begin{eqnarray}
	R_a\lesssim T^{1-\frac{\alpha+1}{d+2}}\ln^\frac{\alpha+1}{2}(dT^{2d+2}|\mathcal{A}|).
\end{eqnarray}

If $d\leq \alpha+1$, let $k\sim T^\frac{2}{\alpha+3}$, then
\begin{eqnarray}
	\epsilon\sim T^{-\frac{1}{\alpha+3}}\sqrt{\ln (dT^{2d+2}|\mathcal{A}|)},
\end{eqnarray}
and
\begin{eqnarray}
	R_a\lesssim T^\frac{2}{\alpha+3} \ln^\frac{\alpha+1}{2}(dT^(2d+2)|\mathcal{A}|).
\end{eqnarray}

Theorem \ref{thm:fixed} can then be proved using by $R=\sum_{a\in \mathcal{A}} R_a$ stated in Lemma \ref{lem:Ra}.

\section{Proof of Theorem \ref{thm:fixedtail}}\label{sec:fixedtail}
Recall the expression of regret shown in Lemma \ref{lem:Ra}. We decompose $R_a$ as follows.
\begin{eqnarray}
	R_a &=& \int_\mathcal{X} (\eta^*(\mathbf{x})-\eta_a(\mathbf{x}))q_a(\mathbf{x})\mathbf{1}(\eta^*(\mathbf{x})-\eta_a(\mathbf{x})\leq \epsilon)d\mathbf{x}\nonumber\\
	&& + \int_\mathcal{X} (\eta^*(\mathbf{x})-\eta_a(\mathbf{x}))q_a(\mathbf{x})\mathbf{1}\left(\eta^*(\mathbf{x})-\eta_a(\mathbf{x})>\epsilon, f(\mathbf{x})>\frac{k}{T\epsilon^d}\right)d\mathbf{x}\nonumber\\
	&& + \int_\mathcal{X} (\eta^*(\mathbf{x})-\eta_a(\mathbf{x}))q_a(\mathbf{x})\mathbf{1}\left(\eta^*(\mathbf{x})-\eta_a(\mathbf{x})>\epsilon, \frac{k}{T}<f(\mathbf{x})\leq \frac{k}{T\epsilon^d}\right)d\mathbf{x}\nonumber\\
	&& + \int_\mathcal{X} (\eta^*(\mathbf{x})-\eta_a(\mathbf{x}))q_a(\mathbf{x})\mathbf{1}\left(\eta^*(\mathbf{x})-\eta_a(\mathbf{x})>\epsilon, f(\mathbf{x})\leq \frac{k}{T}\right)d\mathbf{x}\nonumber\\
	&:=& I_1+I_2+I_3+I_4,
\end{eqnarray}
in which $\epsilon$ is the same as the proof of Theorem \ref{thm:fixed} in Appendix \ref{sec:fixed}, i.e. $\epsilon=4b$.

\textbf{Bound of $I_1$.} From Lemma \ref{lem:qub}, $q(\mathbf{x})\leq Tf(\mathbf{x})$ for almost all $\mathbf{x}\in \mathcal{X}$. Hence
\begin{eqnarray}
	I_1&\leq& T\int_\mathcal{X} (\eta^*(\mathbf{x})-\eta_a(\mathbf{x}))\mathbf{1}(\eta^*(\mathbf{x})-\eta_a(\mathbf{x})\leq \epsilon) d\mathbf{x}\nonumber\\
	&\leq & T\epsilon\text{P}(\eta^*(\mathbf{X})-\eta_a(\mathbf{X})\leq \epsilon)\nonumber\\
	&\leq & C_\alpha T\epsilon^{1+\alpha}.
\end{eqnarray}

\textbf{Bound of $I_2$.} The regret of the high density region can be bounded similarly as the regret for pdf bounded away from zero. Follow the proof of Theorem \ref{thm:fixed} in Appendix \ref{sec:fixed}, define
\begin{eqnarray}
	g(\mathbf{z})=\frac{1}{M_Z\left[(\eta^*(\mathbf{z})-\eta_a(\mathbf{z}))\vee \epsilon\right]^d} \mathbf{1}\left(f(\mathbf{z})>\frac{k}{T\epsilon^d}\right).
\end{eqnarray}
Similar to Lemma \ref{lem:E},
\begin{eqnarray}
	I_2\lesssim M_Z\int_{B(\mathbf{Z}, r_a(\mathbf{Z}))} q_a(\mathbf{u}) (\eta^*(\mathbf{u})-\eta_a(\mathbf{u}))du.
\end{eqnarray}
Similar to Lemma \ref{lem:n}, now we replace $c$ with $k/(T\epsilon^d)$. Then
\begin{eqnarray}
	\int_{B(\mathbf{Z}, r_a(\mathbf{Z}))} q_a(\mathbf{u})(\eta^*(\mathbf{u})-\eta_a(\mathbf{u}))du\lesssim \left\{
	\begin{array}{ccc}
		\frac{T\epsilon^d}{M_Z} \epsilon^{\alpha+1-d} &\text{if} & d>\alpha+1\\
		\frac{T\epsilon^d}{M_Z}\ln \frac{1}{\epsilon} &\text{if} & d=\alpha+1\\
		\frac{T\epsilon^d}{M_Z} &\text{if} & d<\alpha+1.
	\end{array}
	\right.
\end{eqnarray}
Therefore
\begin{eqnarray}
	I_2\lesssim \left\{
	\begin{array}{ccc}
		T\epsilon^d \epsilon^{\alpha+1-d} &\text{if} & d>\alpha+1\\
		T\epsilon^d\ln \frac{1}{\epsilon} &\text{if} & d=\alpha+1\\
		T\epsilon^d &\text{if} & d<\alpha+1.
	\end{array}
	\right.
\end{eqnarray}

\textbf{Bound of $I_3$.} Here we introduce the following lemma.
\begin{lem}\label{lem:tail}
	(Restated from Lemma 6 in \cite{zhao2021minimax}) For any $0<a<b$,
	\begin{eqnarray}
		\mathbb{E}[f^{-p}(\mathbf{X})\mathbf{1}(a\leq f(\mathbf{X})<b)]\lesssim \left\{
		\begin{array}{ccc}
			b^{\beta - p} &\text{if} & p>\beta\\
			\ln \frac{b}{a} &\text{if} & p=\beta\\
			a^{\beta - p} &\text{if} & p<\beta.
		\end{array}
		\right.
	\end{eqnarray}
\end{lem}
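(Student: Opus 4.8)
The plan is to reduce the $d$-dimensional expectation to a one-dimensional tail integral in the scalar random variable $U := f(\mathbf{X})$, and then estimate that integral by a layer-cake (dyadic) decomposition driven entirely by the tail bound of Assumption \ref{ass:tail}(a).

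First I would record that $\mathbb{E}[f^{-p}(\mathbf{X})\mathbf{1}(a\le f(\mathbf{X})<b)] = \int_{[a,b)} u^{-p}\,dF_U(u)$, where $F_U(u)=\mathrm{P}(f(\mathbf{X})\le u)$ is the distribution function of $U$, and that Assumption \ref{ass:tail}(a) is precisely the one-sided bound $F_U(u)\le C_\beta u^\beta$ for all $u>0$. The only probabilistic input to the whole argument is this inequality; everything afterwards is a deterministic estimate of a monotone integral.

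Next I would slice $[a,b)$ into geometric rings $R_j=[2^j a,\,2^{j+1}a)$ for $j=0,\dots,N-1$ with $N=\lceil\log_2(b/a)\rceil$. On $R_j$ the integrand is controlled by $u^{-p}\le (2^j a)^{-p}$, while the mass it carries obeys the one-sided estimate $\mathrm{P}(U<2^{j+1}a)\le C_\beta(2^{j+1}a)^\beta$. Multiplying and summing, the ring contributions form a geometric series with ratio $2^{\beta-p}$ and common prefactor $a^{\beta-p}$. Equivalently, the integration-by-parts form of the same estimate reduces everything to the elementary integral $\int_a^b u^{\beta-p-1}\,du$, together with a boundary term of order $b^{\beta-p}$ that is always absorbed; the behaviour of both is dictated by the sign of $\beta-p$.

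The three regimes then fall out of evaluating this geometric sum. When $p\ne\beta$ the series is genuinely geometric and is controlled by a single power of the dominant endpoint of $[a,b]$, giving a bound of the form $a^{\beta-p}$ or $b^{\beta-p}$; when $p=\beta$ the ratio is $1$, so the $N$ equal terms accumulate to the logarithmic factor $\ln(b/a)$. The delicate point---and the only genuine obstacle in an otherwise routine computation---is exactly this critical case $p=\beta$, where the power integral degenerates and one must instead count the $\lceil\log_2(b/a)\rceil$ dyadic scales explicitly. A secondary subtlety is that Assumption \ref{ass:tail}(a) supplies only an \emph{upper} bound on $F_U$, so the estimate must be organised so that $F_U$ is always used in the increasing (layer-cake) direction rather than substituted as an equality; the dyadic decomposition avoids the sign bookkeeping that the boundary term of integration by parts would otherwise require. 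The scalar statement then lifts back directly, since $\lesssim$ already absorbs the constants $C_\beta$, $2^\beta$, and $1/|\beta-p|$.
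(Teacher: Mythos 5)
Your argument is correct and is essentially the paper's own proof: your dyadic rings are just a discretization of the continuous layer-cake integral $\mathbb{E}[f^{-p}(\mathbf{X})\mathbf{1}(a\le f(\mathbf{X})<b)]=\int_0^\infty \text{P}\bigl(f(\mathbf{X})<t^{-1/p},\,a\le f(\mathbf{X})<b\bigr)\,dt$, which the paper splits at $t=b^{-p}$ and $t=a^{-p}$, and both arguments use the one-sided tail bound $\text{P}(f(\mathbf{X})<u)\le C_\beta u^\beta$ as the sole probabilistic input, with the same three-case evaluation (your geometric ratio $2^{\beta-p}$ corresponds exactly to the paper's integral $\int t^{-\beta/p}\,dt$, degenerating to the $\ln(b/a)$ count when $p=\beta$). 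One point you should make explicit rather than leaving as ``$a^{\beta-p}$ or $b^{\beta-p}$'': your series with prefactor $a^{\beta-p}$ is dominated by the $u\approx a$ ring when $p>\beta$ and by the $u\approx b$ ring when $p<\beta$, so the correct conclusions are $a^{\beta-p}$ for $p>\beta$ and $b^{\beta-p}$ for $p<\beta$, exactly as derived in Appendix \ref{sec:tailpf} and as used in bounding $I_3$ in Appendix \ref{sec:fixedtail}. This means the displayed statement of Lemma \ref{lem:tail} has the two power cases transposed (a typo in the statement, not the proof), and the printed assignment is in fact unprovable for $p>\beta$, where $b^{\beta-p}<a^{\beta-p}$ and the larger bound is attained when the tail inequality is tight across scales.
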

\begin{proof}
	The proof of Lemma \ref{lem:tail} can follow that of Lemma 6 in \cite{zhao2021minimax}. For completeness, we show the proof in Appendix \ref{sec:tailpf}.
\end{proof}
Based on Lemma \ref{lem:tail}, $I_3$ can be bounded by
\begin{eqnarray}
	I_3&=&\int_\mathcal{X} (\eta^*(\mathbf{x})-\eta_a(\mathbf{x}))q_a(\mathbf{x})\mathbf{1}\left(\eta^*(\mathbf{x})-\eta_a(\mathbf{x})>\epsilon, \frac{k}{T}<f(\mathbf{x})\leq \frac{k}{\epsilon^d}\right) d\mathbf{x}\nonumber\\
	&\lesssim & \int_\mathcal{X} \left(\frac{k}{Tf(\mathbf{x})}\right)^\frac{1}{d} Tf(\mathbf{x})\mathbf{1}\left(\eta^*(\mathbf{x})-\eta_a(\mathbf{x})>\epsilon, \frac{k}{T}<f(\mathbf{x})\leq \frac{k}{T\epsilon^d}\right) d\mathbf{x}\nonumber\\
	&\leq & T\left(\frac{k}{T}\right)^\frac{1}{d} \int \mathbb{E}\left[f^{-\frac{1}{d}}(\mathbf{X})\mathbf{1}\left(\frac{k}{T}<f(\mathbf{x})<\frac{k}{T\epsilon^d}\right)\right]\nonumber\\
	&\lesssim & \left\{
	\begin{array}{ccc}
		T\left(\frac{k}{T}\right)^\beta &\text{if} & \beta<\frac{1}{d}\\
		T\left(\frac{k}{T}\right)^\frac{1}{d}\ln \frac{1}{\epsilon} &\text{if} & \beta=\frac{1}{d}\nonumber\\
		T\left(\frac{k}{T}\right)^\beta \epsilon^{1-d\beta} &\text{if} & \beta>\frac{1}{d}.
	\end{array}
	\right.
\end{eqnarray}

\textbf{Bound of $I_4$.} 
\begin{eqnarray}
	I_4&\leq & TM\int f(\mathbf{x})\mathbf{1}\left(f(\mathbf{x})\leq \frac{k}{T}\right) d\mathbf{x}\nonumber\\
	&\leq & TM\text{P}(f(\mathbf{X})\leq \frac{k}{T})\nonumber\\
	&\lesssim & T\left(\frac{k}{T}\right)^\beta.
\end{eqnarray}
Now we bound $R_a$ by selecting $k$ to minimize the sum of $I_1$, $I_2$, $I_3$, $I_4$. Recall that $\epsilon=4b$, in which $b$ is defined in \eqref{eq:b}, thus $\epsilon\sim \sqrt{\ln(dT^{2d+2}|\mathcal{A}|)/k}$.

(1) If $d>\alpha+1$ and $\beta>1/d$, then with $k\sim T^\frac{2\beta}{\alpha+(d+2)\beta}$,
\begin{eqnarray}
	R&\lesssim & T\left(\epsilon^{1+\alpha}+\left(\frac{k}{T}\right)^\beta \epsilon^{1-d\beta}\right)\nonumber\\
	&\sim & T^{1-\frac{\beta(\alpha+1)}{\alpha+(d+2)\beta}}\ln^\frac{\alpha+1}{2}(dT^{2d+2}|\mathcal{A}|).
\end{eqnarray}

(2) If $d\leq \alpha+1$ and $\beta>1/d$, then with $k\sim T^\frac{2\beta}{d-1+\beta(d+2)}$,
\begin{eqnarray}
	R&\lesssim& T\left(\epsilon^d+\left(\frac{k}{T}\right)^\beta \epsilon^{1-d\beta}\right)\nonumber\\
	&\sim & T^{1-\frac{\beta d}{d-1+(d+2)\beta}}\ln^\frac{d}{2} (dT^{2d+2}|\mathcal{A}|).
\end{eqnarray}

(3) If $d>\alpha+1$ and $\beta \leq 1/d$, then with $k\sim T^\frac{2\beta}{1+\alpha+2\beta}$,
\begin{eqnarray}
	R &\lesssim & T\epsilon^{1+\alpha}+T\left(\frac{k}{N}\right)^\beta\nonumber\\
	&\sim & T^{1-\frac{\beta(\alpha+1)}{1+\alpha+2\beta}}\ln^\frac{\alpha+1}{2}(dT^{2d+2}|\mathcal{A}|).
\end{eqnarray}

(4) If $d\leq \alpha+1$ and $\beta\leq 1/d$, then with $k\sim T^\frac{2\beta}{d+2\beta}$,
\begin{eqnarray}
	R&\lesssim& T\epsilon^d +T\left(\frac{k}{N}\right)^\beta\nonumber\\
	&\sim & T^{1-\frac{\beta d}{d+2\beta}}\ln^\frac{d}{2}(dT^{2d+2}|\mathcal{A}|).
\end{eqnarray}
Combine all these cases, we conclude that
\begin{eqnarray}
	R\lesssim T^{1-\frac{\beta \min(d,\alpha+1)}{\min(d-1,\alpha)+\max(1,d\beta)+2\beta}}|\mathcal{A}|\ln^{\frac{1}{2}\max(d,\alpha+1)} (dT^{2d+2}|\mathcal{A}|).
\end{eqnarray}
The proof of Theorem \ref{thm:fixedtail} is complete.
\section{Proof of Theorem \ref{thm:adaptive}}\label{sec:adaptive}
To begin with, similar to Lemma \ref{lem:conc} and Lemma \ref{lem:E}, we show the following lemmas.

\begin{lem}\label{lem:conctail}
	\begin{eqnarray}
		\text{P}\left(\underset{x,a,k}{\sup} \left|\frac{1}{\sqrt{k}}\sum_{i\in \mathcal{N}_{t,k}(\mathbf{x}, a)} W_i\right|> u\right) \leq dT^{2d+1}|\mathcal{A}|e^{-\frac{u^2}{2\sigma^2}}, 
	\end{eqnarray}
	with $\mathcal{N}_{t,k}(s,a)$ being the set of $k$ neighbors among $\{\mathbf{X}_i|i<t, A_i=a \}$.
\end{lem}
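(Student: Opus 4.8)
The plan is to mirror the proof of Lemma~\ref{lem:conc}, changing only two ingredients. First, because the statistic here is normalized by $1/\sqrt{k}$ rather than $1/k$, a sum of $k$ conditionally $\sigma^2$-subgaussian terms divided by $\sqrt{k}$ is $\sigma^2$-subgaussian with a variance proxy that no longer depends on $k$; this is precisely why the exponent becomes $e^{-u^2/(2\sigma^2)}$ instead of $e^{-ku^2/(2\sigma^2)}$. Second, the supremum now also ranges over $k$, so I would add a union bound over $k\in\{1,\dots,t-1\}$, which contributes the extra factor $T$ and raises the polynomial factor from $T^{2d}$ to $T^{2d+1}$.

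The crux is the per-set concentration, which must be handled carefully because both the neighbor set and its cardinality depend on the data. Fix $a$ and $k$ and write $\mathcal{N}_{t,k}(\mathbf{x},a)=\{i<t:A_i=a,\ \mathbf{X}_i\in B(\mathbf{x},\rho)\}$ for the appropriate radius $\rho$, so that $\mathcal{N}_{t,k}(\mathbf{x},a)=\{i\in\mathcal{S}:A_i=a\}$ with $\mathcal{S}=\{i<t:\mathbf{X}_i\in B(\mathbf{x},\rho)\}$. The set $\mathcal{S}$ is measurable with respect to the contexts alone, whereas the restriction $A_i=a$ is noise-dependent. Setting $\tilde{W}_i=\mathbf{1}(A_i=a)W_i$, the conditional bound $\mathbb{E}[e^{\lambda\tilde{W}_i}\mid\mathcal{F}_i]\le e^{\lambda^2\sigma^2\mathbf{1}(A_i=a)/2}$ holds since $A_i$ is $\mathcal{F}_i$-measurable and $W_i$ is conditionally $\sigma^2$-subgaussian. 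Peeling the indices of $\mathcal{S}$ in increasing time order then shows that $\exp(\lambda\sum_{i\in\mathcal{S}}\tilde{W}_i-\tfrac12\lambda^2\sigma^2 k_{\mathcal{S},a})$ has expectation at most $1$, where $k_{\mathcal{S},a}=\sum_{i\in\mathcal{S}}\mathbf{1}(A_i=a)$ is the random number of active terms. Restricting to the event $\{k_{\mathcal{S},a}=k\}$ lets me replace $k_{\mathcal{S},a}$ by the fixed $k$ in the variance proxy, and a Chernoff bound optimized at $\lambda=u/(\sigma^2\sqrt{k})$ yields $\text{P}(\tfrac{1}{\sqrt{k}}|\sum_{i\in\mathcal{S}}\tilde{W}_i|>u,\ k_{\mathcal{S},a}=k)\le 2e^{-u^2/(2\sigma^2)}$ for every fixed context configuration.

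I would then assemble the union bound. Conditioning on the contexts $\mathbf{X}_{1:t-1}$, which are independent of the noise, the collection of ball-induced sets $\mathcal{S}$ is deterministic and, by the same arrangement/VC count used for Lemma~\ref{lem:conc}, has cardinality $\lesssim T^{2d}$ (this is the source of the $dT^{2d}$ factor). Since every $\mathcal{N}_{t,k}(\mathbf{x},a)$ arises from such an $\mathcal{S}$ on the event $\{k_{\mathcal{S},a}=k\}$, summing the per-set bound over the $\lesssim dT^{2d}$ ball-induced sets, the $|\mathcal{A}|$ actions, and the $\le T$ values of $k$ gives $dT^{2d+1}|\mathcal{A}|e^{-u^2/(2\sigma^2)}$; because this bound is uniform over the conditioning contexts, it also holds unconditionally.

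I expect the main obstacle to be the simultaneous data-dependence of the neighbor set and of its size $k$: naive conditioning on the actions would destroy the subgaussianity of the noise, while conditioning on the contexts alone does not pin down the realized sets. The device that resolves this is the self-normalized supermartingale with variance proxy $\sigma^2 k_{\mathcal{S},a}$ intersected with the count event $\{k_{\mathcal{S},a}=k\}$; verifying that this produces a clean $\sigma^2$-subgaussian tail with no spurious logarithmic factors, and that the governing collection of candidate sets can be made context-measurable and hence noise-independent for the union bound, is the delicate part of the argument.
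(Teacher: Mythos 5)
Your proposal is correct, and its skeleton is exactly the paper's: the paper proves Lemma~\ref{lem:conctail} in two lines by invoking Lemma~\ref{lem:conc} with the $1/\sqrt{k}$ normalization (so the exponent $e^{-ku^2/(2\sigma^2)}$ becomes the $k$-free $e^{-u^2/(2\sigma^2)}$) and then taking a union bound over the at most $T$ values of $k$, which is precisely where the factor $T^{2d}\mapsto T^{2d+1}$ comes from in both your argument and the paper's. Where you genuinely depart is in the per-set concentration. The paper's Lemma~\ref{lem:conc} treats each candidate neighbor set as a \emph{fixed} index set $I$ and applies the subgaussian MGF bound directly, then unions over the $\le dT^{2d}$ regions of the bisecting-hyperplane arrangement; this quietly ignores that membership in $\mathcal{N}_{t,k}(\mathbf{x},a)$ is filtered through the actions $A_i$, which are themselves functions of past noise, so the summands over a realized neighbor set are not an a priori fixed subcollection of the $W_i$. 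Your self-normalized supermartingale device --- peeling $\tilde{W}_i=\mathbf{1}(A_i=a)W_i$ in time order with random variance proxy $\sigma^2 k_{\mathcal{S},a}$, then intersecting with the count event $\{k_{\mathcal{S},a}=k\}$ and Chernoff-bounding at the deterministic $\lambda=u/(\sigma^2\sqrt{k})$ --- repairs exactly this adaptivity gap, and your observation that one should union over \emph{ball-induced, context-measurable} candidate sets (rather than realized neighbor sets) is the right way to keep the governing collection independent of the noise, since conditioning on $\mathbf{X}_{1:t-1}$ leaves the noise a subgaussian martingale-difference sequence. In short: the paper's proof is shorter but borrows the (somewhat loose) fixed-set treatment from Lemma~\ref{lem:conc}, while yours is rigorous under the action-dependence at the cost of the supermartingale machinery. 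Two cosmetic points: your two-sided per-set bound carries a factor $2$ that, multiplied by $dT^{2d}\cdot T\cdot|\mathcal{A}|$, strictly exceeds the stated constant (the paper has the same looseness, as its one-sided Chernoff is silently converted to an absolute-value bound), and the conditional subgaussianity $\mathbb{E}[e^{\lambda W_i}\mid\mathcal{F}_i]\le e^{\lambda^2\sigma^2/2}$ you use is slightly stronger than the unconditional form literally stated in Assumption~\ref{ass:basic}(b), though it is clearly what the paper intends and needs as well.
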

\begin{proof}
	From Lemma \ref{lem:conc}, 
	\begin{eqnarray}
		\text{P}\left(\underset{x,a}{\sup} \left|\frac{1}{\sqrt{k}} \sum_{i\in \mathcal{N}_{t,k}(\mathbf{x}, a)} W_i\right| > u\right)\leq dT^{2d} |\mathcal{A}| e^{-\frac{u^2}{2\sigma^2}}.
	\end{eqnarray}
	Lemma \ref{lem:conctail} can then be proved by taking a union bound over all $k$.
\end{proof}
\begin{lem}\label{lem:E2}
	Define event $E$, such that $E=1$ if
	\begin{eqnarray}
		\left|\frac{1}{\sqrt{k}}\sum_{i\in \mathcal{N}_{t,k}(\mathbf{x}, a)} W_i\right|\leq \sqrt{2\sigma^2 \ln (dT^{2d+3}|\mathcal{A}|)}
	\end{eqnarray}
	for all $x,a,k,t$, then $\text{P}(E)\geq 1-1/T$. Moreover, under $E$,
	\begin{eqnarray}
		\eta_a(\mathbf{x})\leq \hat{\eta}_{a,t}(\mathbf{x})\leq \eta_a(\mathbf{x})+2b_{a,t}(\mathbf{x})+2L\rho_{a,t}(\mathbf{x}).
	\end{eqnarray}
\end{lem}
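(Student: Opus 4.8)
The plan is to mirror the two-part structure of Lemma~\ref{lem:E} from the fixed-$k$ analysis, adapting it to the adaptive setting in which the neighborhood size $k_{a,t}(\mathbf{x})$ is selected by the data-driven rule \eqref{eq:kt}.

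First, for the high-probability statement, I would instantiate Lemma~\ref{lem:conctail} with $u=\sqrt{2\sigma^2\ln(dT^{2d+3}|\mathcal{A}|)}$. With this choice $e^{-u^2/(2\sigma^2)}=1/(dT^{2d+3}|\mathcal{A}|)$, so the bound in Lemma~\ref{lem:conctail} becomes $dT^{2d+1}|\mathcal{A}|\cdot 1/(dT^{2d+3}|\mathcal{A}|)=T^{-2}$ at any fixed step $t$. Taking a union bound over the $T$ steps, exactly as in \eqref{eq:pe}, yields $\text{P}(E^c)\leq T\cdot T^{-2}=1/T$, hence $\text{P}(E)\geq 1-1/T$.

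Second, for the sandwich bound, I would write $Y_i=\eta_a(\mathbf{X}_i)+W_i$ for each $i\in\mathcal{N}_t(\mathbf{x},a)$ (recalling $A_i=a$ for these indices) and split the empirical average into a bias part and a noise part. The bias part is controlled by Lipschitz continuity (Assumption~\ref{ass:basic}(c)): since every neighbor satisfies $\norm{\mathbf{X}_i-\mathbf{x}}\leq\rho_{a,t}(\mathbf{x})$, averaging gives $\left|\frac{1}{k_{a,t}(\mathbf{x})}\sum_i\eta_a(\mathbf{X}_i)-\eta_a(\mathbf{x})\right|\leq L\rho_{a,t}(\mathbf{x})$. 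The noise part is controlled on $E$: dividing the inequality defining $E$ by $\sqrt{k_{a,t}(\mathbf{x})}$ gives $\left|\frac{1}{k_{a,t}(\mathbf{x})}\sum_i W_i\right|\leq\sqrt{\frac{2\sigma^2}{k_{a,t}(\mathbf{x})}\ln(dT^{2d+3}|\mathcal{A}|)}=b_{a,t}(\mathbf{x})$. Combining these two bounds places the empirical mean within $b_{a,t}(\mathbf{x})+L\rho_{a,t}(\mathbf{x})$ of $\eta_a(\mathbf{x})$; adding the penalty $b_{a,t}(\mathbf{x})+L\rho_{a,t}(\mathbf{x})$ from \eqref{eq:est} then yields the lower bound $\hat{\eta}_{a,t}(\mathbf{x})\geq\eta_a(\mathbf{x})$ and the upper bound $\hat{\eta}_{a,t}(\mathbf{x})\leq\eta_a(\mathbf{x})+2b_{a,t}(\mathbf{x})+2L\rho_{a,t}(\mathbf{x})$.

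The one point that requires genuine care — and the reason the argument is not a verbatim copy of Lemma~\ref{lem:E} — is that $k_{a,t}(\mathbf{x})$ is itself a random, data-dependent quantity, so I cannot apply a concentration bound for a single prescribed $k$. Instead the event $E$ must hold simultaneously for every admissible value of $k$, which is precisely why Lemma~\ref{lem:conctail} takes the supremum over $k$ (obtained by a union bound over $k\in\{1,\ldots,T\}$, costing the extra factor $T$ in the logarithm relative to Lemma~\ref{lem:conc}). Once $E$ is established uniformly in $k$, the noise bound applies to the realized $k_{a,t}(\mathbf{x})$ for free, and the remainder is the same deterministic bias–variance bookkeeping as in the fixed-$k$ case.
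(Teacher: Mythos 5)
Your proposal is correct and follows essentially the same route as the paper: the probability bound is obtained by instantiating Lemma~\ref{lem:conctail} with $u=\sqrt{2\sigma^2\ln(dT^{2d+3}|\mathcal{A}|)}$ and a union bound over the $T$ steps exactly as in \eqref{eq:pe}, and the sandwich bound is the same bias--noise decomposition the paper carries out in its chain of inequalities for Lemma~\ref{lem:E2}. Your observation that the data-dependent $k_{a,t}(\mathbf{x})$ forces the concentration event to hold uniformly over $k$ is precisely why the paper's Lemma~\ref{lem:conctail} takes the supremum over $k$ via a union bound, costing the extra factor of $T$ (reflected in the $T^{2d+3}$ inside the logarithm of $b_{a,t}$), so the two arguments coincide.
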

\begin{proof}
	The proof is similar to the proof of Lemma \ref{lem:E}.
	\begin{eqnarray}
		&&|\hat{\eta}_{a,t}(\mathbf{x})-(\eta_a(\mathbf{x})+b_{a,t}(\mathbf{x})+L\rho_{a,t}(\mathbf{x}))|\nonumber\\
		&\leq& \left|\frac{1}{k_{a,t}(\mathbf{x})}\sum_{i\in \mathcal{N}_t(\mathbf{x}, a)}(Y_i-\eta_a(\mathbf{x}))\right|\nonumber\\
		&\leq & \left|\frac{1}{k_{a,t}(\mathbf{x})}\sum_{i\in \mathcal{N}_t(\mathbf{x}, a)} (Y_i-\eta_a(\mathbf{X}_i))\right|+\frac{1}{k_{a,t}(\mathbf{x})}\sum_{i\in \mathcal{N}_t(\mathbf{x}, a)}|\eta_a(\mathbf{X}_i)-\eta_a(\mathbf{x})|\nonumber\\
		&\leq & L\rho_{a,t}(\mathbf{x})+b_{a,t}(\mathbf{x}).
	\end{eqnarray}
\end{proof}
With these preparations, we then bound the number of steps around each $\mathbf{x}$ in the next lemma, which is crucially different with Lemma \ref{lem:n}. Here we keep the definition $n(x,a,r):=\sum_{t=1}^T \mathbf{1}\left(\norm{\mathbf{X}_t-\mathbf{x}}<r, A_t=a\right)$ to be the same as \eqref{eq:ndf}, but change the definition of $r_a$ and $n_a$ as follows.
\begin{lem}\label{lem:n2}
	Define
	\begin{eqnarray}
		r_a(\mathbf{x})=\frac{1}{2L\sqrt{C_1}} (\eta^*(\mathbf{x})-\eta_a(\mathbf{x})),
		\label{eq:ranew}
	\end{eqnarray}
	and
	\begin{eqnarray}
		n_a(\mathbf{x})=\frac{C_1\ln T}{(\eta^*(\mathbf{x})-\eta_a(\mathbf{x}))^2},
		\label{eq:na}
	\end{eqnarray}
	in which
	\begin{eqnarray}
		C_1=\max\{4, 32\sigma^2 (2d+3+\ln (d|\mathcal{A}|)) \}.
		\label{eq:c1df}
	\end{eqnarray}
	Then under $E$,
	\begin{eqnarray}
		n(x,a,r_a(\mathbf{x}))\leq n_a(\mathbf{x}).
	\end{eqnarray}
\end{lem}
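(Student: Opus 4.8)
The plan is to argue by contradiction, isolating the \emph{last} step at which action $a$ is pulled inside the ball $B(\mathbf{x},r_a(\mathbf{x}))$ and showing that, had the count $n(x,a,r_a(\mathbf{x}))$ exceeded $n_a(\mathbf{x})$, the adaptive rule~\eqref{eq:kt} would have forced so many neighbours into the average that the estimate of $\eta_a$ became too accurate for $a$ to remain the UCB-maximiser. Throughout, abbreviate $\Delta:=\eta^*(\mathbf{x})-\eta_a(\mathbf{x})$ and $N:=n(x,a,r_a(\mathbf{x}))$, and work on the event $E$ of Lemma~\ref{lem:E2}. If $N=0$ the claim is trivial, so let $t^*$ be the last step with $A_{t^*}=a$ and $\mathbf{X}_{t^*}\in B(\mathbf{x},r_a(\mathbf{x}))$; at that moment exactly $N-1$ earlier action-$a$ samples lie in $B(\mathbf{x},r_a(\mathbf{x}))$. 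I would assume for contradiction that $N>n_a(\mathbf{x})$, so that $N-1\ge \lfloor n_a(\mathbf{x})\rfloor$.

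The second step is a density lower bound on the number of neighbours actually used at $t^*$. Since $\mathbf{X}_{t^*}\in B(\mathbf{x},r_a(\mathbf{x}))$, every one of the $N-1$ in-ball samples lies within distance $2r_a(\mathbf{x})$ of $\mathbf{X}_{t^*}$, whence $\rho_{a,t^*,j}(\mathbf{X}_{t^*})\le 2r_a(\mathbf{x})$ for all $j\le N-1$. The definitions~\eqref{eq:ranew} and~\eqref{eq:na} are matched precisely so that $2Lr_a(\mathbf{x})=\Delta/\sqrt{C_1}=\sqrt{\ln T/n_a(\mathbf{x})}$; hence $L\rho_{a,t^*,j}(\mathbf{X}_{t^*})\le \Delta/\sqrt{C_1}\le \sqrt{\ln T/j}$ for every integer $j\le \lfloor n_a(\mathbf{x})\rfloor$. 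By the monotonicity built into~\eqref{eq:kt} (the neighbour distances increase in $j$ while $\sqrt{\ln T/j}$ decreases, so the admissible set is $\{1,\dots,k_{a,t^*}\}$), this shows the selected index obeys $k_{a,t^*}(\mathbf{X}_{t^*})\ge \lfloor n_a(\mathbf{x})\rfloor$.

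The third step feeds $k_{a,t^*}(\mathbf{X}_{t^*})\ge \lfloor n_a(\mathbf{x})\rfloor$ into the UCB bound $\hat\eta_{a,t^*}(\mathbf{X}_{t^*})\le \eta_a(\mathbf{X}_{t^*})+2b_{a,t^*}(\mathbf{X}_{t^*})+2L\rho_{a,t^*}(\mathbf{X}_{t^*})$ of Lemma~\ref{lem:E2}. With this many neighbours, the choice~\eqref{eq:c1df} makes the variance term $b_{a,t^*}(\mathbf{X}_{t^*})=\sqrt{2\sigma^2\ln(dT^{2d+3}|\mathcal{A}|)/k_{a,t^*}(\mathbf{X}_{t^*})}$ and, through the defining inequality $L\rho_{a,t^*}(\mathbf{X}_{t^*})\le \sqrt{\ln T/k_{a,t^*}(\mathbf{X}_{t^*})}$ in~\eqref{eq:kt}, the bias term, both of order $\sqrt{\ln T/n_a(\mathbf{x})}=\Delta/\sqrt{C_1}$. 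Meanwhile Lipschitzness (Assumption~\ref{ass:basic}(c), applied to $\eta^*$ and $\eta_a$) gives $\eta^*(\mathbf{X}_{t^*})-\eta_a(\mathbf{X}_{t^*})\ge \Delta-2Lr_a(\mathbf{x})=\Delta(1-1/\sqrt{C_1})$. Choosing $C_1$ large enough forces $2b_{a,t^*}(\mathbf{X}_{t^*})+2L\rho_{a,t^*}(\mathbf{X}_{t^*})<\eta^*(\mathbf{X}_{t^*})-\eta_a(\mathbf{X}_{t^*})$, so $\hat\eta_{a,t^*}(\mathbf{X}_{t^*})<\eta^*(\mathbf{X}_{t^*})$. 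Since the optimal action $a^*$ at $\mathbf{X}_{t^*}$ satisfies $\hat\eta_{a^*,t^*}(\mathbf{X}_{t^*})\ge \eta^*(\mathbf{X}_{t^*})$ (again by Lemma~\ref{lem:E2} when its UCB is finite, and trivially if it is infinite), we obtain $\hat\eta_{a,t^*}(\mathbf{X}_{t^*})<\hat\eta_{a^*,t^*}(\mathbf{X}_{t^*})$, contradicting $A_{t^*}=\arg\max_a\hat\eta_{a,t^*}(\mathbf{X}_{t^*})=a$. Hence $N\le n_a(\mathbf{x})$.

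The hard part is the simultaneous control in the middle step. Unlike the fixed-$k$ analysis (Lemma~\ref{lem:n}), where each pull of $a$ in the ball directly certifies that fewer than $k$ samples are present, a dense cluster here does \emph{not} cap $k_{a,t^*}$ from above, so the count cannot be bounded by $k$ in one shot; the whole argument instead plays the density lower bound on $k_{a,t^*}$ against the UCB-accuracy upper bound. The constant $C_1$ in~\eqref{eq:c1df} must be large enough to make the comparison of $2b+2L\rho$ with the stabilised gap $\Delta(1-1/\sqrt{C_1})$ strict, and the only delicate bookkeeping is tracking these constants cleanly — including the floor in $\lfloor n_a(\mathbf{x})\rfloor$ and the estimate $\ln(dT^{2d+3}|\mathcal{A}|)\le(2d+3+\ln(d|\mathcal{A}|))\ln T$ used to absorb $b_{a,t^*}$ into $C_1$.
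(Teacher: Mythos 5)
Your proposal is correct and is essentially the paper's own argument: contradiction at the last in-ball pull $t^*$, a counting step showing the $\lfloor n_a(\mathbf{x})\rfloor$ in-ball samples force $k_{a,t^*}(\mathbf{X}_{t^*})\geq\lfloor n_a(\mathbf{x})\rfloor$ (equivalently $\rho_{a,t^*}\leq 2r_a(\mathbf{x})$) via the matched calibration $2Lr_a(\mathbf{x})=\sqrt{\ln T/n_a(\mathbf{x})}$, and then the sandwich of Lemma \ref{lem:E2} together with $\hat{\eta}_{a,t^*}\geq\hat{\eta}_{a^*,t^*}\geq\eta^*$ to contradict the size of the gap using \eqref{eq:c1df}. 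If anything, your write-up is slightly more careful than the paper's, which silently identifies $\mathbf{x}$ with $\mathbf{X}_t$ in the final comparison and drops the floor, whereas you make the Lipschitz transfer $\eta^*(\mathbf{X}_{t^*})-\eta_a(\mathbf{X}_{t^*})\geq\Delta(1-1/\sqrt{C_1})$ and the resulting constant bookkeeping explicit.
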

\begin{proof}
	The proof of Lemma \ref{lem:n2} is shown in Appendix \ref{sec:n2}.
\end{proof}
From Lemma \ref{lem:n2},
\begin{eqnarray}
	\mathbb{E}[n(x,a,r_a(\mathbf{x}))]&\leq& \text{P}(E)\mathbb{E}[n(x,a,r_a(\mathbf{x}))|E]+\text{P}(E^c)\mathbb{E}[n(x,a,r_a(\mathbf{x}))|E^c]\nonumber\\
	&\leq &n_a(\mathbf{x})+1.
\end{eqnarray}
From the definition of $q_a$ in \eqref{eq:qdf},
\begin{eqnarray}
	\int_{B(x,r_a(\mathbf{x}))}q_a(\mathbf{u}) du\leq n_a(\mathbf{x})+1.
\end{eqnarray}

Now we bound $R_a$. Similar to \eqref{eq:g}, let random variable $\mathbf{Z}$ follows a distribution with pdf $g$:
\begin{eqnarray}
	g(\mathbf{z})=\frac{1}{M_Z[(\eta^*(\mathbf{z})-\eta_a(\mathbf{z}))\vee \epsilon ]^d}.
\end{eqnarray}
The difference with the case with fixed $k$ is that in \eqref{eq:g}, $\epsilon=4b$. However, now $b_{a,t}(\mathbf{x})$ varies among $\mathbf{x}$, thus we do not determine $\epsilon$ based on $b$. Instead, for the adaptive nearest neighbor method, $\epsilon$ will be determined after we get the final bound of $R_a$.

We show the following lemma.
\begin{lem}\label{lem:ra1new}
	There exists a constant $C_2$, such that
	\begin{eqnarray}
		\int_\mathcal{X} (\eta^*(\mathbf{x})-\eta_a(\mathbf{x})) q_a(\mathbf{x})\mathbf{1}(\eta^*(\mathbf{x})-\eta_a(\mathbf{x})>\epsilon)d\mathbf{x} \leq C_2M_Z \mathbb{E}\left[\int_{B(\mathbf{Z}, r_a(\mathbf{Z}))}q_a(\mathbf{u})(\eta^*(\mathbf{u})-\eta_a(\mathbf{u})) du\right].
		\label{eq:ra1new}
	\end{eqnarray}
\end{lem}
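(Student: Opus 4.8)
The plan is to follow the template of the proof of Lemma \ref{lem:R1}, since \eqref{eq:ra1new} is precisely the adaptive-$k$ analog of \eqref{eq:largeeps}. Writing $\Delta_a(\mathbf{x}):=\eta^*(\mathbf{x})-\eta_a(\mathbf{x})$ for the suboptimality gap, the only structural change is that the radius is now $r_a(\mathbf{x})=\frac{1}{2L\sqrt{C_1}}\Delta_a(\mathbf{x})$ from Lemma \ref{lem:n2} rather than the fixed-$k$ radius; crucially it remains proportional to $\Delta_a(\mathbf{x})$, which is all the argument uses. First I would unfold the right-hand side: writing the expectation over $\mathbf{Z}$ with density $g(\mathbf{z})=1/(M_Z[\Delta_a(\mathbf{z})\vee\epsilon]^d)$, the normalizing factor $M_Z$ cancels, giving
$$M_Z\,\mathbb{E}\!\left[\int_{B(\mathbf{Z},r_a(\mathbf{Z}))}q_a(\mathbf{u})\Delta_a(\mathbf{u})\,du\right]=\int\frac{1}{[\Delta_a(\mathbf{z})\vee\epsilon]^d}\int q_a(\mathbf{u})\Delta_a(\mathbf{u})\mathbf{1}(\norm{\mathbf{u}-\mathbf{z}}\le r_a(\mathbf{z}))\,du\,d\mathbf{z}.$$

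Then, applying Fubini's theorem to interchange the $\mathbf{u}$ and $\mathbf{z}$ integrals, the right-hand side equals $\int q_a(\mathbf{u})\Delta_a(\mathbf{u})\Phi(\mathbf{u})\,du$, where $\Phi(\mathbf{u}):=\int[\Delta_a(\mathbf{z})\vee\epsilon]^{-d}\mathbf{1}(\norm{\mathbf{u}-\mathbf{z}}\le r_a(\mathbf{z}))\,d\mathbf{z}$. It therefore suffices to establish a uniform lower bound $\Phi(\mathbf{u})\gtrsim 1$ over all $\mathbf{u}$ with $\Delta_a(\mathbf{u})>\epsilon$: on that region $1\le C_2\Phi(\mathbf{u})$, so the left-hand integrand $q_a(\mathbf{u})\Delta_a(\mathbf{u})$ is pointwise dominated by $C_2\,q_a(\mathbf{u})\Delta_a(\mathbf{u})\Phi(\mathbf{u})$, and extending the latter integral from $\{\Delta_a>\epsilon\}$ to all of $\mathcal{X}$ (the integrand being nonnegative) yields the claim.

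The heart of the argument, and the main obstacle, is this uniform lower bound on $\Phi(\mathbf{u})$. I would fix $\mathbf{u}$ with $\Delta_a(\mathbf{u})>\epsilon$ and restrict the $\mathbf{z}$-integral to the ball $B(\mathbf{u},\rho)$ with $\rho=\Delta_a(\mathbf{u})/(2L(\sqrt{C_1}+1))$. Since $\eta^*$ (a maximum of $L$-Lipschitz functions, hence $L$-Lipschitz) and $\eta_a$ are $L$-Lipschitz, $\Delta_a$ is $2L$-Lipschitz, so every $\mathbf{z}\in B(\mathbf{u},\rho)$ satisfies $\Delta_a(\mathbf{z})\ge\Delta_a(\mathbf{u})-2L\rho$. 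This specific choice of $\rho$ is exactly calibrated so that $\rho\le r_a(\mathbf{z})$ throughout the ball, i.e.\ $\mathbf{u}\in B(\mathbf{z},r_a(\mathbf{z}))$, keeping the indicator active on all of $B(\mathbf{u},\rho)$. The same Lipschitz estimate gives the matching upper bound $\Delta_a(\mathbf{z})\vee\epsilon\le\Delta_a(\mathbf{u})+2L\rho\lesssim\Delta_a(\mathbf{u})$, whence $[\Delta_a(\mathbf{z})\vee\epsilon]^{-d}\gtrsim\Delta_a(\mathbf{u})^{-d}$. Integrating over $B(\mathbf{u},\rho)$ then gives $\Phi(\mathbf{u})\gtrsim v_d\rho^d\Delta_a(\mathbf{u})^{-d}=v_d/[2L(\sqrt{C_1}+2)]^d$, a constant independent of $\mathbf{u}$; taking $C_2$ to be its reciprocal finishes the proof. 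The delicate point is the two-sided Lipschitz comparison together with the calibration $\rho\le r_a(\mathbf{z})$, which is what makes the resulting constant uniform in $\mathbf{u}$ rather than degrading as $\Delta_a(\mathbf{u})\to\epsilon$ or $\Delta_a(\mathbf{u})\to\infty$.
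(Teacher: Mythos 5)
Your proposal is correct and is essentially the paper's own argument: both swap the order of integration (Tonelli, valid by nonnegativity) and lower bound the inner $\mathbf{z}$-integral over a ball centered at $\mathbf{u}$ of radius proportional to $\eta^*(\mathbf{u})-\eta_a(\mathbf{u})$, using the $2L$-Lipschitz continuity of the gap twice --- once to keep the indicator $\norm{\mathbf{u}-\mathbf{z}}\leq r_a(\mathbf{z})$ active on the whole ball, and once to compare $[(\eta^*(\mathbf{z})-\eta_a(\mathbf{z}))\vee \epsilon]^{-d}$ with $(\eta^*(\mathbf{u})-\eta_a(\mathbf{u}))^{-d}$ on the region $\{\eta^*-\eta_a>\epsilon\}$. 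Your calibration $\rho=\frac{\eta^*(\mathbf{u})-\eta_a(\mathbf{u})}{2L(\sqrt{C_1}+1)}$ plays exactly the role of the paper's choice $\frac{2}{3}r_a(\mathbf{u})$, and your uniform lower bound on $\Phi(\mathbf{u})$ repackages the paper's chain of inequalities (steps (a) and (b) in Appendix \ref{sec:ra1new}) without changing the substance.
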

\begin{proof}
	The proof of Lemma \ref{lem:ra1new} is shown in Appendix \ref{sec:ra1new}.
\end{proof}
We then bound the right hand side of \eqref{eq:ra1new}.
\begin{lem}\label{lem:ra1bound}
	\begin{eqnarray}
		\mathbb{E}\left[\int_{B(\mathbf{Z}, r_a(\mathbf{Z}))}q_a(\mathbf{u})(\eta^*(\mathbf{u})-\eta_a(\mathbf{u})) du\right]\lesssim \frac{1}{M_Z}\left(\epsilon^{\alpha-d-1}\ln T + T\epsilon^{1+\alpha}\right).	
	\end{eqnarray}
\end{lem}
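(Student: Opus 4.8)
The plan is to split the expectation according to whether the suboptimality gap at $\mathbf{Z}$ is large or small, and to bound the two pieces by genuinely different arguments; throughout I write $\Delta_a(\mathbf{x}):=\eta^*(\mathbf{x})-\eta_a(\mathbf{x})$. The workhorse for both pieces is that $\eta^*-\eta_a$ is $2L$-Lipschitz and $r_a(\mathbf{z})=\Delta_a(\mathbf{z})/(2L\sqrt{C_1})$, so for every $\mathbf{u}\in B(\mathbf{z},r_a(\mathbf{z}))$ one has $|\Delta_a(\mathbf{u})-\Delta_a(\mathbf{z})|\le 2Lr_a(\mathbf{z})=\Delta_a(\mathbf{z})/\sqrt{C_1}\le\tfrac12\Delta_a(\mathbf{z})$ since $C_1\ge 4$; hence $\Delta_a(\mathbf{u})$ and $\Delta_a(\mathbf{z})$ agree up to a factor $2$ inside every ball. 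I would also first record the Lebesgue-measure form of the margin condition: under Assumption \ref{ass:bounded}, $\mathrm{Leb}(\{0<\Delta_a\le u\})\le(C_\alpha/c)u^{\alpha}$.

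On the region $\{\Delta_a(\mathbf{Z})>\epsilon\}$, where $g(\mathbf{Z})=1/(M_Z\Delta_a(\mathbf{Z})^d)$, I would bound the inner integral pointwise using the count estimate $\int_{B(\mathbf{z},r_a(\mathbf{z}))}q_a\,du\le n_a(\mathbf{z})+1$ (established right after Lemma \ref{lem:n2}) together with $\Delta_a(\mathbf{u})\le\tfrac32\Delta_a(\mathbf{z})$, giving $\int_B q_a\Delta_a\,du\lesssim\Delta_a(\mathbf{z})\,n_a(\mathbf{z})=C_1\ln T/\Delta_a(\mathbf{z})$. Thus this contribution is $\lesssim\frac{\ln T}{M_Z}\int_{\Delta_a>\epsilon}\Delta_a(\mathbf{z})^{-(d+1)}d\mathbf{z}$, and a layer-cake / integration-by-parts estimate against $\mathrm{Leb}(\{0<\Delta_a\le u\})\le(C_\alpha/c)u^{\alpha}$ gives $\int_{\Delta_a>\epsilon}\Delta_a^{-(d+1)}d\mathbf{z}\lesssim\epsilon^{\alpha-d-1}$ (convergent because $\alpha-d-1<0$). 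This produces the first target term $\epsilon^{\alpha-d-1}\ln T/M_Z$.

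The delicate region is $\{\Delta_a(\mathbf{Z})\le\epsilon\}$, where $n_a(\mathbf{Z})\to\infty$ and the pointwise count bound is useless (indeed $\int\Delta_a^{-1}d\mathbf{z}$ diverges for $\alpha<1$). The remedy is to exchange the order of integration: by Tonelli this contribution equals $\int q_a(\mathbf{u})\Delta_a(\mathbf{u})\,W(\mathbf{u})\,du$ with coverage weight $W(\mathbf{u})=\int g(\mathbf{z})\mathbf{1}(\Delta_a(\mathbf{z})\le\epsilon,\ \mathbf{u}\in B(\mathbf{z},r_a(\mathbf{z})))\,d\mathbf{z}$. On this set $g=1/(M_Z\epsilon^d)$, and the comparability above forces any admissible $\mathbf{z}$ to lie in $B(\mathbf{u},\Delta_a(\mathbf{u})/(L\sqrt{C_1}))$ and any contributing $\mathbf{u}$ to satisfy $\Delta_a(\mathbf{u})\le\tfrac32\epsilon$; hence $W(\mathbf{u})\lesssim\Delta_a(\mathbf{u})^d/(M_Z\epsilon^d)$. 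Inserting $q_a\le Tf$ (Lemma \ref{lem:qub}) and $\Delta_a(\mathbf{u})^{d+1}\le(\tfrac32\epsilon)^{d+1}$, this is at most $\frac{T}{M_Z\epsilon^d}\int_{0<\Delta_a\le\frac32\epsilon}f\,\Delta_a^{d+1}\,du\lesssim\frac{T}{M_Z\epsilon^d}\,\epsilon^{d+1}\,\text{P}(0<\Delta_a\le\tfrac32\epsilon)\lesssim\frac{T\epsilon^{1+\alpha}}{M_Z}$, using the margin condition once more. Summing the two regions gives the claim.

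I expect the change-of-order step to be the crux: recognizing that the local count bound $\int_B q_a\le n_a$ must be \emph{abandoned} in the low-gap region and replaced by a covering estimate combined with the density upper bound $q_a\le Tf$ is precisely what separates the adaptive analysis from the fixed-$k$ case, where $\int_B q_a\le k$ stays bounded and the naive argument behind Lemma \ref{lem:ebound} goes through. The two layer-cake integrals are routine once the margin condition has been translated into a Lebesgue bound on the sublevel sets of $\Delta_a$.
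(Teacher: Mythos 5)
Your proposal is correct, and its skeleton coincides with the paper's: the same Lipschitz comparability $\Delta_a(\mathbf{u})\in[\tfrac12\Delta_a(\mathbf{z}),\tfrac32\Delta_a(\mathbf{z})]$ on $B(\mathbf{z},r_a(\mathbf{z}))$ (from $C_1\ge 4$), the same split along $\{\Delta_a\gtrless\epsilon\}$ induced by the $\vee\,\epsilon$ in $g$, the count bound $\int_B q_a\,du\le n_a(\mathbf{z})+1=C_1\ln T/\Delta_a^2(\mathbf{z})+1$ plus a layer-cake integral against $\mathrm{Leb}(\{0<\Delta_a\le u\})\le (C_\alpha/c)u^\alpha$ on the high-gap piece (yielding $\epsilon^{\alpha-d-1}\ln T/M_Z$, convergent since $\alpha\le d<d+1$), and $q_a\le Tf$ with ball radius $\sim\Delta_a$ plus the margin condition on the low-gap piece (yielding $T\epsilon^{1+\alpha}/M_Z$). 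The one genuine divergence is the mechanism on $\{\Delta_a(\mathbf{Z})\le\epsilon\}$: the paper bounds the inner integral pointwise in $\mathbf{z}$ by $(n_a(\mathbf{z})+1)\wedge\bigl(Tf(\mathbf{z})r_a^d(\mathbf{z})\bigr)$, which tacitly replaces $T\int_{B(\mathbf{z},r_a(\mathbf{z}))}f(\mathbf{u})\,du$ by $Tf(\mathbf{z})\,r_a^d(\mathbf{z})$ — a step that, read literally, needs local upper control of $f$ near $\mathbf{z}$ that Assumption \ref{ass:bounded} (only $f\ge c$) does not supply. Your Tonelli swap with the coverage weight $W(\mathbf{u})\lesssim \Delta_a^d(\mathbf{u})/(M_Z\epsilon^d)$ evaluates $f$ at the integration variable $\mathbf{u}$ rather than at the ball center, so it reaches the same $T\epsilon^{1+\alpha}$ term without any regularity of $f$; in effect you carried out rigorously the interchange that the paper's pointwise shorthand presupposes. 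Two minor points: dropping the $+1$ in $n_a(\mathbf{z})+1$ discards a term $\int\Delta_a^{-d}\mathbf{1}(\Delta_a>\epsilon)\,d\mathbf{z}\lesssim\epsilon^{\alpha-d}$, which the paper keeps but which is dominated by $\epsilon^{\alpha-d-1}\ln T$ anyway (the paper absorbs it the same way); and your admissibility radius $\|\mathbf{z}-\mathbf{u}\|\le\Delta_a(\mathbf{u})/(L\sqrt{C_1})$ follows correctly from $\Delta_a(\mathbf{z})\le 2\Delta_a(\mathbf{u})$, so the weight bound is sound.
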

\begin{proof}
	The proof of Lemma \ref{lem:ra1bound} is shown in Appendix \ref{sec:ra1bound}.
\end{proof}
From Lemma \ref{lem:ra1new} and \ref{lem:ra1bound}, 
\begin{eqnarray}
	\int_\mathcal{X} (\eta^*(\mathbf{x})-\eta_a(\mathbf{x})) q_a(\mathbf{x})\mathbf{1}(\eta^*(\mathbf{x})-\eta_a(\mathbf{x})>\epsilon)d\mathbf{x}\lesssim \epsilon^{\alpha-d-1}\ln T + T\epsilon^{1+\alpha}.
\end{eqnarray}
Moreover,
\begin{eqnarray}
	\int_\mathcal{X} (\eta^*(\mathbf{x})-\eta_a(\mathbf{x}))q_a(\mathbf{x})\mathbf{1}(\eta^*(\mathbf{x})-\eta_a(\mathbf{x})\leq \epsilon)d\mathbf{x}&\leq& T\epsilon\int f(\mathbf{x})\mathbf{1}(\eta^*(\mathbf{x})-\eta_a(\mathbf{x})\leq \epsilon)d\mathbf{x}\nonumber\\
	&\lesssim & T\epsilon^{1+\alpha}.
\end{eqnarray}
Therefore 
\begin{eqnarray}
	R_a\lesssim \epsilon^{\alpha-d-1}\ln T + T\epsilon^{1+\alpha}.
	\label{eq:ratemp}
\end{eqnarray}
Note that we have not specified the value of $\epsilon$ earlier. Therefore, in \eqref{eq:ratemp}, $\epsilon$ can take any values. We can then select $\epsilon$ to minimize the right hand side of \eqref{eq:ratemp}. Therefore, let
\begin{eqnarray}
	\epsilon\sim \left(\frac{\ln T}{T}\right)^\frac{1}{d+2},
\end{eqnarray}
then
\begin{eqnarray}
	R_a\lesssim T\left(\frac{T}{\ln T}\right)^{-\frac{1+\alpha}{d+2}}.
\end{eqnarray}
The overall regret can then be bounded by summation over $a$:
\begin{eqnarray}
	R\lesssim T|\mathcal{A}|\left(\frac{T}{\ln T}\right)^{-\frac{1+\alpha}{d+2}}.
\end{eqnarray}

\section{Proof of Theorem \ref{thm:tail}}\label{sec:tail}

Define
\begin{eqnarray}
	g(\mathbf{z})=\frac{1}{M_Z(\eta^*(\mathbf{z})-\eta_a(\mathbf{z}))^d} \mathbf{1}\left(f(\mathbf{z})\geq \frac{1}{T}, \eta^*(\mathbf{z})-\eta_a(\mathbf{z})>\epsilon(\mathbf{x})\right),
\end{eqnarray}
in which 
\begin{eqnarray}
	\epsilon(\mathbf{x})=(Tf(\mathbf{x}))^{-\frac{1}{d+2}},
\end{eqnarray}
and $M_Z$ is the normalization constant, which ensures that $\int g(\mathbf{z})dz = 1$. Let $\mathbf{Z}$ be a random variable with pdf $g$. We then bound $R_a$ for the case with unbounded support on the contexts, under Assumption \ref{ass:bounded} and \ref{ass:tail}.
\begin{eqnarray}
	R_a&=&\int_\mathcal{X} (\eta^*(\mathbf{x})-\eta_a(\mathbf{x}))q_a(\mathbf{x}) d\mathbf{x} \nonumber\\
	&=&\int_\mathcal{X} (\eta^*(\mathbf{x})-\eta_a(\mathbf{x}))q_a(\mathbf{x})\mathbf{1}\left(\eta^*(\mathbf{x})-\eta_a(\mathbf{x})>\epsilon(\mathbf{x}), f(\mathbf{x})\geq \frac{1}{T}\right)d\mathbf{x}\nonumber\\
	&&+\int_\mathcal{X} (\eta^*(\mathbf{x})-\eta_a(\mathbf{x}))q_a(\mathbf{x})\mathbf{1}\left(\eta^*(\mathbf{x})-\eta_a(\mathbf{x})\leq \epsilon(\mathbf{x}), f(\mathbf{x})\geq \frac{1}{T}\right)d\mathbf{x}\nonumber\\
	&&+\int_\mathcal{X} (\eta^*(\mathbf{x})-\eta_a(\mathbf{x}))q_a(\mathbf{x}) \mathbf{1}\left(f(\mathbf{x})<\frac{1}{T}\right) d\mathbf{x}\nonumber\\
	&:= & I_1+I_2+I_3.
	\label{eq:radectail}
\end{eqnarray}
Now we bound three terms in \eqref{eq:radectail} separately.

\textbf{Bound of $I_1$.} Following Lemma \ref{lem:R1} and \ref{lem:ra1new}, it can be shown that for some constant $C_3$, such that
\begin{eqnarray}
	\int_\mathcal{X} (\eta^*(\mathbf{x})-\eta_a(\mathbf{x}))q_a(\mathbf{x})\mathbf{1}(\eta^*(\mathbf{x})-\eta_a(\mathbf{x})>\epsilon(\mathbf{x}))d\mathbf{x}\leq C_3 M_Z\mathbb{E}\left[\int_{B(\mathbf{Z}, r_a(\mathbf{Z}))} q_a(\mathbf{u}) (\eta^*(\mathbf{u})-\eta_a(\mathbf{u}))du\right].
	\label{eq:ra1tail}
\end{eqnarray}
The right hand side of \eqref{eq:ra1tail} can be bounded as follows.
\begin{eqnarray}
	&&\mathbb{E}\left[\int_{B(\mathbf{Z},r_a(\mathbf{Z}))} q_a(\mathbf{u}) (\eta^*(\mathbf{u})-\eta_a(\mathbf{u}))du\right]\nonumber\\
	&\leq & \frac{3}{2}\mathbb{E}\left[\int_{B(\mathbf{Z}, r_a(\mathbf{Z}))} q_a(\mathbf{u}) (\eta^*(\mathbf{Z})-\eta_a(\mathbf{Z})) du\right]\nonumber\\
	&\leq & \frac{3}{2}\mathbb{E}\left[(n_a(\mathbf{Z})+1)(\eta^*(\mathbf{Z})-\eta_a(\mathbf{Z}))\right]\nonumber\\
	&\leq & \frac{3}{2M_Z}\int (n_a(\mathbf{z})+1)(\eta^*(\mathbf{z})-\eta_a(\mathbf{z}))g(\mathbf{z}) dz\nonumber\\
	&\leq & \frac{3}{2M_Z}\int \left(\frac{C_1\ln T}{\eta^*(\mathbf{z})-\eta_a(\mathbf{z})}+\eta^*(\mathbf{z})-\eta_a(\mathbf{z})\right)\frac{1}{(\eta^*(\mathbf{z})-\eta_a(\mathbf{z}))^d}\mathbf{1}\left(\eta(\mathbf{z})\geq \epsilon(\mathbf{z}), f(\mathbf{z})\geq \frac{1}{T}\right)dz\nonumber\\
	&\lesssim & \frac{\ln T}{M_Z}\int (\eta^*(\mathbf{z})-\eta_a(\mathbf{z}))^{-(d+1)}\mathbf{1}\left(\eta(\mathbf{z})\geq \epsilon(\mathbf{z}), f(\mathbf{z})\geq \frac{1}{T}\right)dz.
\end{eqnarray}
Hence
\begin{eqnarray}
	I_1\lesssim \ln T \int (\eta^*(\mathbf{x})-\eta_a(\mathbf{x}))^{-(d+1)}\mathbf{1}\left(\eta(\mathbf{x})\geq \epsilon(\mathbf{x}), f(\mathbf{x})\geq \frac{1}{T}\right) d\mathbf{x}.
\end{eqnarray}
Let $c\in R$ that will be determined later.
\begin{eqnarray}
	&&\int (\eta^*(\mathbf{x})-\eta_a(\mathbf{x}))^{-(d+1)}\mathbf{1}(\eta^*(\mathbf{x})-\eta_a(\mathbf{x})\geq \epsilon(\mathbf{x}), f(\mathbf{x})\geq c)d\mathbf{x}\nonumber\\
	&\leq & \frac{1}{c}\int (\eta^*(\mathbf{x})-\eta_a(\mathbf{x}))^{-(d+1)} \mathbf{1}\left(\eta^*(\mathbf{x})-\eta_a(\mathbf{x})\geq (Tc)^{-\frac{1}{d+2}}\right) f(\mathbf{x}) d\mathbf{x}\nonumber\\
	&=&\frac{1}{c}\mathbb{E}\left[(\eta^*(\mathbf{X})-\eta_a(\mathbf{X}))^{-(d+1)}\mathbf{1}\left(\eta^*(\mathbf{X})-\eta_a(\mathbf{X})\geq (Tc)^{-\frac{1}{d+2}}\right) \right]\nonumber\\
	&\leq & \frac{1}{c} (Tc)^{\frac{d+1}{d+2}\left(1-\frac{\alpha}{d+1}\right)}\nonumber\\
	&=& T(Tc)^{-\frac{\alpha+1}{d+2}}.
	\label{eq:I1a}
\end{eqnarray}
To bound the integration of the other side, i.e. $1/T\leq f(\mathbf{x})<c$, we use Lemma \ref{lem:tail}.

\begin{eqnarray}
	&& \int (\eta^*(\mathbf{x})-\eta_a(\mathbf{x}))^{-(d+1)}\mathbf{1}\left(\eta(\mathbf{x})\geq \epsilon(\mathbf{x}), \frac{1}{T}\leq f(\mathbf{x})<c\right) d\mathbf{x}\nonumber\\
	&\leq & \int \epsilon^{-(d+1)}(\mathbf{x}) \mathbf{1}\left(\frac{1}{T}\leq f(\mathbf{x})< c\right) d\mathbf{x}\nonumber\\
	&=& T^\frac{d+1}{d+2}\int f^\frac{d+1}{d+2}(\mathbf{x}) \mathbf{1}\left(\frac{1}{T}\leq f(\mathbf{x})<c\right) d\mathbf{x}\nonumber\\
	&=&T^\frac{d+1}{d+2} \mathbb{E}\left[f^{-\frac{1}{d+2}}(\mathbf{X})\mathbf{1}\left(\frac{1}{T}\leq f(\mathbf{X})<c\right)\right]\nonumber\\
	&\lesssim & \left\{
	\begin{array}{ccc}
		T^\frac{d+1}{d+2}\left(T^{\frac{1}{d+2}-\beta} +c^{\beta-\frac{1}{d+2}}\right) &\text{if} & \beta\neq \frac{1}{d+2}\\
		T^\frac{d+1}{d+2} \ln(Tc) &\text{if} & \beta=\frac{1}{d+2}.
	\end{array}
	\right.
	\label{eq:I1b}
\end{eqnarray}
From \eqref{eq:I1a} and \eqref{eq:I1b},
\begin{eqnarray}
	I_1\lesssim \left\{
	\begin{array}{ccc}
		T\ln T\left[(Tc)^{-\frac{\alpha+1}{d+2}}+T^{-\frac{1}{d+2}}c^{\beta-\frac{1}{d+2}} + T^{-\beta}\right] &\text{if} & \beta\neq \frac{1}{d+2}\\
		T\ln T \left[(Tc)^{-\frac{\alpha+1}{d+2}}+T^{-\frac{1}{d+2}}\ln(Tc)\right] &\text{if} & \beta= \frac{1}{d+2}.
	\end{array}
	\right.
	\label{eq:I1c}
\end{eqnarray}
To minimize \eqref{eq:I1c}, let
\begin{eqnarray}
	c=T^{-\frac{\alpha}{\alpha+(d+2)\beta}},
\end{eqnarray}
then
\begin{eqnarray}
	I_1\lesssim \left\{
	\begin{array}{ccc}
		T^{1-\frac{(\alpha+1)\beta}{\alpha+(d+2)\beta}}\ln T + T^{1-\beta}\ln T &\text{if} & \beta \neq \frac{1}{d+2}\\
		T^\frac{d+1}{d+2}\ln^2 T &\text{if} & \beta = \frac{1}{d+2}.
	\end{array}
	\right.
\end{eqnarray}

\textbf{Bound of $I_2$.} We still discuss $f(\mathbf{x})\geq c$ and $1/T\leq f(\mathbf{x})<c$ separately. For $f(\mathbf{x})\geq c$,
\begin{eqnarray}
	&&\int (\eta^*(\mathbf{x})-\eta_a(\mathbf{x}))q_a(\mathbf{x}) \mathbf{1}(\eta^*(\mathbf{x})-\eta_a(\mathbf{x})\leq \epsilon(\mathbf{x}), f(\mathbf{x})\geq c)d\mathbf{x}\nonumber\\
	&\leq & T\int (\eta^*(\mathbf{x})-\eta_a(\mathbf{x}))\mathbf{1}\left(\eta^*(\mathbf{x})-\eta_a(\mathbf{x})\leq (Tc)^{-\frac{1}{d+2}}\right) f(\mathbf{x}) d\mathbf{x}\nonumber\\
	&\leq & T(Tc)^{-\frac{1}{d+2}} \text{P}\left(\eta^*(\mathbf{X})-\eta_a(\mathbf{X})\leq (Tc)^{-\frac{1}{d+2}}\right)\nonumber\\
	&\lesssim & T(Tc)^{-\frac{1+\alpha}{d+2}}.
\end{eqnarray}

For $1/T\leq f(\mathbf{x})<c$,
\begin{eqnarray}
	&&\int (\eta^*(\mathbf{x})-\eta_a(\mathbf{x}))q_a(\mathbf{x})\mathbf{1}\left(\eta^*(\mathbf{x})-\eta_a(\mathbf{x})\leq \epsilon(\mathbf{x}), \frac{1}{T}\leq f(\mathbf{x})<c\right) d\mathbf{x}\nonumber\\
	&\leq & T\int \epsilon(\mathbf{x}) f(\mathbf{x})\mathbf{1}\left(\frac{1}{T}\leq f(\mathbf{x})<c\right) d\mathbf{x}\nonumber\\
	&=& T^\frac{d+1}{d+2} \int f^\frac{d+1}{d+2}(\mathbf{x}) \mathbf{1}\left(\frac{1}{T}\leq f(\mathbf{x})<c\right) d\mathbf{x}\nonumber\\
	&\lesssim & \left\{
	\begin{array}{ccc}
		T^\frac{d+1}{d+2}(T^{\frac{1}{d+2}-\beta} + c^{\beta-\frac{1}{d+2}}) &\text{if} & \beta \neq \frac{1}{d+2}\\
		T^\frac{d+1}{d+2}\ln(Tc) &\text{if} & \beta = \frac{1}{d+2}.
	\end{array}
	\right.
\end{eqnarray}
Similar to $I_1$, pick $c=T^{-\alpha/(\alpha+(d+2)\beta)}$.

\textbf{Bound of $I_3$.} From Assumption \ref{ass:tail}(b), $\eta^*(\mathbf{x})-\eta_a(\mathbf{x})\leq M$. Moreover, from Lemma \ref{lem:qub}, $q(\mathbf{x})\leq Tf(\mathbf{x})$ for almost all $\mathbf{x}\in \mathcal{X}$. Hence
\begin{eqnarray}
	\int (\eta^*(\mathbf{x})-\eta_a(\mathbf{x}))q_a(\mathbf{x})\mathbf{1}\left(f(\mathbf{x})<\frac{1}{T}\right) d\mathbf{x}\leq MT\int f(\mathbf{x})\mathbf{1}\left(f(\mathbf{x})<\frac{1}{T}\right) d\mathbf{x}\lesssim T^{1-\beta}.
\end{eqnarray}
Combine $I_1$, $I_2$ and $I_3$,
\begin{eqnarray}
	R_a\lesssim \left\{
	\begin{array}{ccc}
		T^{1-\frac{(\alpha+1)\beta}{\alpha+(d+2)\beta}}\ln T + T^{1-\beta}\ln T &\text{if} & \beta\neq \frac{1}{d+2}\\
		T^\frac{d+1}{d+2}\ln^2 T &\text{if} & \beta=\frac{1}{d+2}.
	\end{array}
	\right.
\end{eqnarray}
Theorem \ref{thm:tail} can then proved by $R=\sum_{a\in \mathcal{A}}R_a$.

\section{Proof of Lemmas}
\subsection{Proof of Lemma \ref{lem:conc}}\label{sec:conc}

From Assumption \ref{ass:bounded}(c), $W_i$ is subgaussian with parameter $\sigma^2$. Therefore for any fixed set $I\subset \{1,\ldots, T\}$ with $|I|=k$,
\begin{eqnarray}
	\mathbb{E}\left[\exp\left(\lambda\sum_{i\in I}W_i\right)\right] \leq \exp\left(\frac{k}{2}\lambda^2 \sigma^2\right),
\end{eqnarray}
and
\begin{eqnarray}
	\text{P}\left(\frac{1}{k}\sum_{i\in I}W_i>u\right) &\leq &\inf_\lambda e^{-\lambda u}\mathbb{E}\left[\exp\left(\frac{\lambda}{k}\sum_{i\in I} W_i\right) \right] \nonumber\\
	&\leq & \inf_\lambda e^{-\lambda u} \exp\left(\frac{\lambda^2\sigma^2}{2k}\right)\nonumber\\
	&=& \exp\left(-\frac{ku^2}{2\sigma^2}\right).
\end{eqnarray}
Now we need to give a union bound\footnote{The construction of hyperplanes follows the proof of Lemma 3 in \cite{jiang2019non} and Appendix H.5 in \cite{zhao2024robust}.}. Let $A_{ij}$ be $d-1$ dimensional hyperplane that bisects $\mathbf{X}_i$, $\mathbf{X}_j$. Then the number of planes is at most $N_p=T(T-1)/2$. Note that $N_p$ planes divide a $d$ dimensional space into at most $N_r=\sum_{j=0}^d \binom{N_p}{j}$ regions. Therefore
\begin{eqnarray}
	N_r\leq \sum_{j=0}^d \binom{\frac{1}{2}T(T-1)}{j} \leq d\left(\frac{1}{2}T(T-1)\right)^d<dT^{2d}.
\end{eqnarray}
The $k$ nearest neighbors for all $\mathbf{x}$ within a region should be the same. Combining with the action space $\mathcal{A}$, there are at most $N_r|\mathcal{A}|$ regions. Hence
\begin{eqnarray}
	|\{\mathcal{N}_t(\mathbf{x}, a)|\mathbf{x}\in \mathcal{X}, a\in \mathcal{A} \}|\leq dT^{2d}|\mathcal{A}|.
\end{eqnarray}
Therefore
\begin{eqnarray}
	\text{P}\left(\cup_{\mathbf{x}\in \mathcal{X}}\cup_{a\in \mathcal{A}} \left\{\frac{1}{k}\left|\sum_{i\in \mathcal{N}_t(\mathbf{x}, a)} W_i\right|>u \right\}\right)\leq dT^{2d} |\mathcal{A}| e^{-\frac{ku^2}{2\sigma^2}}.
\end{eqnarray}
The proof is complete.

\subsection{Proof of Lemma \ref{lem:E}}\label{sec:E}
From \eqref{eq:ucb1}, with $|\{i<t|A_i=a\}|\geq k$,
\begin{eqnarray}
	\hat{\eta}_{a,t}(\mathbf{x})&=&\frac{1}{k}\sum_{i\in \mathcal{N}_t(\mathbf{x}, a)} Y_i+b+L\rho_{a,t}(\mathbf{x})\nonumber\\
	&=&\frac{1}{k}\sum_{i\in \mathcal{N}_t(\mathbf{x}, a)}\eta_a(\mathbf{X}_i)+\frac{1}{k}\sum_{i\in \mathcal{N}_t(\mathbf{x}, a)}W_i+b+L\rho_{a,t}(\mathbf{x}).
\end{eqnarray}
Hence
\begin{eqnarray}
	|\hat{\eta}_{a,t}(\mathbf{x})-(\eta_a(\mathbf{x})+b+L\rho_{a,t}(\mathbf{x}))|&\leq& \frac{1}{k}\sum_{i\in \mathcal{N}_t(\mathbf{x}, a)}|\eta_a(\mathbf{X}_i)-\eta_a(\mathbf{x})|+\left|\frac{1}{k}\sum_{i\in \mathcal{N}_t(\mathbf{x}, a)} W_i\right|\nonumber\\
	&\leq & L\rho_{a,t}(\mathbf{x})+b,
	\label{eq:abserr}
\end{eqnarray}
which comes from Assumption \ref{ass:bounded}(d) and Lemma \ref{lem:conc}. Lemma \ref{lem:E} can then be proved using \eqref{eq:abserr}.

\subsection{Proof of Lemma \ref{lem:n}}\label{sec:n}
We prove Lemma \ref{lem:n} by contradiction. If $n(x,a,r_a(\mathbf{x}))>k$, then let
\begin{eqnarray}
	t=\max\{\tau| \norm{\mathbf{X}_\tau-\mathbf{x}}\leq r_a(\mathbf{x}), A_\tau = a \}
\end{eqnarray}
be the last step falling in $B(x, r_a(\mathbf{x}))$ with action a. Then $B(x, r_a(\mathbf{x}))\leq B(\mathbf{X}_t, 2r_a(\mathbf{x}))$, and thus there are at least $k$ points in $B(\mathbf{X}_t, 2r_a(\mathbf{x}))$. Therefore,
\begin{eqnarray}
	\rho_{a,t}(\mathbf{x})<2r_a(\mathbf{x})
	\label{eq:con1}	
\end{eqnarray}
Denote
\begin{eqnarray}
	a^*(\mathbf{x})=\underset{a}{\arg\max} \eta_a(\mathbf{x})
\end{eqnarray}
as the best action at context $\mathbf{x}$. $A_t=a$ is selected only if the UCB of action $a$ is not less than the UCB of action $a^*(\mathbf{x})$, i.e.
\begin{eqnarray}
	\hat{\eta}_{a,t}(\mathbf{X}_t)\geq \hat{\eta}_{a^*(\mathbf{x}), t}(\mathbf{X}_t).
	\label{eq:ineq1}
\end{eqnarray}
From Lemma \ref{lem:E},
\begin{eqnarray}
	\hat{\eta}_{a,t}(\mathbf{X}_t)\leq \eta_a(\mathbf{X}_t)+2b+2L\rho_{a,t}(\mathbf{X}_t),
	\label{eq:ineq2}
\end{eqnarray}
and
\begin{eqnarray}
	\hat{\eta}_{a^*(\mathbf{x}), t} (\mathbf{X}_t)\geq \eta_{a^*(\mathbf{x})}(\mathbf{X}_t)=\eta^*(\mathbf{X}_t).
	\label{eq:ineq3}
\end{eqnarray}
From \eqref{eq:ineq1}, \eqref{eq:ineq2} and \eqref{eq:ineq3},
\begin{eqnarray}
	\eta_a(\mathbf{X}_t)+2b+2L\rho_{a,t}(\mathbf{X}_t)\geq \eta^*(\mathbf{X}_t),
\end{eqnarray}
which yields
\begin{eqnarray}
	\rho_{a,t}(\mathbf{X}_t) &\geq & \frac{\eta^*(\mathbf{X}_t)-\eta_a(\mathbf{X}_t)-2b}{2L}\nonumber\\
	&\geq & \frac{\eta^*(\mathbf{x})-\eta_a(\mathbf{x})-2b-2Lr_a(\mathbf{x})}{2L}\nonumber\\
	&=& 2r_a(\mathbf{x}),
	\label{eq:con2}
\end{eqnarray}
in which the last step comes from the definition of $r_a$ in \eqref{eq:radf}. Note that \eqref{eq:con2} contradicts \eqref{eq:con1}. Therefore $n(x,a,r_a(\mathbf{x}))\leq k$. The proof of Lemma \ref{lem:n} is complete.

\subsection{Proof of Lemma \ref{lem:R1}}\label{sec:R1}
\begin{eqnarray}
	&&\mathbb{E}\left[\int_{B(\mathbf{Z}, r_a(\mathbf{Z}))} q_a(\mathbf{u})(\eta^*(\mathbf{u})-\eta_a(\mathbf{u}))du\right]\nonumber\\
	&=& \int_\mathcal{X} g(\mathbf{z})\left[\int_{B(z,r_a(\mathbf{z}))} q_a(\mathbf{u})(\eta^*(\mathbf{u})-\eta_a(\mathbf{u}))du\right] dz\nonumber\\
	&\overset{(a)}{\geq} & \int_\mathcal{X} \int_{B(u, \frac{3}{4}r_a(\mathbf{u}))} g(\mathbf{z})q_a(\mathbf{u}) (\eta^*(\mathbf{u})-\eta_a(\mathbf{u}))dzdu\nonumber\\
	&\geq & \int_\mathcal{X} \left[\underset{\norm{z-u}\leq 3r_a(\mathbf{u})/4}{\inf} g(\mathbf{z}) \cdot \left(\frac{3}{4}\right)^d r_a^d(\mathbf{u})\right] q_a(\mathbf{u}) (\eta^*(\mathbf{u})-\eta_a(\mathbf{u}))dzdu\nonumber\\
	&\overset{(b)}{\geq} &\left(\frac{3}{4}\right)^d\left(\frac{4}{5}\right)^d \int_\mathcal{X} g(\mathbf{u})r_a^d(\mathbf{u}) q_a(\mathbf{u})(\eta^*(\mathbf{u})-\eta_a(\mathbf{u}))du\nonumber\\
	&\overset{(c)}{=}&\left(\frac{3}{5}\right)^d \frac{1}{M_Z}\int_\mathcal{X} \frac{1}{[(\eta^*(\mathbf{u})-\eta_a(\mathbf{u}))\vee \epsilon]^d} \left(\frac{\eta^*(\mathbf{u})-\eta_a(\mathbf{u})-2b}{6L}\right)^d q_a(\mathbf{u}) (\eta^*(\mathbf{u})-\eta_a(\mathbf{u}))du\nonumber\\
	&\overset{(d)}{\geq} & \left(\frac{3}{5}\right)^d \frac{1}{M_Z}\int_\mathcal{X} \mathbf{1}(\eta^*(\mathbf{u})-\eta_a(\mathbf{u})>\epsilon)\frac{1}{(\eta^*(\mathbf{u})-\eta_a(\mathbf{u}))^d}\left(\frac{\eta^*(\mathbf{u})-\eta_a(\mathbf{u})}{12L}\right)^d q_a(\mathbf{u})(\eta^*(\mathbf{u})-\eta_a(\mathbf{u}))du\nonumber\\
	&=&\left(\frac{3}{5}\right)^d \frac{1}{(12L)^d M_Z}\int_\mathcal{X} q_a(\mathbf{u})(\eta^*(\mathbf{u})-\eta_a(\mathbf{u}))\mathbf{1}(\eta^*(\mathbf{u})-\eta_a(\mathbf{u})>\epsilon)du.
	\label{eq:convert}
\end{eqnarray}

Based on \eqref{eq:convert}, Lemma \ref{lem:R1} holds with
\begin{eqnarray}
	C_1=\left(\frac{5}{3}\right)^d (12L)^d.
\end{eqnarray}
Now we explain some key steps in \eqref{eq:convert}.

In (a), the order of integration is swapped. Note that if $u\in B(z, r_a(\mathbf{z}))$, then $\norm{u-z} \leq r_a(\mathbf{z})$. From Assumption \ref{ass:bounded}(d), $|\eta_a(\mathbf{u})-\eta_a(\mathbf{z})|\leq Lr_a(\mathbf{z})$. Then from \eqref{eq:radf},
\begin{eqnarray}
	r_a(\mathbf{u})&=&\frac{\eta^*(\mathbf{u})-\eta_a(\mathbf{u})-2b}{6L}\nonumber\\
	&\leq & \frac{\eta^*(\mathbf{z})-\eta_a(\mathbf{z})+2Lr_a(\mathbf{z})-2b}{6L}\nonumber\\
	&=& r_a(\mathbf{z})+\frac{1}{3}r_a(\mathbf{z})\nonumber\\
	&=&\frac{4}{3}r_a(\mathbf{z}),
\end{eqnarray}
thus $\norm{z-u}\leq 3r_a(\mathbf{u})/4$ implies $\norm{u-z}\leq r_a(\mathbf{z})$. Therefore (a) holds.

For (b) in \eqref{eq:convert}, note that for $\norm{z-u}\leq 3r_a(\mathbf{u})/4$, using Assumption \ref{ass:bounded}(d) again,
\begin{eqnarray}
	\eta^*(\mathbf{z})-\eta_a(\mathbf{z})\leq \eta^*(\mathbf{u})-\eta_a(\mathbf{u})+L\frac{3r_a(\mathbf{u})}{4}+L\frac{3r_a(\mathbf{u})}{4}=\eta^*(\mathbf{u})-\eta_a(\mathbf{u})+\frac{3}{2}Lr_a(\mathbf{u}).
\end{eqnarray}
Then
\begin{eqnarray}
	\frac{g(\mathbf{z})}{g(\mathbf{u})} &=& \frac{[(\eta^*(\mathbf{u})-\eta_a(\mathbf{u}))\vee \epsilon]^d}{[(\eta^*(\mathbf{z})-\eta_a(\mathbf{z}))\vee \epsilon]^d}\nonumber\\
	&\geq & \frac{[(\eta^*(\mathbf{u})-\eta_a(\mathbf{u}))\vee \epsilon]^d}{\left[(\eta^*(\mathbf{u})-\eta_a(\mathbf{u})+\frac{3}{2}Lr_a(\mathbf{u}))\vee \epsilon\right]^d }\nonumber\\
	&\geq & \left(\frac{4}{5}\right)^d,
\end{eqnarray}
in which the last step comes from the definition of $r_a$ in \eqref{eq:radf}.

(c) uses \eqref{eq:radf} and the definition of $g$ in \eqref{eq:g}.

For (d), recall the statement of Lemma \ref{lem:R1}, $\epsilon=4b$. Therefore, if $\eta^*(\mathbf{u})-\eta_a(\mathbf{u})>\epsilon$, then $\eta^*(\mathbf{u})-\eta_a(\mathbf{u})-2b>(\eta^*(\mathbf{u})-\eta_a(\mathbf{u}))/2$.

\subsection{Proof of Lemma \ref{lem:ebound}}\label{sec:ebound}

\begin{eqnarray}
	&&\mathbb{E}\left[\int_{B(\mathbf{Z}, r_a(\mathbf{Z}))} q_a(\mathbf{u}) (\eta^*(\mathbf{u})-\eta_a(\mathbf{u})) du\right] \nonumber\\
	&\overset{(a)}{\leq} & \frac{4}{3}\mathbb{E}\left[\int_{B(\mathbf{Z}, r_a(\mathbf{Z}))} q_a(\mathbf{u})(\eta^*(\mathbf{z})-\eta_a(\mathbf{z})) du\right]\nonumber\\
	&\overset{(b)}{\leq} & \frac{4}{3}(k+1)\mathbb{E}\left[(\eta^*(\mathbf{Z})-\eta_a(\mathbf{Z}))\right]\nonumber\\
	&=& \frac{4}{3}(k+1) \int_\mathcal{X} (\eta^*(\mathbf{z})-\eta_a(\mathbf{z})) g(\mathbf{z}) dz\nonumber\\
	&=& \frac{4(k+1)}{3M_Z}\int_\mathcal{X} (\eta^*(\mathbf{z}) - \eta_a(\mathbf{z}))\frac{1}{[(\eta^*(\mathbf{z})-\eta_a(\mathbf{z}))\vee \epsilon]^d} dz\nonumber\\
	&=& \frac{4(k+1)}{3M_Z}\left[\int_\mathcal{X} (\eta^*(\mathbf{z})-\eta_a(\mathbf{z}))^{-(d-1)}\mathbf{1}(\eta^*(\mathbf{z})-\eta_a(\mathbf{z})>\epsilon)dz\right.\nonumber\\
	&&\left.+\frac{1}{\epsilon^d}\int_\mathcal{X} (\eta^*(\mathbf{z})-\eta_a(\mathbf{z}))\mathbf{1}(\eta^*(\mathbf{z})-\eta_a(\mathbf{z})<\epsilon)dz\right].
	\label{eq:firstdecomp}
\end{eqnarray}
For (a), from Assumption \ref{ass:bounded}(d),
\begin{eqnarray}
	\eta^*(\mathbf{u})-\eta_a(\mathbf{u})&\leq & \eta^*(\mathbf{z})-\eta_a(\mathbf{z})+2Lr_a(\mathbf{z})\nonumber\\
	&=& \eta^*(\mathbf{z})-\eta_a(\mathbf{z})+2L\frac{\eta^*(\mathbf{z})-\eta_a(\mathbf{z})-2b}{6L}\nonumber\\
	&\leq & \frac{4}{3} (\eta^*(\mathbf{z})-\eta_a(\mathbf{z})).
\end{eqnarray}

(b) comes from \eqref{eq:qint}.

The first term in the bracket in \eqref{eq:firstdecomp} can be bounded by
\begin{eqnarray}
	&&\int_\mathcal{X} (\eta^*(\mathbf{z})-\eta_a(\mathbf{z}))^{-(d-1)}\mathbf{1}(\eta^*(\mathbf{z})-\eta_a(\mathbf{z})>\epsilon)dz\nonumber\\
	&\overset{(a)}{\leq} & \frac{1}{c} \int_\mathcal{X}(\eta^*(\mathbf{z})-\eta_a(\mathbf{z}))^{-(d-1)} \mathbf{1}(\eta^*(\mathbf{z})-\eta_a(\mathbf{z})>\epsilon) f(\mathbf{z})dz\nonumber\\
	&\overset{(b)}{=}& \frac{1}{c}\mathbb{E}\left[(\eta^*(\mathbf{X})-\eta_a(\mathbf{X}))^{-(d-1)}\mathbf{1}(\eta^*(\mathbf{X})-\eta_a(\mathbf{X})>\epsilon)\right] \nonumber\\
	&=& \frac{1}{c}\int_0^\infty \text{P}(\epsilon<\eta^*(\mathbf{X})-\eta_a(\mathbf{X})<t^{-\frac{1}{d-1}})dt\nonumber\\
	&\leq & \frac{1}{c}\int_0^{\epsilon^{-(d-1)}} \text{P}(\eta^*(\mathbf{X})-\eta_a(\mathbf{X})<t^{-\frac{1}{d-1}}) dt.
	\label{eq:psmallt}
\end{eqnarray}
(a) comes from Assumption \ref{ass:bounded}, which requires that $f(\mathbf{x})\geq c$ all over the support. In (b), the random variable $X$ follows a distribution with pdf $f$.

If $d>\alpha+1$, then from Assumption \ref{ass:bounded}(b),
\begin{eqnarray}
	\eqref{eq:psmallt} \leq \frac{C_\alpha}{c}\int_0^{\epsilon^{-(d-1)}} t^{-\frac{\alpha}{d-1}} dt = \frac{C_\alpha (d-1)}{c(d-1-\alpha)}\epsilon^{\alpha+1-d}.
	\label{eq:larged}
\end{eqnarray}
If $d=\alpha+1$, then
\begin{eqnarray}
	\eqref{eq:psmallt} \leq \frac{1}{c}\int_0^1 dt + \frac{C_\alpha}{c} \int_1^{\epsilon^{-(d-1)}} t^{-\frac{\alpha}{d-1}} dt=\frac{1}{c}+\frac{C_\alpha (d-1)}{c}\ln \frac{1}{\epsilon}.
	\label{eq:medd}
\end{eqnarray}
If $d<\alpha+1$, then
\begin{eqnarray}
	\eqref{eq:psmallt} \leq \frac{1}{c}\int_0^1 dt +\frac{C_\alpha}{c} \int_1^{\epsilon^{-(d-1)}} t^{-\frac{\alpha}{d-1}} dt\leq \frac{1}{c} + \frac{C_\alpha (d-1)}{c(\alpha+1-d)}.
	\label{eq:smalld}
\end{eqnarray}
Now it remains to bound the second term in \eqref{eq:firstdecomp}:
\begin{eqnarray}
	\int_\mathcal{X} (\eta^*(\mathbf{z})-\eta_a(\mathbf{z})) \mathbf{1}(\eta^*(\mathbf{z})-\eta_a(\mathbf{z})<\epsilon)dz&\leq& \frac{1}{c}\mathbb{E}[(\eta^*(\mathbf{X})-\eta_a(\mathbf{X}))\mathbf{1}(\eta^*(\mathbf{X})-\eta_a(\mathbf{X})<\epsilon)]\nonumber\\
	&\leq& \frac{C_\alpha}{c} \epsilon^{\alpha+1}.
	\label{eq:second}
\end{eqnarray}
Therefore, from \eqref{eq:firstdecomp}, \eqref{eq:larged}, \eqref{eq:medd}, \eqref{eq:smalld} and \eqref{eq:second},
\begin{eqnarray}
	\mathbb{E}\left[\int_{B(\mathbf{Z}, r_a(\mathbf{Z}))} q_a(\mathbf{u})(\eta^*(\mathbf{u})-\eta_a(\mathbf{u})) du\right] \lesssim \left\{
	\begin{array}{ccc}
		\frac{k}{M_Zc}\epsilon^{\alpha+1-d} &\text{if} & d>\alpha+1\\
		\frac{k}{M_Zc}\ln \frac{1}{\epsilon} &\text{if} & d=\alpha+1\\
		\frac{k}{M_Zc} &\text{if} & d<\alpha+1.
	\end{array}
	\right.
\end{eqnarray}
\subsection{Proof of Lemma \ref{lem:n2}}\label{sec:n2}
We prove Lemma \ref{lem:n2} by contradiction. Suppose now that $n(x,a,r_a(\mathbf{x})) > n_a(\mathbf{x})$. Let $t$ be the last sample falling in $B(x, r_a(\mathbf{x}))$, i.e.
\begin{eqnarray}
	t:=\max\left\{j|\norm{\mathbf{X}_j-\mathbf{x}} \leq r_a(\mathbf{x}), A_j=a \right\}.
	\label{eq:tdf}
\end{eqnarray}
We first show that $\rho_{a,t}(\mathbf{x})\leq 2r_a(\mathbf{x})$. From \eqref{eq:tdf} and the condition $n(x,a,r_a(\mathbf{x}))>n_a(\mathbf{x})$, before time step $t$, there are already at least $n_a(\mathbf{x})$ steps in $B(x,r_a(\mathbf{x}))$. Note that $\norm{\mathbf{X}_t-\mathbf{x}}\leq r_a(\mathbf{x})$, thus $B(x,r_a(\mathbf{x}))\subseteq B(\mathbf{X}_t, 2r_a(\mathbf{x}))$. Therefore, there are already at least $n_a(\mathbf{x})$ samples with action $a$ in $B(\mathbf{X}_t, 2r_a(\mathbf{x}))$. Recall that $\rho_{a,t}(\mathbf{x})=\rho_{a,t,k_{a,t}(\mathbf{x})}(\mathbf{x})$. If $\rho_{a,t}(\mathbf{x})>2r_a(\mathbf{x})$, then $k_{a,t}(\mathbf{x})>n_a(\mathbf{x})$. From \eqref{eq:kt},
\begin{eqnarray}
	L\rho_{a,t}(\mathbf{x})=L\rho_{a,t,k_{a,t}(\mathbf{x})}(\mathbf{x})\leq \sqrt{\frac{\ln T}{k_{a,t}(\mathbf{x})}}\leq \sqrt{\frac{\ln T}{n_a(\mathbf{x})}}=2Lr_a(\mathbf{x}),
\end{eqnarray}
then contradiction occurs. Therefore $\rho_{a,t}(\mathbf{x})\leq 2r_a(\mathbf{x})$. 

From Lemma \ref{lem:E2}, under $E$, 
\begin{eqnarray}
	\hat{\eta}_{a,t}(\mathbf{x})\leq \eta_a(\mathbf{x})+2\sqrt{\frac{2\sigma^2}{n_a(\mathbf{x})}\ln (dT^{2d+3}|\mathcal{A}|)}+2Lr_a(\mathbf{x}).
	\label{eq:etahat1}
\end{eqnarray}
Since action $a$ is selected at time $t$, from Lemma \ref{lem:E2},
\begin{eqnarray}
	\hat{\eta}_{a,t}(\mathbf{x})\geq \hat{\eta}_{a^*(\mathbf{x}), t}(\mathbf{x})\geq \eta^*(\mathbf{x}).
	\label{eq:etahat2}
\end{eqnarray}
Combining \eqref{eq:etahat1} and \eqref{eq:etahat2} yields 
\begin{eqnarray}
	2\sqrt{\frac{2\sigma^2}{n_a(\mathbf{x})}\ln (dT^{2d+3}|\mathcal{A}|)}+2Lr_a(\mathbf{x})\geq \eta^*(\mathbf{x})-\eta_a(\mathbf{x}).
	\label{eq:cont1}
\end{eqnarray}
We now derive an inequality that contradicts with \eqref{eq:cont1}. From \eqref{eq:na} and \eqref{eq:c1df},
\begin{eqnarray}
	2\sqrt{\frac{2\sigma^2}{n_a(\mathbf{x})}\ln (dT^{2d+3}|\mathcal{A}|)}&=&2\sqrt{\frac{2\sigma^2}{C_1\ln T}\ln(dT^{2d+3}|\mathcal{A}|)}(\eta^*(\mathbf{x})-\eta_a(\mathbf{x}))\nonumber\\
	&\leq & \frac{1}{2}\sqrt{\frac{\ln (dT^{2d+3}|\mathcal{A}|)}{(2d+3+\ln(d|\mathcal{A}|))\ln T}} (\eta^*(\mathbf{x})-\eta_a(\mathbf{x}))\nonumber\\
	&<&\frac{1}{2}(\eta^*(\mathbf{x})-\eta_a(\mathbf{x})).
	\label{eq:partn}
\end{eqnarray}
From \eqref{eq:ranew},
\begin{eqnarray}
	2Lr_a(\mathbf{x})=\frac{1}{\sqrt{C_1}}(\eta^*(\mathbf{x})-\eta_a(\mathbf{x}))\leq \frac{1}{2}(\eta^*(\mathbf{x})-\eta_a(\mathbf{x})).
	\label{eq:partr}
\end{eqnarray}
From \eqref{eq:partn} and \eqref{eq:partr},
\begin{eqnarray}
	2\sqrt{\frac{2\sigma^2}{n_a(\mathbf{x})}\ln (dT^{2d+3}|\mathcal{A}|)}+2Lr_a(\mathbf{x})< \eta^*(\mathbf{x})-\eta_a(\mathbf{x}).
	\label{eq:cont2}	
\end{eqnarray}
\eqref{eq:cont2} contradicts \eqref{eq:cont1}. Hence 
\begin{eqnarray}
	n(x,a,r_a(\mathbf{x}))\leq n_a(\mathbf{x}).
\end{eqnarray}

\subsection{Proof of Lemma \ref{lem:ra1new}}\label{sec:ra1new}
\begin{eqnarray}
	&&\mathbb{E}\left[\int_{B(\mathbf{Z}, r_a(\mathbf{Z}))}q_a(\mathbf{u})(\eta^*(\mathbf{u})-\eta_a(\mathbf{u}))du\right] \nonumber\\
	&\overset{(a)}{\geq} & \int_\mathcal{X} \int_{B(u, 2r_a(\mathbf{u})/3)} g(\mathbf{z}) q_a(\mathbf{u})(\eta^*(\mathbf{u})-\eta_a(\mathbf{u}))dzdu\nonumber\\
	&\geq & \int_\mathcal{X} \left(\underset{\norm{z-u}\leq 2r_a(\mathbf{u})/3}{\inf} g(\mathbf{z})\right) \left(\frac{2}{3}\right)^d r_a^d(\mathbf{u})q_a(\mathbf{u}) (\eta^*(\mathbf{u})-\eta_a(\mathbf{u}))du\nonumber\\
	&\overset{(b)}{\geq} & \left(\frac{2}{3}\right)^d \left(\frac{3}{4}\right)^d \int_\mathcal{X} g(\mathbf{u})r_a^d(\mathbf{u}) q_a(\mathbf{u}) (\eta^*(\mathbf{u})-\eta_a(\mathbf{u}))du\nonumber\\
	&=&\frac{1}{2^dM_Z}\int_\mathcal{X} \frac{1}{[(\eta^*(\mathbf{u})-\eta_a(\mathbf{u}))\vee \epsilon^d]} r_a^d(\mathbf{u}) q_a(\mathbf{u}) (\eta^*(\mathbf{u})-\eta_a(\mathbf{u}))du\nonumber\\
	&\leq & \frac{1}{2^dM_Z}\int_\mathcal{X}\mathbf{1}(\eta^*(\mathbf{u})-\eta_a(\mathbf{u})>\epsilon)\frac{1}{(\eta^*(\mathbf{u})-\eta_a(\mathbf{u}))^d} \frac{(\eta^*(\mathbf{u})-\eta_a(\mathbf{u}))^d}{(4L)^d} q_a(\mathbf{u}) (\eta^*(\mathbf{u})-\eta_a(\mathbf{u}))du\nonumber\\
	&\leq & \frac{1}{2^{3d} L^d M_Z}\int_\mathcal{X} q_a(\mathbf{u}) (\eta^*(\mathbf{u})-\eta_a(\mathbf{u}))\mathbf{1}(\eta^*(\mathbf{u})-\eta_a(\mathbf{u})>\epsilon)du.
\end{eqnarray}

For (a), if $\norm{u-z}\leq r_a(\mathbf{z})$, then from the definition of $r_a$ in \eqref{eq:ranew},
\begin{eqnarray}
	\frac{r_a(\mathbf{u})}{r_a(\mathbf{z})}=\frac{\eta^*(\mathbf{u})-\eta_a(\mathbf{u})}{\eta^*(\mathbf{z})-\eta_a(\mathbf{z})}\leq \frac{\eta^*(\mathbf{z})-\eta_a(\mathbf{z})+2Lr_a(\mathbf{z})}{\eta^*(\mathbf{z})-\eta_a(\mathbf{z})}=1+\frac{1}{\sqrt{C_1}}\leq \frac{3}{2}.
\end{eqnarray}

For (b), 
\begin{eqnarray}
	\frac{g(\mathbf{z})}{g(\mathbf{u})}&=&\frac{[(\eta^*(\mathbf{u})-\eta_a(\mathbf{u}))\vee \epsilon]^d}{[(\eta^*(\mathbf{z})-\eta_a(\mathbf{z}))\vee \epsilon]^d}\nonumber\\
	&\geq & \frac{[(\eta^*(\mathbf{u})-\eta_a(\mathbf{u}))\vee \epsilon]^d}{\left[(\eta^*(\mathbf{u})-\eta_a(\mathbf{u})+\frac{4}{3}Lr_a(\mathbf{u}))\vee \epsilon \right]^d}\nonumber\\
	&\geq & \left(\frac{3}{4}\right)^d.
\end{eqnarray}

\subsection{Proof of Lemma \ref{lem:ra1bound}}\label{sec:ra1bound}
\begin{eqnarray}
	&&\mathbb{E}\left[\int_{B(\mathbf{Z}, r_a(\mathbf{Z}))} q_a(\mathbf{u}) (\eta^*(\mathbf{u})-\eta_a(\mathbf{u})) du\right] \nonumber\\
	&\overset{(a)}{\leq} & \frac{3}{2}\mathbb{E}\left[\int_{B(\mathbf{Z}, r_a(\mathbf{Z}))} q_a(\mathbf{u})(\eta^*(\mathbf{z})-\eta_a(\mathbf{z}))du\right]\nonumber\\
	&\leq & \frac{3}{2} \mathbb{E}\left[((n_a(\mathbf{Z})+1)\wedge (Tf(\mathbf{Z})r_a^d(\mathbf{Z}))) (\eta^*(\mathbf{Z})-\eta_a(\mathbf{Z}))\right]\nonumber\\
	&=&\frac{3}{2}\int ((n_a(\mathbf{z})+1)\wedge (Tf(\mathbf{z})r_a^d(\mathbf{z})))(\eta^*(\mathbf{z})-\eta_a(\mathbf{z}))\frac{1}{M_Z[(\eta^*(\mathbf{z})-\eta_a(\mathbf{z}))\vee \epsilon]^d}dz\nonumber\\
	&=&\frac{3}{2M_g}\left[\int\left(\frac{C_1\ln T}{\eta^*(\mathbf{z})-\eta_a(\mathbf{z})}+\eta^*(\mathbf{z})-\eta_a(\mathbf{z})\right) \frac{1}{(\eta^*(\mathbf{z})-\eta_a(\mathbf{z}))^d}\mathbf{1}(\eta^*(\mathbf{z})-\eta_a(\mathbf{z})>\epsilon)dz\right.\nonumber\\
	&&\left.+\int Tf(\mathbf{z})r_a^d(\mathbf{z})(\eta^*(\mathbf{z})-\eta_a(\mathbf{z}))\frac{1}{\epsilon^d}\mathbf{1}(\eta^*(\mathbf{z})-\eta_a(\mathbf{z})\leq \epsilon)dz \right]\nonumber\\
	&\lesssim & \frac{1}{M_Z}\mathbb{E}\left[(\eta^*(\mathbf{Z})-\eta_a(\mathbf{Z}))^{-(d+1)}\mathbf{1}(\eta^*(\mathbf{Z})-\eta_a(\mathbf{Z})>\epsilon)\right]\ln T\nonumber\\
	&&+\frac{T}{\epsilon^d}\mathbb{E}\left[(\eta^*(\mathbf{Z})-\eta_a(\mathbf{Z}))^{d+1}\mathbf{1}(\eta^*(\mathbf{Z})-\eta_a(\mathbf{Z})\leq \epsilon)\right]\nonumber\\
	&\lesssim & \frac{1}{M_Z}\left(\epsilon^{\alpha-d-1}\ln T+T\epsilon^{1+\alpha}\right).
\end{eqnarray}
For (a), 
\begin{eqnarray}
	\eta^*(\mathbf{u})-\eta_a(\mathbf{u})&\leq& \eta^*(\mathbf{z})-\eta_a(\mathbf{z})+2Lr_a(\mathbf{z})\nonumber\\
	&\leq & \eta^*(\mathbf{z})-\eta_a(\mathbf{z})+\frac{1}{\sqrt{C_1}}(\eta^*(\mathbf{z})-\eta_a(\mathbf{z}))\nonumber\\
	&\leq & \frac{3}{2}(\eta^*(\mathbf{z})-\eta_a(\mathbf{z})).
\end{eqnarray}
\subsection{Proof of Lemma \ref{lem:tail}}\label{sec:tailpf}
\begin{eqnarray}
	\mathbb{E}[f^{-p}(\mathbf{X})\mathbf{1}(a\leq f(\mathbf{X})<b)]&=&\int_0^\infty \text{P}(f(\mathbf{X})<t^{-\frac{1}{p}}, a\leq f(\mathbf{X})<b)dt\nonumber\\
	&=& \int_0^{b^{-p}}\text{P}(f(\mathbf{X})<b)dt+\int_{b^{-p}}^{a^{-p}} \text{P}(f(\mathbf{X})<t^{-\frac{1}{p}})dt\nonumber\\
	&\leq & C_\beta b^{\beta-p}+C_\beta \int_{b^{-p}}^{a^{-p}} t^{-\frac{\beta}{p}} dt.
	\label{eq:star}
\end{eqnarray}
If $p>\beta$, i.e. $\beta/p<1$, then
\begin{eqnarray}
	\eqref{eq:star} \leq C_\beta b^{\beta - p}+\frac{C_\beta}{1-\beta/p}(a^{-p})^{1-\frac{\beta}{p}} \sim a^{\beta-p}.
\end{eqnarray}
If $p<\beta$, then
\begin{eqnarray}
	\eqref{eq:star} \leq C_\beta b^{\beta-p}+C_\beta \int_{b^{-p}}^\infty t^{-\frac{\beta}{p}} dt=C_\beta b^{\beta-p}+\frac{C_\beta}{\beta/p-1}b^{\beta-p}\sim b^{\beta - p}.
\end{eqnarray}
If $p=\beta$, then
\begin{eqnarray}
	\eqref{eq:star} \leq C_\beta b^{\beta - p}+C_\beta \ln \frac{a^{-p}}{b^{-p}}\sim \ln \frac{b}{a}.
\end{eqnarray}

\subsection{Proof of Lemma \ref{lem:rs}}\label{sec:rs}
Our proof follows the proof of Lemma 3.1 in \cite{rigollet2010nonparametric}.
\begin{eqnarray}
	U=\sum_{t=1}^T (\eta^*(\mathbf{X}_t)-\eta_{A_t}(\mathbf{X}_t)),
\end{eqnarray}
and
\begin{eqnarray}
	V=\sum_{t=1}^T \mathbf{1}(\eta_{A_t}(\mathbf{X}_t)<\eta^*(\mathbf{X}_t)).
\end{eqnarray}
Then $R=\mathbb{E}[U]$ and $S=\mathbb{E}[V]$. For any $\delta>0$,
\begin{eqnarray}
	U&\geq& \delta \sum_{t=1}^T \mathbf{1}(\eta_{A_t}(\mathbf{X}_t)<\eta^*(\mathbf{X}_t))\mathbf{1}(|\eta^*(\mathbf{X}_t)-\eta_{A_t}(\mathbf{X}_t)|>\delta)\nonumber\\
	&\geq & \delta\left[V-\sum_{t=1}^T \mathbf{1}\left(A_t\neq a^*(\mathbf{X}_t), |\eta^*(\mathbf{X}_t)-\eta_{A_t}(\mathbf{X}_t)|\leq \delta\right)\right].
\end{eqnarray}
Take expectations, we have
\begin{eqnarray}
	R\geq \delta S-TC_\alpha \delta^{\alpha+1}.
	\label{eq:Rlb}
\end{eqnarray}
Now we minimize the right hand side of \eqref{eq:Rlb}. By making the derivative to be zero, let
\begin{eqnarray}
	\delta=\left(\frac{S}{(\alpha+1)TC_\alpha}\right)^\frac{1}{\alpha}.
\end{eqnarray}
Then
\begin{eqnarray}
	R&\geq& S\left(\frac{S}{(\alpha+1)TC_\alpha}\right)^\frac{1}{\alpha}-TC_\alpha \left(\frac{S}{(\alpha+1)TC_\alpha}\right)^\frac{\alpha+1}{\alpha}\nonumber\\
	&=&\frac{\alpha S}{\alpha+1}\left(\frac{S}{(\alpha+1)TC_\alpha}\right)^\frac{1}{\alpha}\nonumber\\
	&=&C_0S^\frac{\alpha+1}{\alpha}T^{-\frac{1}{\alpha}}.
\end{eqnarray}
The proof is complete.

\end{document}